\documentclass[11pt]{article}
\usepackage{amsmath,amssymb,amsfonts}

\usepackage{natbib}
 \bibpunct[, ]{(}{)}{,}{a}{}{,}%

\usepackage{rotating}
\usepackage{fancyvrb}
\usepackage{url}
\usepackage{hyperref}

\usepackage{amsthm}
\usepackage{mathtools}
\usepackage{algorithm}
\usepackage{algpseudocode}
\usepackage{tikz}
\usetikzlibrary{positioning, arrows.meta, patterns}
\usetikzlibrary{matrix, positioning, arrows.meta, calc, shapes.geometric, backgrounds, fit}
\usepackage{adjustbox}

\usepackage{caption}
\usepackage{array}
\usepackage{pgfplots}
\usepackage{float}
\pgfplotsset{compat=newest}
\usepackage{graphicx}
\usepackage{subcaption}
\usepackage{stmaryrd}

\newtheorem{lemma}{Lemma}
\newtheorem{theorem}{Theorem}
\newtheorem{corollary}{Corollary}

\newtheorem{proposition}{Proposition}

\newtheorem{remark}{Remark}

\DeclareMathOperator*{\argmax}{arg\,max}

\newcommand{\E}{\mathbb{E}}

\newcommand{\Prob}{\mathbb{P}}

\newcommand{\Var}{\operatorname{Var}}

\newcommand{\thetahat}[1]{\widehat{\theta}_{\text{#1}}}

\begin{document}
\begin{center}
    \Large \bf Design Experiments to Compare Multi-armed Bandit Algorithms
\end{center}
\begin{center}
    {Huiling Meng}\,\footnote{Department of Systems Engineering and Engineering Management, The Chinese University of Hong Kong, Hong Kong, China. Email: \url{hmeng@se.cuhk.edu.hk.} },
    {Ningyuan Chen}\,\footnote{Rotman School of Management, University of Toronto, Toronto, Canada. Email: \url{ningyuan.chen@utoronto.ca.}},
    {Xuefeng Gao}\,\footnote{Department of Systems Engineering and Engineering Management, The Chinese University of Hong Kong, Hong Kong, China. Email: \url{xfgao@se.cuhk.edu.hk.} }
\end{center}

	\begin{center}
		\today
	\end{center}

\begin{abstract}
Online platforms routinely compare multi-armed bandit algorithms, such as UCB and Thompson Sampling, to select the best-performing policy.  Unlike standard A/B tests for static treatments, each run of a bandit algorithm over $T$ users produces only one dependent trajectory, because the algorithm's decisions depend on all past interactions. Reliable inference therefore demands many independent restarts of the algorithm, making experimentation costly and delaying deployment decisions. We propose \emph{Artificial Replay} (AR) as a new experimental design for this problem. AR first runs one policy and records its trajectory. When the second policy is executed, it reuses a recorded reward whenever it selects an action the first policy already took, and queries the real environment only otherwise. We develop a new analytical framework for this design and prove three key properties of the resulting estimator: it is unbiased; it requires only $T + o(T)$ user interactions instead of $2T$ for a run of the treatment and control policies, nearly halving the experimental cost when both policies have sub-linear regret; and its variance grows sub-linearly in $T$, whereas the estimator from a na\"ive design has a linearly-growing variance. Numerical experiments with UCB, Thompson Sampling, and $\epsilon$-greedy policies confirm these theoretical gains.
\end{abstract}

\section{Introduction}
Online learning algorithms, in particular multi-armed bandit algorithms, are used widely in online platforms.
For example, new products are continuously launched by sellers on Walmart Marketplace, and these products need to be shown to users before sufficient interaction data are available, to collect such data at the first place.
Likewise, new users arrive without reliable historical behavior.
In both cases, the platform has to recommend products to users, learn from early feedback, and control the cost of exploration.
This is the cold-start regime where online learning algorithms are most needed.
Figure~\ref{fig:walmart-new-products} illustrates such a case on the Walmart website. When a user searches for a pair of socks, one that categorized as ``New Arrivals'' also appear in the regular search result, thanks to the exploratory phase of the online learning algorithm.

To ground the discussion, we consider a simplified but representative setting.
There are $K$ new items (or arms), and users arrive sequentially over a horizon of length $T$.
In each period, the learning algorithm recommends an item, observes a random engagement outcome, and updates its internal memory such as the learned ``reward'' of each item.
In real systems, the action can be richer than a single item (e.g., a ranking), contextual information (of both the item and the user) can be incorporated, and new items may arrive over time.
Nevertheless, this setting is realistic and useful to discuss the key challenge in this paper: comparing two online learning policies in an environment and identifying the better one with statistical confidence.

\begin{figure}[htbp]
    \centering
    \caption{Walmart search results for ``socks''. The left panel displays the default ``All results'' interface, while the right panel displays the filtered view triggered by the ``New Arrivals'' button.
    The red boxes highlight the new arrival item appearing in the regular search result.}
    \label{fig:walmart-new-products}
    \vspace{0.5em}
    \begin{subfigure}[b]{0.43\textwidth}
        \centering
        \caption*{All results}
        \vspace{-0.5em}
        \fbox{\includegraphics[width=\linewidth]{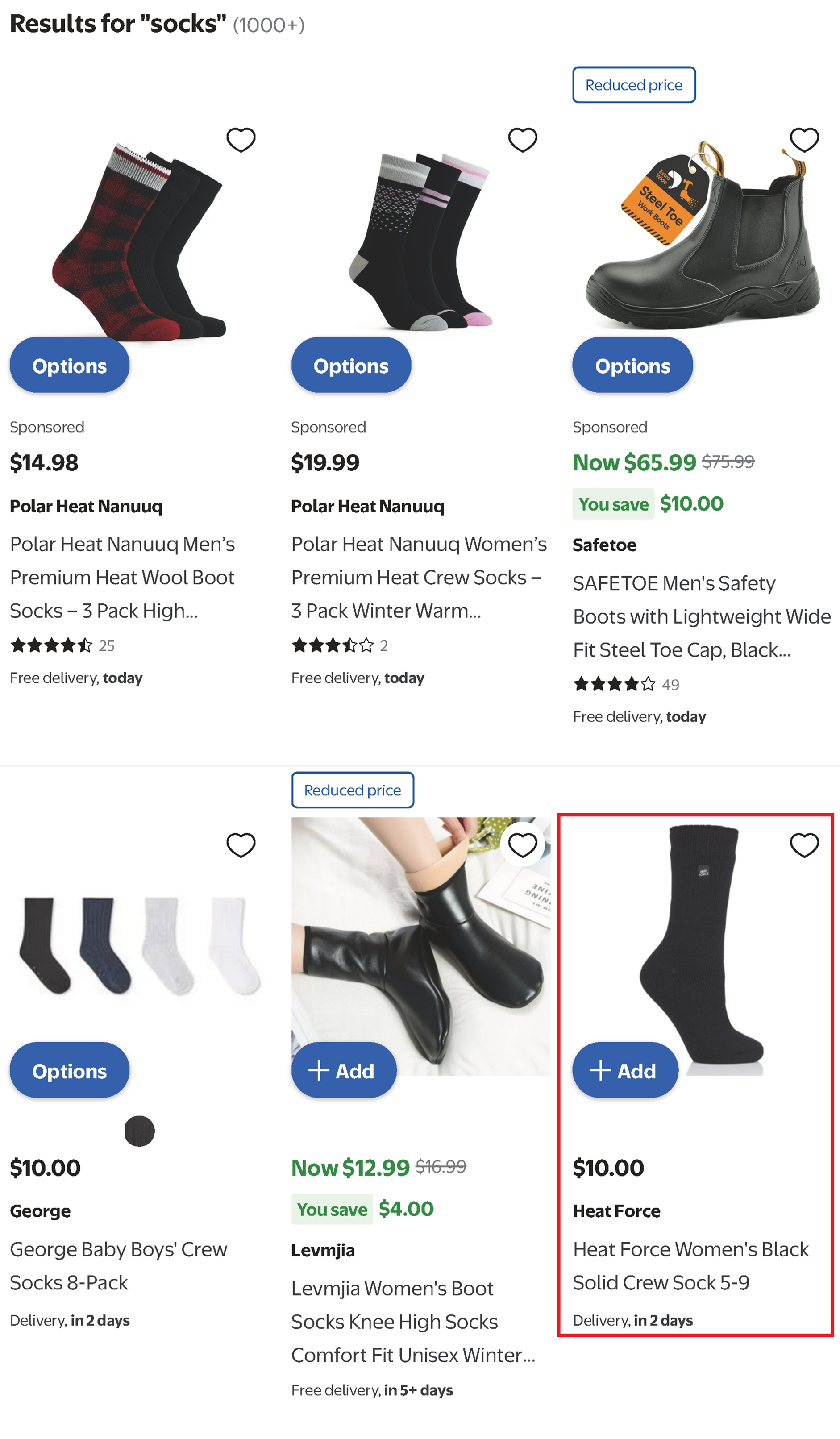}}
    \end{subfigure}
    \hfill
    \begin{subfigure}[b]{0.43\textwidth}
        \centering
        \caption*{New Arrivals}
        \vspace{-0.5em}
        \fbox{\includegraphics[width=\linewidth]{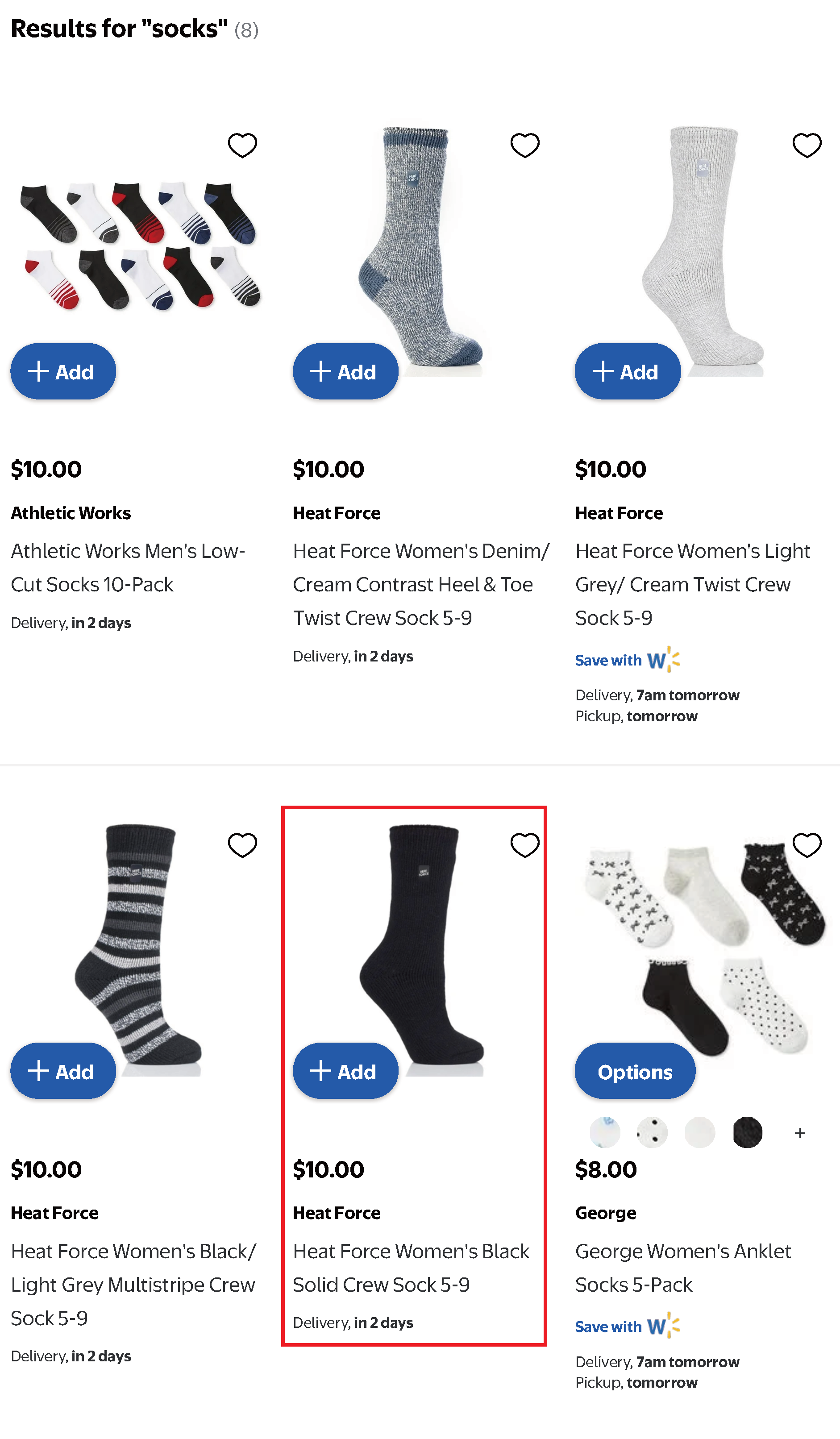}}
    \end{subfigure}
\end{figure}

\begin{figure}[tbp]
\centering
\caption{Na\"{\i}ve design for comparing two multi-armed bandit policies.}
\label{fig:intro-naive-design}
\vspace{0.5em}
\begin{tikzpicture}[
    >=Stealth,
    node distance=1.8cm,
    every node/.style={font=\small},
    box/.style={draw, rounded corners, align=center, text width=3.3cm, minimum height=0.95cm},
    arr/.style={->, thick}
]
\node[box] (users) {Arriving users ($2T$)};
\node[box, below left=0.8cm and 2cm of users] (control) {Control policy $\pi_0$\\separate memory};
\node[box, below right=0.8cm and 2cm of users] (treat) {Treatment policy $\pi_1$\\separte memory};
\node[box, below=2.5cm of users, text width=4.3cm] (est) {Na\"{\i}ve estimator\\$\sum_{t=1}^T \text{reward}_t^{\pi_1}-\sum_{t=1}^T \text{reward}_t^{\pi_0}$};

\draw[arr] (users) -- node[midway, above, sloped] {$T$ users} (control);
\draw[arr] (users) -- node[midway, above, sloped] {$T$ users} (treat);

\draw[arr] (control) -- (est);
\draw[arr] (treat) -- (est);
\end{tikzpicture}
\end{figure}
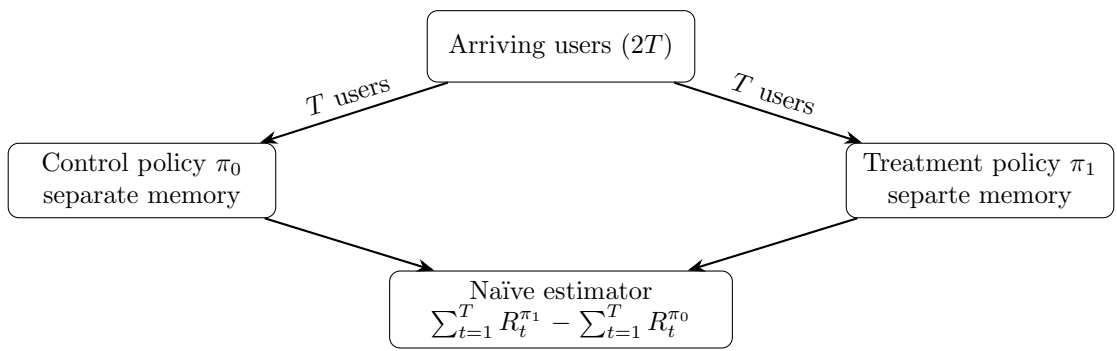

As one of the most consequential choices in platform operations, selecting a good online learning algorithm is usually carried out through online experiments.
In practice, a standard approach is to run a control policy $\pi_0$ and a treatment policy $\pi_1$ on two independent streams of users.
Different from other static treatments suitable for A/B testing, online learning policies are dynamic in nature and have system memory (e.g., the learned reward of each arm).
Therefore, the two streams of users maintain separate memory.
More precisely, the treatment (control) policies only use the interactions from the users in the treatment (control) stream to update the learned rewards of each item.
The resulting estimator for the average treatment effect is based on two independent bandit trajectories, each with horizon $T$.
Figure~\ref{fig:intro-naive-design} summarizes this na\"{\i}ve design at a high level.

This na\"{\i}ve design in Figure~\ref{fig:intro-naive-design} maintains strict independence between the treatment and control policies.
If we treat the trajectory of the $T$ users in the treatment (control) group as a single run of the treatment (control) online learning algorithm and record the aggregate reward as a ``sample'', then it fits the perspective of A/B testing. 
However, it is sample-inefficient, because the outcomes of $T$ users assigned to an algorithm only generate a single sample instead of $T$ independent observations.
Because of temporal dependence in learning dynamics, this aggregate reward of the trajectory may have a high variance.
Hence, to obtain stable inference, platforms typically need many repeated batches and policy restarts, which translate into large experimentation cost and delayed deployment decisions.
This is a central bottleneck when the objective is to compare online learning algorithms rather than static treatments.

In this paper, we propose \emph{Artificial Replay} (AR), a new experimental design that addresses this bottleneck.
At a high level, AR first runs one policy and records its action-reward history.
When the second policy is deployed, AR reuses (replays) historical rewards from the same arm by the first policy in the same order whenever possible, and queries the real environment only when relevant historical rewards have been used up.
Therefore, AR deliberately introduces coupling between the two policy trajectories.
This coupling is the key mechanism behind both improved sample efficiency and variance reduction.
Figure~\ref{fig:intro-ar-design} provides a conceptual view of this design.

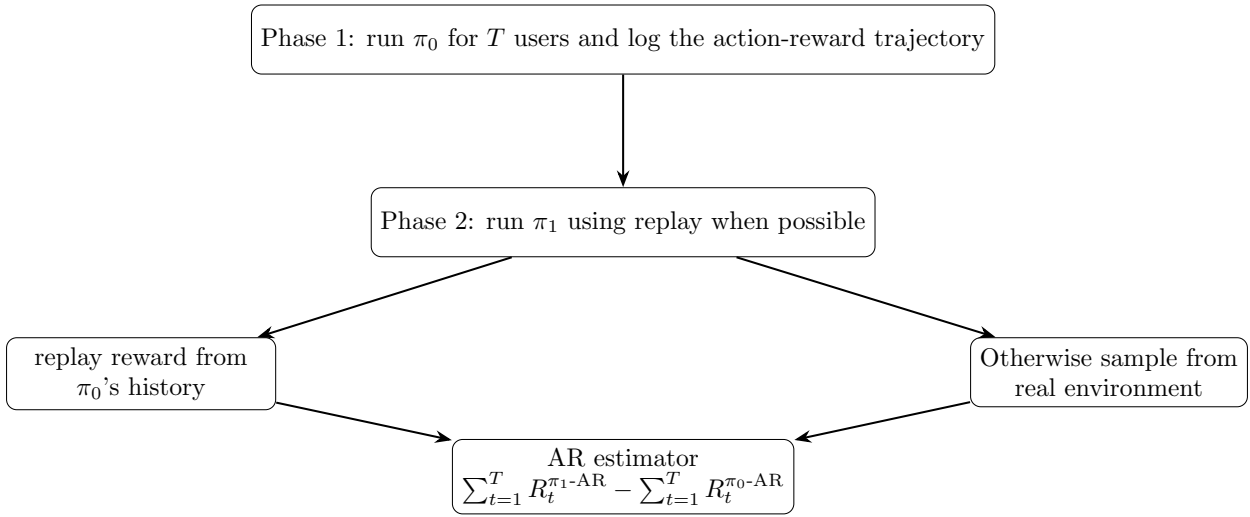
\begin{figure}[htbp]
\centering
\caption{Artificial Replay: run one policy, then replay compatible rewards for the other policy.}
\label{fig:intro-ar-design}
\vspace{0.5em}
\begin{adjustbox}{max width=\linewidth}
\begin{tikzpicture}[
    >=Stealth,
    node distance=1.55cm,
    every node/.style={font=\small},
    box/.style={draw, rounded corners, align=center, minimum width=4.5cm, minimum height=0.95cm},
    arr/.style={->, thick}
]
\node[box] (p1) {Phase 1: run $\pi_0$ for $T$ users and log the action-reward trajectory};
\node[box, below=of p1] (p2) {Phase 2: run $\pi_1$ using replay when possible};
\node[box, below left=1.1cm and 1.3cm of p2, minimum width=3.7cm] (replay) {replay reward from \\ $\pi_0$'s history};
\node[box, below right=1.1cm and 1.3cm of p2, minimum width=3.7cm] (real) {Otherwise sample from \\ real environment};
\node[box, below=2.5cm of p2, minimum width=4.6cm] (out) {AR estimator\\
$\sum_{t=1}^T \text{reward}_t^{\pi_1} \text{(AR)}-\text{reward}_t^{\pi_0} \text{(AR)}$};

\draw[arr] (p1) -- (p2);
\draw[arr] (p2) -- (replay);
\draw[arr] (p2) -- (real);
\draw[arr] (replay) -- (out);
\draw[arr] (real) -- (out);
\end{tikzpicture}
\end{adjustbox}
\end{figure}

Our contributions are threefold:
\begin{itemize}
    \item A new experimental design for comparing multi-armed bandit policies:
    We formulate the AR design and its corresponding estimator for the average treatment effect between two bandit policies.
    Compared with the na\"{\i}ve approach based on two independent runs, AR uses controlled replay to leverage previously observed rewards.

    \item A novel analytical framework:
    We introduce a shared-reward-stack probability model and establish its distributional equivalence to the canonical model underlying the AR experiment (Theorem~\ref{thm:AR-stack-same-dist}).
    This step is crucial because the canonical representation of AR exhibits strong path-dependent coupling between the two adaptive policy trajectories, making direct analysis particularly challenging.
    The shared-reward-stack model provides a tractable representation while preserving the exact joint distribution of action-reward sequences, and it reveals the stopping-time and martingale structure needed for the analysis.
    As a result, this framework is the backbone of all subsequent theoretical results, including sample-efficiency, unbiasedness, and variance reduction.

    \item Theoretical guarantees:
    We prove that the AR design is symmetric and much more sample-efficient than the na\"{\i}ve design, and the AR estimator is unbiased and variance-reduced relative to the na\"{\i}ve estimator.
    These properties have direct implications for online experimentation: symmetry ensures that the comparison is flexible with respect to the policy deployment order; sample efficiency reduces the amount of real-environment interaction needed to run the experiment; and unbiasedness together with asymptotic variance reduction ensures that the estimated performance gap is both centered at the correct target and statistically more precise.
    In particular, Theorem~\ref{thm:AR-pull-times} bounds the expected number of real-environment interactions, Theorem~\ref{thm:unbias} establishes unbiasedness, and Theorem~\ref{thm:asy_var} shows asymptotic variance reduction of the AR estimator.
    Together, these results support more reliable deployment decisions, especially in long-horizon experiments where both cost and statistical precision are critical.
    Numerical experiments further validate these effects across representative policy pairs.
\end{itemize}

\subsection{Literature Review}
Our study is related to the \emph{policy evaluation} of multi-armed bandit algorithms.
The multi-armed bandit framework has been extensively studied for its efficiency in online decision-making (see \citealt{lattimore2020bandit} for a comprehensive introduction).
The na\"ive design in our problem can be seen as (on-policy) evaluation and comparison of multi-armed bandit algorithms.
A natural question is whether the comparison of two multi-armed bandit algorithms can be conducted using off-policy evaluation, e.g, the treatment policy is evaluated using the data of the control policy.
There is extensive literature on the off-policy evaluation (OPE) of multi-armed bandit policies or contextual bandits (see, e.g., \citealt{wang2017optimal,farajtabar2018more,zhan2021off}).
For commonly used multi-armed bandit policies, it is a challenging problem even when the target policy, whose performance is to be evaluated, is as simple as always pulling a single arm \citep{nie2018adaptively,chen2022debiasing}.
One key challenge is that the logging policy, i.e., the multi-armed bandit algorithm that is used to generate the data, can be nonstationary and deterministic, such as the Upper Confidence Bound (UCB) algorithm. 
Deterministic policies violate a crucial assumption in OPE and render inverse-propensity-type estimators unusable.
Another challenge is that systems under multi-armed bandit algorithms cannot be modeled as a simple Markov chain and the policy is necessarily history-dependent rather than simply state-dependent as in Markov decision processes.
While a few studies investigate this topic \citep{bennett2024proximal,zhang2025statistical,zhan2024policy}, 
it is understood that OPE of history-dependent policies can lead to high variances.
In our problem, the two algorithms are not restricted to be evaluated in an off-policy manner. 
We can thus carefully couple the two algorithms using AR and achieve a lower variance than on-policy evaluation, i.e., the na\"ive design, not to mention off-policy evaluation.
In summary, our approach is aimed at a different problem setting from OPE, in which both policies can be implemented online.

This study is also broadly related to \emph{experimental design} and causal inference.
We refer to \cite{wager2024causal} or \cite{imbens2015causal} for an introduction to causal inference and \cite{zhao2024experimental} for a review of recent developments in experimental design and optimization.
In general, the experimental design literature has been focusing on the comparison of static treatments, i.e., one-time decisions made for independent units, and individual-level randomization, or A/B testing
\citep{kohavi2020trustworthy}.
We investigate the treatment effect of multi-bandit algorithms that are applied dynamically.
Therefore, the problem we study can be seen as a more structured form of dynamic treatment regimes \citep{chakraborty2014dynamic}.
The structure of multi-armed bandit algorithms allows us to derive a richer set of analytical results.
Recently, there is growing interest in the \emph{interference} between units under standard A/B testing, which violates the stable unit treatment value assumption (SUTVA), thanks to the rising prevalence of experimentation in online marketplaces \citep{johari2022experimental, si2023tackling, ye2023cold, holtz2025reducing}. 
If the interference is temporal, i.e., a treatment on a unit earlier may influence the treatment effect of later units, then the multi-armed bandit algorithm can be seen as a special case. 
However, the temporal interference in the literature usually has a special structure, such as one-period carryover \citep{bojinov2023design} or geometric decaying \citep{hu2022switchback}, for the problem to be tractable.
We specifically focus on multi-armed bandit algorithms, which do not satisfy these assumptions. 
More importantly, the policy itself, rather than simply the treatment effect, is history-dependent for multi-armed bandit algorithms.
Therefore, our conceptual framework significantly deviates from that of the temporal interference literature \citep{xiong2024data}.
Similarly, nonstationary treatment allocation and treatment effect studied in \cite{wu2025nonstationary} do not cover the problem in this study.
In this literature, our setting is also related to \cite{farias2022markovian}, which investigate the treatment effect when two policies are randomly chosen in each period in a Markov decision process.
Like us, the system dynamics call for a different analytical framework from the A/B testing paradigm for static treatments. 
\cite{farias2022markovian} use the $Q$-function to help with the estimation of the treatment effect.
Our major difference is that most multi-armed bandit algorithms are not Markovian with a compact state space.

There are three recent studies closely related to our work. Firstly, \cite{banerjee2025artificial} introduce the concept of artificial replay and use it as a meta-algorithm to efficiently incorporate historical data into a bandit algorithm. Their approach enables the use of only a fraction of historical data while achieving the same regret to a full warm-start algorithm under certain conditions. In contrast, our work focuses on evaluating and estimating the treatment effect between two given bandit policies through online experiments, re-purposing artificial replay as an experimental design tool in this setting. As a result, our theoretical framework differs substantially from \cite{banerjee2025artificial}.
Secondly, \cite{li2025choosing} investigate the problem of choosing the better bandit algorithm under data sharing, where each algorithm selects actions based on a shared history generated by both. This data sharing mechanism introduces interference and leads to ``symbiosis bias" \citep{brennan2025reducing}. The authors theoretically characterize the impact of such bias on the sign of expected estimates of the global treatment effect. In contrast, our work considers a distinct setting where two bandit algorithms are compared without being constrained to share the same history. We propose a novel design, artificial replay, to couple rewards from the two policies and construct an unbiased estimator for the performance gap with reduced variance. Consequently, our analysis framework and theoretical results differ significantly from \cite{li2025choosing}. For example, our design leads to an unbiased estimator of the treatment effect.
Third, \cite{simester2020efficiently} propose an experimental design leveraging data reuse to compare static policies, an approach later extended to Markov decision processes by \cite{chen2024improving}.
Because \cite{simester2020efficiently} focus on static policies,
their estimator constructed through customer-level randomization naturally achieves unbiasedness and variance reduction without requiring additional theoretical analysis.
While our work builds upon a similar intuition by allowing two candidate policies to share a fraction of reward samples, our adaptive policy setting differs fundamentally from their static one.
In fact, the sequential dependence in bandit policies prevents unbiased evaluation via simple customer-level randomization, and thus precludes data reuse at the customer level as in \cite{simester2020efficiently}. 
Therefore, we propose a different experimental design to reuse data across sequential steps within the trajectories of two adaptive policies. More importantly, we develop a novel analytical approach based on the stopping time and martingale techniques to handle the adaptive nature of bandit policies.
 
Finally, our analysis of the variance of the AR estimator has connections to the recent literature on multi-armed bandits that explores the distributional properties of regret under a single bandit policy. For instance, studies such as \cite{kalvit2021closer} and \cite{kuang2024weak} utilize diffusion approximations to examine the asymptotic distribution of regret in regimes where the mean reward gaps between arms decrease as the horizon increases. Additionally, \cite{fan2024fragility} demonstrates that optimized algorithms can inherently induce heavy-tailed regret distributions, while \cite{simchi2023regret} characterizes the optimal trade-off between expected regret and tail risk for regret distribution. Furthermore, \cite{chen2026bandit} establishes a fundamental lower bound governing the trade-off between worst-case regret and allocation variability (i.e., the largest standard deviation of pull counts across arms). In contrast to these studies, our variance analysis of the AR estimator is instance-dependent and does not rely on diffusion approximations. Instead, it focuses on analyzing the variance of cumulative reward (or random regret) under a single bandit policy, as well as the covariance of cumulative rewards under two distinct bandit policies.

The remainder of the paper is organized as follows.
Section~\ref{sec:problem_formulation} formulates the problem and the na\"{\i}ve baseline experiment.
Section~\ref{sec:AR_Design} presents the AR design and estimator.
Section~\ref{sec:analytical_framework} develops the shared-reward-stack framework and the stopping-time/martingale tools.
Section~\ref{sec:theoretical_properties} establishes the main theoretical properties of the AR estimator.
Section~\ref{sec:numerical} reports numerical experiments, and finally Section~\ref{sec:conclusion} concludes. The proofs of all technical results are provided in the appendix.

\section{Problem Formulation}\label{sec:problem_formulation}
Throughout the paper, we use the notation $[n]$ to denote the set $\{1, 2, \ldots, n\}$ for any positive integer $n$, and $\mathbb{N}$ to denote the set of natural numbers $\{0, 1, 2, \dots\}$.
For any measurable set $E \subseteq \mathbb{R}^d$, we denote by $\mathcal{B}(E)$ the Borel $\sigma$‑algebra on $E$,  i.e., $\mathcal{B}(E) = \{E \cap B : B \in \mathcal{B}(\mathbb{R}^d)\}$. 
The indicator of an event $B$ is denoted by $\mathbb{I}\{B\}$, which equals 1 if $B$ occurs and 0 otherwise. 
We write $U(0, 1)$ for the uniform distribution on $[0, 1]$.
The notation $X \stackrel{\text{d}}{=} Y$ means that the random variables $X$ and $Y$ have the same distribution. 
The notation $f(T) = O(g(T))$ means that there exists a constant $C > 0$ such that $\vert f(T) \vert \leq C g(T)$ for all $T \geq 1$, and $f(T) = o(g(T))$ means that $\lim_{T \rightarrow \infty} \vert f(T) / g(T) \vert = 0$.

\subsection{Preliminary: Multi-Armed Bandit}
Consider a stochastic multi-armed bandit problem with $K$ arms, indexed by the set $[K] = \{1, 2, \ldots, K\}$.
Each arm $a \in [K]$ is associated with an unknown reward distribution $P_a$ with mean $\mu_a$ and variance $\sigma_a^2$.
In the context of recommender systems, $K$ is the total number of candidate items to be recommended to arriving users, and the mean reward represents the ``value'' of the item, such as the purchase probability or the click-through rate.
We assume there exists a unique optimal arm $a^* \triangleq \argmax_{a \in [K]} \mu_a$, with its corresponding mean reward denoted by $\mu^* \triangleq \mu_{a^*}$.
The sub-optimality gap for each arm $a \in [K]$ is then defined as $\Delta_a \triangleq \mu^* - \mu_a$.

In each period $t \in [T]$, where $T$ is the horizon, a multi-armed bandit policy (or algorithm) $\pi$ selects an arm $A_t^{\pi} \in [K]$ based on the history of past interactions $\mathcal{H}_{t-1}^{\pi} \triangleq \{(A_1^{\pi}, R_1^{\pi}), \dots, (A_{t-1}^{\pi}, R_{t-1}^{\pi})\}$, where $R_s^{\pi}$ is the realized reward in period $s$.
Upon playing arm $A_t^{\pi}$, the policy observes the reward $R_t^{\pi}$, drawn from the corresponding distribution $P_{A_t^{\pi}}$, independent of everything else.

Formally, a policy $\pi$ is defined as a sequence of stochastic kernels $\{\pi_t\}_{t=1}^T$, where for each $t \in [T]$ and history $h_{t-1} \in \mathcal{H}_{t-1}^{\pi}$, $\pi_t(\cdot\mid h_{t-1})$ specifies a probability distribution over the arm set $[K]$. 
The arm chosen at period $t$ is sampled according to
$A_t^\pi \sim \pi_t(\cdot \mid \mathcal H_{t-1}^\pi)$.

The performance of a policy $\pi$ is evaluated by its expected regret, which measures the expected total reward lost by not always playing the optimal arm $a^*$. 
Formally, the random regret of a policy $\pi$ over a horizon $T$ is defined as
\begin{align*}
    Reg(\pi; T) \triangleq T \mu^* - \sum_{t=1}^T R_t^{\pi}.
\end{align*}
It should be noted that the cumulative reward $\sum_{t=1}^T R_t^{\pi}$ is an empirically observable quantity. 
The expected regret $\E[Reg(\pi; T)]$ is defined by taking the expectation of the random regret over both the policy's internal randomness and the stochastic rewards from the environment.
Let 
\begin{equation*}
    N_a^{\pi}(T) \triangleq \sum_{t=1}^T \mathbb{I}\{A_t^{\pi} = a\}
\end{equation*}
denote the random variable representing the total number of times arm $a$ is pulled by the policy up to period $T$. The expected regret can further be expressed as a weighted sum of the expected number of suboptimal-arm pulls, that is, $\E[Reg(\pi; T)] =\sum_{a = 1}^K \Delta_a  \E[N_a^{\pi}(T)]$.

\subsection{Empirical Comparison of Two Bandit Policies}
When applying multi-armed bandit algorithms to real-world problems, their performance depends on the choice of hyper-parameters and the characteristics of the reward distributions.
The worst-case or instance-dependent regret bounds provided in the theoretical literature may not provide a sharp characterization of their empirical performance.
This motivates the need for a rigorous framework for empirical comparison of two bandit policies in online experiments.

Consider a fixed MAB problem instance $\{P_a,\, a \in [K]\}$, with unknown parameters (i.e., the true mean rewards $\mu_a$ of each arm $a \in [K]$).
Let $\pi_0$ and $\pi_1$ denote two bandit policies under consideration. 
We designate $\pi_0$ as the \textit{control} policy and $\pi_1$ as the \textit{treatment} policy.
Our central objective is to quantify the performance gap within this environment, thereby identifying the superior policy.
To this end, consider the multi-armed bandit policy $\pi_i$ applied to the instance for $T$ periods.
Let $\{(A_{t}^{\pi_i},R_{t}^{\pi_i})\}_{t=1}^T$ denote the random action-reward trajectory.
Define the realized cumulative reward over $T$ periods by $Y_{n}^{\pi_i} \triangleq \sum_{t=1}^T R_{n,t}^{\pi_i}$.
Our objective is to estimate the average treatment effect (ATE), defined as the difference in the expected total rewards over a horizon of $T$ periods between the treatment policy $\pi_1$ and the control policy $\pi_0$:
\begin{align*}
    \theta(T) \triangleq \E\left[ Y^{\pi_1} - Y^{\pi_0}\right] = \E\left[\sum_{t=1}^T R_t^{\pi_1}\right] - \E\left[\sum_{t=1}^T R_t^{\pi_0}\right].
\end{align*}
The expectation is taken with respect to the random rewards and the internal randomization of the algorithm, while the MAB instance is fixed.
Equivalently, $\theta(T)$
can be interpreted as the difference in expected regret between the control and treatment policies over horizon $T$:
\begin{align*}
    \theta(T) = \E[Reg(\pi_0; T)] - \E[Reg(\pi_1; T)].
\end{align*}
The sign of $\theta(T)$ indicates the superior policy in expectation; specifically, $\theta(T) > 0$ implies that the treatment policy $\pi_1$ is preferable to the control policy $\pi_0$, and vice versa.

\paragraph{The Na\"{\i}ve Design}
Motivated by the application in the introduction, we rigorously formulate the na\"ve design that has been widely used in practice,
for the estimation of the treatment effect $\theta(T)$.
Given a batch of $2T$ arriving users, the na\"{\i}ve design allocates $T$ of them to the treatment and control groups, respectively.
Because the two groups are exposed to two multi-armed policies and keep track of their own system states (such as the learned mean reward), we abstract away the (random) allocation scheme and focus on two independent processes.
This is illustrated in Algorithm~\ref{alg:benchmark}.
Given the sample paths generated in Algorithm~\ref{alg:benchmark}, a na\"{\i}ve estimator for $\theta(T)$ is defined as
\begin{align*}
    \thetahat{b}(T) \triangleq \sum_{t=1}^T R_t^{\pi_1} - \sum_{t=1}^T R_t^{\pi_0},
\end{align*}
where $\text{b}$ stands for the baseline.
In practice, the estimator $\thetahat{b}(T)$ is recorded from a number of batches, each with $2T$ users, and averaged to lower the standard error.
For the analysis, because the batches are considered as i.i.d. copies and we focus on $\thetahat{b}(T)$.

\begin{algorithm}
\caption{The Na\"{\i}ve Design: Deploy Two Bandit Policies $\pi_0$ and $\pi_1$ Independently}
\label{alg:benchmark} 
\begin{algorithmic}[1]
    \Require horizon $T$, two bandit policies $\pi_0$ and $\pi_1$
    
    \vspace{0.5em}
    \Statex \textbf{Phase 1: Policy $\pi_0$ interacts with the environment}
    \State \textbf{Initialize:} $\mathcal{H}_0^{\pi_0} \gets \emptyset$
    \For{$t = 1,2,\dots,T$}
        \State Sample arm $A_t^{\pi_0} \sim (\pi_0)_t(\cdot \mid \mathcal{H}_{t-1}^{\pi_0})$
        \State Pull arm $A_t^{\pi_0}$ 
               and observe reward $R_t^{\pi_0} \sim P_{A_t^{\pi_0}}$
        \State Update histories $\mathcal{H}_t^{\pi_0} \gets \mathcal{H}_{t-1}^{\pi_0} \,\cup \, \{(A_t^{\pi_0}, R_t^{\pi_0})\}$
    \EndFor
    
    \vspace{0.5em}
    \Statex \textbf{Phase 2: Policy $\pi_1$ interacts with the environment}
    \State \textbf{Initialize:}  $\mathcal{H}_0^{\pi_1} \gets \emptyset$
    \For{$t = 1,2,\dots,T$}
    \State Sample arm $A_t^{\pi_1} \sim (\pi_1)_t(\cdot \mid \mathcal{H}_{t-1}^{\pi_1})$
    \State Pull arm $A_t^{\pi_1}$ 
           and observe reward $R_t^{\pi_1} \sim P_{A_t^{\pi_1}}$
    \State Update histories $\mathcal{H}_t^{\pi_1} \gets \mathcal{H}_{t-1}^{\pi_1} \, \cup \, \{(A_t^{\pi_1}, R_t^{\pi_1})\}$
    \EndFor
\end{algorithmic}
\end{algorithm}

By construction, it is immediate that the estimator $\thetahat{b}(T)$ is unbiased for $\theta(T)$.
However, its variance is given by
\begin{align*}
    \Var(\thetahat{b}(T)) = \Var(Reg(\pi_0; T)) + \Var(Reg(\pi_1; T)) 
\end{align*}
where for $i \in \{0, 1\}$, $Reg(\pi_i; T) = T \mu^* - \sum_{t=1}^T R_t^{\pi_i}$ is the random regret of policy $\pi_i$.
Due to the stochastic and adaptive nature of bandit algorithms, the random regret of a single run is inherently high-variance. In particular, the variance can exhibit asymptotic linear growth with respect to the horizon $T$.
For instance, as shown in Section~\ref{sec:theoretical_properties}, for certain UCB policies, the variance of the cumulative rewards---equivalently, the variance of the random regret---has a leading linear term $\sigma_{a^*}^2 T$, where $\sigma_{a^*}^2$ denotes the reward variance associated with the optimal arm $a^*$.

The high‑variance property renders the na\"{\i}ve estimator $\thetahat{b}(T)$ sample-inefficient, particularly for long horizons. 
Obtaining a reliable estimate would necessitate an an excessive number of experimental replications, making the na\"{\i}ve design prohibitively expensive for meaningful comparison. This critical drawback motivates the development of a variance reduction technique.

\section{A New Design Using Artificial Replay}\label{sec:AR_Design}
To mitigate the aforementioned issue of the na\"{\i}ve estimator, we propose the artificial replay (AR) design. 
The concept of ``artificial replay'' is first proposed by \cite{banerjee2025artificial}.
The core idea of using the AR design in our context is to break the strict independence between the two experimental runs, i.e., Phase 1 and Phase 2 in Algorithm~\ref{alg:benchmark}.
Instead of having the two policies interact with separate, independent environments, 
our method couples them by sharing a portion of reward samples.
By inducing strong positive correlation between the two sample paths, the AR method achieves a significant reduction of variance, leading to a far more stable and efficient estimator. 

The AR design consists of two phases.
In the first phase, we deploy the control policy $\pi_0$, allowing it to interact with the environment for $T$ periods. 
We record the full sequence of actions taken and rewards received by $\pi_0$ during this process, which constitutes a historical trajectory, denoted as $\mathcal{H}_T^{\pi_0\text{-AR}}$.
In the second phase, the treatment policy $\pi_1$ is deployed.
In each period $t \in [T]$, $\pi_1$ selects an action $A_t^{\pi_1\text{-AR}}$ based on its own history. We then check if this same action exists within $\pi_0$'s trajectory and has not yet been ``replayed":
\begin{itemize}
    \item If such an unused historical action is found, we forgo a new interaction with the environment. Instead, we directly replay the reward associated with the earliest matching action in $\pi_0$'s history and mark that entry as used/replayed. In this case, the replayed reward is recorded as $R_t^{\pi_1\text{-AR}}$ in $\pi_1$'s trajectory; 
    \item If no such action is found (e.g., because the action has not been played by $\pi_0$ at all or all matching historical interactions have already been replayed), $\pi_1$ performs a real interaction with the environment to observe a new reward. In this case, the new reward is recorded as $R_t^{\pi_1\text{-AR}}$ in $\pi_1$'s trajectory.
\end{itemize}
The formal steps can be found in Algorithm~\ref{alg:AR}. For each $i \in \{0, 1\}$, $a \in [K]$, and $t \in [T]$, let
\begin{align*}
   \quad N_a^{\pi_i\text{-AR}}(t) \triangleq \sum_{r=1}^t \mathbb{I}\{A_r^{\pi_i\text{-AR}} = a\}
\end{align*}
denote the cumulative number of times arm $a$ is selected by policy $\pi_i$ up to period $t$ in the AR experiment.
We introduce the random variables $N^{\text{e-AR}}(T)$ and $N^{\text{r-AR}}(T)$ to denote the \emph{number of real-environment interactions} and the \emph{number of replayed rewards} in the AR experiment, respectively.
Mathematically, $N^{\text{e-AR}}(T)$ and $N^{\text{r-AR}}(T)$ can be expressed as
\begin{align}
    N^{\text{e-AR}}(T) &= \sum_{a \in [K]} \max\{ N_a^{\pi_0\text{-AR}}(T), N_a^{\pi_1\text{-AR}}(T)\}, \label{eq:def-num-env-interactions} \\
    N^{\text{r-AR}}(T) &= \sum_{a \in [K]} \min\{ N_a^{\pi_0\text{-AR}}(T), N_a^{\pi_1\text{-AR}}(T)\}.
    \label{eq:def-num-replay}
\end{align}
We further denote their corresponding expected values by 
\begin{align}\label{eq:def-exp-num-env-interactions-replay}
    n^{\mathrm{e\text{-}AR}}(T) \triangleq \mathbb{E}[ N^{\mathrm{e\text{-}AR}}(T)] \quad \text{ and } \quad n^{\mathrm{r\text{-}AR}}(T) \triangleq \mathbb{E}[ N^{\mathrm{r\text{-}AR}}(T) ].
\end{align}
From the design of the AR experiment, one can anticipate that the number of real-environment interactions may be substantially smaller than $2T$.
In Figure~\ref{fig:AR-arrow-diagram}, we illustrate such an instance with $T=10$ and three arms. 
With artificial replay, the experiment involves only $12$ real-environment interactions for the two policies combined.

\begin{figure}[H]
\caption{An illustration of artificial replay with $K=3$ and $T=10$. The top row of each policy is the index of the pulled arm and the bottom row shows the realized reward.}
\label{fig:AR-arrow-diagram}
\vspace{0.5em}
    \centering
    \begin{tikzpicture}[scale=0.9,   
        pullbox/.style={rectangle,fill=blue!30,minimum width=6mm,minimum height=6mm,align=center},
        replaybox/.style={rectangle,fill=orange!40,minimum width=6mm,minimum height=6mm,align=center},
        replayarrow/.style={orange!60,thick,-{Stealth[length=2mm]}},
        line/.style={thick,black},
    ]

    \node[pullbox] (a11) at (1.0,1.3) {1};
    \node[pullbox] (r11) at (1.0,0.7) {1};
    \node[above=0.5em,font=\small] at (a11.north) {$t=1$};
    \node[pullbox] (a12) at (2.1,1.3) {2};
    \node[pullbox] (r12) at (2.1,0.7) {0};
    \node[above=0.5em,font=\small] at (a12.north) {$t=2$};
    \node[pullbox] (a13) at (3.2,1.3) {3};
    \node[pullbox] (r13) at (3.2,0.7) {1};
    \node[above=0.5em,font=\small] at (a13.north) {$t=3$};
    \node[pullbox] (a14) at (4.3,1.3) {1};
    \node[pullbox] (r14) at (4.3,0.7) {0};
    \node[above=0.5em,font=\small] at (a14.north) {$t=4$};
    \node[pullbox] (a15) at (5.4,1.3) {2};
    \node[pullbox] (r15) at (5.4,0.7) {0};
    \node[above=0.5em,font=\small] at (a15.north) {$t=5$};
    \node[pullbox] (a16) at (6.5,1.3) {3};
    \node[pullbox] (r16) at (6.5,0.7) {1};
    \node[above=0.5em,font=\small] at (a16.north) {$t=6$};
    \node[pullbox] (a17) at (7.6,1.3) {3};
    \node[pullbox] (r17) at (7.6,0.7) {1};
    \node[above=0.5em,font=\small] at (a17.north) {$t=7$};
    \node[pullbox] (a18) at (8.7,1.3) {3};
    \node[pullbox] (r18) at (8.7,0.7) {0};
    \node[above=0.5em,font=\small] at (a18.north) {$t=8$};
    \node[pullbox] (a19) at (9.8,1.3) {2};
    \node[pullbox] (r19) at (9.8,0.7) {0};
    \node[above=0.5em,font=\small] at (a19.north) {$t=9$};
    \node[pullbox] (a110) at (10.9,1.3) {3};
    \node[pullbox] (r110) at (10.9,0.7) {0};
    \node[above=0.5em,font=\small] at (a110.north) {$t=10$};

    \node[replaybox] (a21) at (1.0,-0.7) {1};
    \node[replaybox] (r21) at (1.0,-1.3) {1};
    \node[replaybox] (a22) at (2.1,-0.7) {2};
    \node[replaybox] (r22) at (2.1,-1.3) {0};
    \node[replaybox] (a23) at (3.2,-0.7) {3};
    \node[replaybox] (r23) at (3.2,-1.3) {1};
    \node[replaybox] (a24) at (4.3,-0.7) {3};
    \node[replaybox] (r24) at (4.3,-1.3) {1};
    \node[replaybox] (a25) at (5.4,-0.7) {1};
    \node[replaybox] (r25) at (5.4,-1.3) {0};
    \node[replaybox] (a26) at (6.5,-0.7) {2};
    \node[replaybox] (r26) at (6.5,-1.3) {0};
    \node[pullbox] (a27) at (7.6,-0.7) {1};
    \node[pullbox] (r27) at (7.6,-1.3) {0};
    \node[replaybox] (a28) at (8.7,-0.7) {2};
    \node[replaybox] (r28) at (8.7,-1.3) {0};
    \node[replaybox] (a29) at (9.8,-0.7) {3};
    \node[replaybox] (r29) at (9.8,-1.3) {1};
    \node[pullbox] (a210) at (10.9,-0.7) {2};
    \node[pullbox] (r210) at (10.9,-1.3) {1};

    \draw[line] (0.67,1) -- (11.23,1);    
    \draw[line] (0.67,-1.0) -- (11.23,-1.0);
    \node[left=1.2em,font=\small] at (a11.west) {$A_t^{\pi_0\text{-AR}}$};
    \node[left=1.2em,font=\small] at (r11.west) {$R_t^{\pi_0\text{-AR}}$};
    \node[left=1.2em,font=\small] at (a21.west) {$A_t^{\pi_1\text{-AR}}$};
    \node[left=1.2em,font=\small] at (r21.west) {$R_t^{\pi_1\text{-AR}}$};
    
    \draw[replayarrow] (r11.south) to (a21.north);
    \draw[replayarrow] (r12.south) to (a22.north);
    \draw[replayarrow] (r13.south) to (a23.north);
    \draw[replayarrow] (r16.south) to (a24.north);
    \draw[replayarrow] (r14.south) to (a25.north);
    \draw[replayarrow] (r15.south) to (a26.north);
    \draw[replayarrow] (r19.south) to (a28.north);
    \draw[replayarrow] (r17.south) to (a29.north);
    \node[pullbox,label={[align=left, anchor=west]right:{Sample from real\\arm pulls}}] at (13,0.6) {};
    \node[replaybox,label=right:{Replayed sample}] at (13,-0.6) {};
    \end{tikzpicture}
\end{figure}
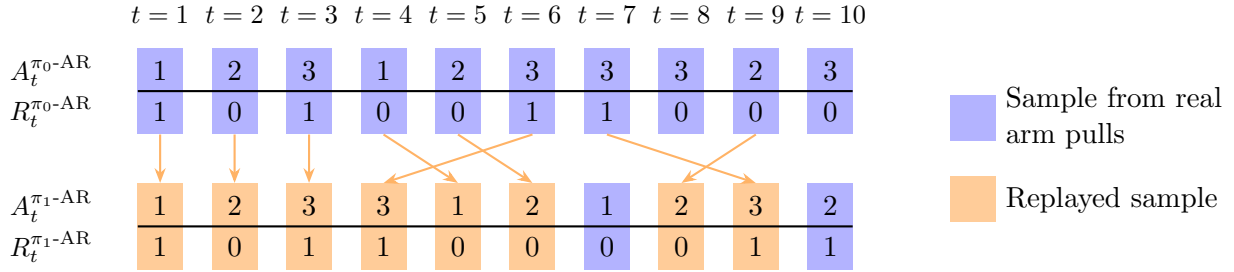

\begin{algorithm}[H]
\caption{Artificial Replay}
\label{alg:AR}
\begin{algorithmic}[1]
    \Require horizon $T$, two bandit policies $\pi_0$ and $\pi_1$
    \vspace{0.5em}
    \Statex \textbf{Phase 1: Policy $\pi_0$ interacts with the environment}
    \State \textbf{Initialize:} $\mathcal{H}_0^{\pi_0\text{-AR}} \gets \emptyset$
    \For{$t = 1, 2, \dots, T$}
        \State Sample arm $A_t^{\pi_0\text{-AR}} \sim (\pi_0)_t(\cdot \mid \mathcal{H}_{t-1}^{\pi_0\text{-AR}})$
        \State Pull arm $A_t^{\pi_0\text{-AR}}$ and observe reward $R_t^{\pi_0\text{-AR}} \sim P_{A_t^{\pi_0\text{-AR}}}$
        \State Update history $\mathcal{H}_t^{\pi_0\text{-AR}} \gets \mathcal{H}_{t-1}^{\pi_0\text{-AR}} \,\cup\, \{(A_t^{\pi_0\text{-AR}}, R_t^{\pi_0\text{-AR}})\}$
    \EndFor
    
    \vspace{0.5em}
    \Statex \textbf{Phase 2: Policy $\pi_1$ operates under artificial replay}
    \State \textbf{Initialize:}  $\mathcal{H}_0^{\pi_1\text{-AR}} \gets \emptyset$, $\mathcal{I}_{\text{used}} \gets \emptyset$, and $\mathcal{I} \gets \{1, 2, \cdots, T\}$
    \For{$t = 1, 2, \dots, T$}
        \State Sample arm $A_t^{\pi_1\text{-AR}} \sim (\pi_1)_t(\cdot \mid \mathcal{H}_{t-1}^{\pi_1\text{-AR}})$
        \If{there exists $s \in \mathcal{I} \setminus \mathcal{I}_{\text{used}}$ such that $A_s^{\pi_0\text{-AR}} = A_t^{\pi_1\text{-AR}}$}
            \State Set $s^* \gets \min\{s \in \mathcal{I} \setminus \mathcal{I}_{\text{used}} : A_s^{\pi_0\text{-AR}} = A_t^{\pi_1\text{-AR}}\}$
            \State Store $R_t^{\pi_1\text{-AR}} \gets R_{s^*}^{\pi_0\text{-AR}}$
            \State Update $\mathcal{I}_{\text{used}} \gets \mathcal{I}_{\text{used}} \cup \{s^*\}$
        \Else
            \State Pull arm $A_t^{\pi_1\text{-AR}}$ and observe reward $R_t^{\pi_1\text{-AR}} \sim P_{A_t^{\pi_1\text{-AR}}}$
        \EndIf
        \State Update history $\mathcal{H}_t^{\pi_1\text{-AR}} \gets \mathcal{H}_{t-1}^{\pi_1\text{-AR}} \,\cup\, \{(A_t^{\pi_1\text{-AR}}, R_t^{\pi_1\text{-AR}})\}$
    \EndFor
\end{algorithmic}
\end{algorithm}

Based on Algorithm~\ref{alg:AR}, our proposed AR estimator is defined as
\begin{equation}\label{eq:def-AR-estimator}
    \thetahat{AR}(T) \triangleq 
    \sum_{t=1}^{T} R_{t}^{\pi_1\text{-AR}} - \sum_{t=1}^{T} R_{t}^{\pi_0\text{-AR}}.
\end{equation}
The AR estimator possesses several desirable properties, for which we will provide formal proofs in the subsequent sections.
First, the AR experimental design is \emph{symmetric}.
Specifically, if we exchange the roles of policies $\pi_0$ and $\pi_1$---that is, letting $\pi_1$ first interact with the environment and then having $\pi_0$ replay on its historical data---the newly constructed estimator is identically distributed to the original estimator $\thetahat{AR}(T)$.
This ensures that the comparison is fair regardless of which policy is deployed first.
Second, the expected number of interactions with the environment during the AR experiment can be significantly less than the $2T$ interactions required by the na\"{\i}ve method. 
A more general theoretical result is provided in Theorem~\ref{thm:AR-pull-times}.
Third, similar to the na\"{\i}ve estimator, the AR estimator is also unbiased for the true parameter $\theta(T)$. This is rigorously proven in Theorem~\ref{thm:unbias}.
Most importantly, the AR estimator achieves significant asymptotic variance reduction compared to the na\"{\i}ve estimator.
Theorem~\ref{thm:asy_var} shows that, under mild conditions, the $T$-normalized variance of the AR estimator converges to zero. This contrasts sharply with the na\"{\i}ve estimator whose variance typically scales with $T$, highlighting the superior statistical property of the AR method.

However, establishing the above statistical properties of the AR estimator poses unique theoretical challenges.
Standard bandit analysis primarily focuses on the performance (e.g., regret bounds) of a single policy in isolation. 
In contrast, the AR experiment inherently couples the action-reward trajectories of the control and treatment policies via a reward replay mechanism.
This coupling creates complex, history-dependent correlations, which lies outside the scope of standard single-policy analytical techniques.
To overcome this challenge, we develop an analytical framework tailored to the AR experiment.

\section{Theoretical Foundations for the Analysis}
\label{sec:analytical_framework}
In this section, we establish the probablistic foundations for analyzing the statistical properties of the AR estimator. 
Specifically, we construct two alternative probability spaces that support the random actions and rewards generated by two candidate policies in the AR experiment: the \emph{canonical model} and the \emph{shared-reward-stack model}.

Before detailing these two constructions, we first review their counterparts in the single-policy bandit experiment.
\citet[Section~4.6]{lattimore2020bandit} introduces three distinct probability spaces that support the random action-reward sequence in the standard bandit experiment, termed the canonical model, the reward-stack model, and the random table model.
Since the first two are directly related to the probability models we propose for the AR experiment, we summarize them below.

\paragraph{Canonical model of classical multi-armed bandit.} In the canonical model, the sample space is the set of action-reward trajectories, $\Omega \triangleq ([K] \times \mathbb{R})^{T}$, equipped with the Borel $\sigma$-algebra $\mathcal{F} \triangleq \mathcal{B}(\Omega)$.
    For any sample path $\omega = (a_1, r_1, \dots, a_T, r_T) \in \Omega$, the action and reward random variables are defined by the coordinate projections $A_t^{\pi}(\omega) = a_t$ and $R_t^{\pi}(\omega) = r_t$ for all $t \in [T]$.
    Here, the superscript $\pi$ in $A_t^{\pi}$ and $R_t^{\pi}$ is used to indicate that their distributions rely on a specific probability measure on $(\Omega, \mathcal{F})$ induced by $\pi$, while ensuring
    consistency with the notation introduced in Algorithm~\ref{alg:benchmark}.    Indeed, there exists a probability measure $\mathbb{P}^{\pi}$ on $(\Omega,\mathcal F)$ such that for each $t \in [T]$,
    \begin{equation}\label{eq:A-R-single-conditional-dist}
        A_t^{\pi} \mid \mathcal H_{t-1}^{\pi} \sim \pi_t(\cdot \mid \mathcal H_{t-1}^{\pi}),
        \qquad R_t^{\pi} \mid (A_t^{\pi},\mathcal H_{t-1}^{\pi}) \sim P_{A_t^{\pi}}.
    \end{equation}
    
\paragraph{Reward-stack model of classical multi-armed bandit.} In the reward-stack model, the sample space is taken to be $\Omega_{\text{r}} \triangleq \mathbb R^{(K+1) \times T}$, equipped with the Borel $\sigma$-algebra $\mathcal F_{\text{r}} \triangleq \mathcal B(\Omega_{\text{r}})$. 
    A generic element of the sample space is denoted by $\omega_{\text{r}} = [(x_{a,t})_{a,t}, (y_t)_t]$. For each $a \in [K]$ and $t \in [T]$, let $X_{a,t}: \Omega_{\text{r}} \mapsto \mathbb{R}$ and $\eta_t: \Omega_{\text{r}} \mapsto \mathbb{R}$ be the coordinate projections defined by $X_{a, t}(\omega_{\text{r}}) = x_{a, t}$ and $\eta_t(\omega_{\text{r}}) = y_t$, respectively.
    The probability measure $\mathbb{P}_{\text{r}}$ on $(\Omega_{\text{r}},\mathcal{F}_{\text{r}})$ is defined such that the collection of random variables $\{(X_{a,t}, \eta_t): a \in [K], t \in [T]\}$ are independent, with $X_{a,t} \sim P_a$ and $\eta_t \sim U(0, 1)$ for all $a \in [K]$ and $t \in [T]$.
    Here, the sequence $(X_{a,1},\ldots,X_{a,T})$ is interpreted as the \emph{reward stack} associated with arm $a$, while $(\eta_t)_{t=1}^T$ provides the internal randomization of policy $\pi$. 
    The sequence of action and reward random variables is then defined recursively. Let $\mathcal{H}_{t-1}^{\pi\text{-r}} \triangleq (A_1^{\pi\text{-r}}, R_1^{\pi\text{-r}}, \dots, A_{t-1}^{\pi\text{-r}}, R_{t-1}^{\pi\text{-r}})$ denote the history up to period $t-1$, and let $N_a^{\pi_i\text{-r}}(t) \triangleq \sum_{r=1}^t \mathbb{I}\{A_r^{\pi_i\text{-r}} = a\}$ denote the cumulative count of pulls for arm $a$ up to period $t$. For each period $t \in [T]$, the action and reward are defined as
    \begin{align*}
        A_t^{\pi\text{-r}} = F^{-1}_{\pi_t(\cdot \mid \mathcal{H}_{t-1}^{\pi\text{-r}})} (\eta_t), \qquad R_t^{\pi\text{-r}} = X_{A_t^{\pi\text{-r}}, N_{A_t^{\pi\text{-r}}}^{\pi\text{-r}}(t)},
    \end{align*}
    where $F^{-1}_{\nu}: [0,1] \to [K]$ denotes the generalized inverse distribution function (or quantile function) associated with the probability distribution $\nu$. Intuitively, each period the learner $\pi$ chooses an arm, it receives the reward on top of the
    corresponding stack that remains unused by this learner.

To illustrate the correspondence to the canonical model, we consider the sample path of policy $\pi_0$ shown in Figure~\ref{fig:AR-arrow-diagram} and provide its reward-stack representation in Figure~\ref{fig:reward-stack}. 
In this figure, each row corresponds to the pre-generated reward stack of a specific arm, with the central number in each cell indicating the potential reward.
The light-gray cells mark the rewards actually observed by the policy, where the small blue number in the top-left corner denotes the period in which the reward is used.
The used rewards are required to coincide with the per-round rewards realized along the trajectory of policy $\pi_0$ in Figure~\ref{fig:AR-arrow-diagram},
while the remaining (unused) rewards can be chosen arbitrarily.
For instance, in Figure~\ref{fig:AR-arrow-diagram}, $\pi_0$ selects arm~1 at $t=1$ and $t=4$, yielding rewards $1$ and $0$, respectively. Accordingly, in Figure~\ref{fig:reward-stack}, the first two cells in the reward stack of arm~1 (with center values $1$ and $0$) are marked as used and labeled by the period indices $1$ and $4$.
\begin{figure}[htbp] 
\caption{An illustration of the sample path of $\pi_0$ in Figure~\ref{fig:AR-arrow-diagram} using the reward-stack model. The integer at the top left corner indicates the period the reward is used.}
\label{fig:reward-stack}
\vspace{0.5em}
\centering
\begin{tikzpicture}[
    pullbox/.style={rectangle,draw,thick,
    fill=black!20,
                minimum width=6mm,minimum height=6mm,
                align=center},
    replaybox/.style={rectangle,draw,thick,fill=orange!40,
                minimum width=6mm,minimum height=6mm,
                align=center},
    box/.style={rectangle,draw,thick,fill=white,
                minimum width=6mm,minimum height=6mm,
                align=center},
    line/.style={thick,black},
    stepnum/.style=
    {anchor=north west, font=\tiny, text=blue!70, inner sep=1pt},
]

\node at (-1,1.3) {$\pi_0$};

\foreach \i/\style/\text/\step in {
    0/pullbox/1/1,
    1/pullbox/0/4,
    2/box/0/,
    3/box/0/,
    4/box/0/,
    5/box/0/,
    6/box/1/,
    7/box/1/,
    8/box/1/,
    9/box/0/
}{
   \node[\style] (r1\i) at (\i*0.6,2.0) {\text};
   \node[stepnum] at (r1\i.north west) {\step};
}

\node[right=0.5em] at (r19.east) {arm 1};

\foreach \i/\style/\text\step in {
    0/pullbox/0/2,
    1/pullbox/0/5,
    2/pullbox/0/9,
    3/box/1/,
    4/box/0/,
    5/box/0/,
    6/box/0/,
    7/box/0/,
    8/box/0/,
    9/box/1/
}{
   \node[\style] (r2\i) at (\i*0.6,1.3) {\text};
   \node[stepnum] at (r2\i.north west) {\step};
}
\node[right=0.5em] at (r29.east) {arm 2};

\foreach \i/\style/\text/\step in {
    0/pullbox/1/3,
    1/pullbox/1/6,
    2/pullbox/1/7,
    3/pullbox/0/8,
    4/pullbox/0/10,
    5/box/1/,
    6/box/1/,
    7/box/0/,
    8/box/0/,
    9/box/0/ 
}{
   \node[\style] (r3\i) at (\i*0.6,0.6) {\text};
   \node[stepnum] at (r3\i.north west) {\step};
}
\node[right=0.5em] at (r39.east) {arm 3};

\node[pullbox,label=right:{Used reward}] at (8.5, 1.8) {};
\node[box,label=right:{Unused reward}] at (8.5, 1.0) {};
\end{tikzpicture}
\end{figure}

In Section~4.6 of \cite{lattimore2020bandit}, the authors assert the equivalence between the reward-stack model and the canonical model (with the proof left as an exercise), in the sense that the sequence $(A_1^{\pi}, R_1^{\pi}, \ldots, A_T^{\pi}, R_T^{\pi})$ on $(\Omega, \mathcal{F}, \mathbb{P}^{\pi})$ has the same distribution as $(A_1^{\pi\text{-r}}, R_1^{\pi\text{-r}}, \ldots, A_T^{\pi\text{-r}}, R_T^{\pi\text{-r}})$ on $(\Omega_{\text{r}}, \mathcal{F}_{\text{r}}, \mathbb{P}_{\text{r}})$. 
Therefore, the reward-stack model can serve as an alternative framework for analyzing bandit policies.
Moreover, a key advantage of the reward-stack model is that it decouples environmental stochasticity from action selection by treating rewards as pre-generated sequences, where the policy merely reveals the realization of these fixed outcomes. This perspective often renders the theoretical analysis more tractable.

\paragraph{Canonical model of the AR design.} Having reviewed the canonical model and reward-stack model for the standard bandit experiment, we now turn to our AR experiment.
It is a natural choice to establish a corresponding canonical model to accommodate the joint sequence 
$(\mathcal{H}_T^{\pi_0\text{-AR}}, \mathcal{H}_T^{\pi_1\text{-AR}})$.
We take the sample space to be $\Omega' \triangleq ([K]\times\mathbb R)^{2T}$, 
equipped with the Borel $\sigma$-algebra $\mathcal F' \triangleq \mathcal{B}(\Omega')$. 
For any sample path $\omega' = (a_1^0, r_1^0, \ldots, a_T^0, r_T^0, a_1^1, r_1^1, \ldots, a_T^1, r_T^1) \in \Omega'$, we define the action and reward random variables via the coordinate projections:
\begin{equation*}
    A_t^{\pi_i\text{-AR}}(\omega') =a_t^i \quad \text{ and }  \quad R_t^{\pi_i\text{-AR}}(\omega') =r_t^i,
\end{equation*}
for all $i \in \{0, 1\}$ and $t\in [T]$.
Here, the superscript $\pi_i\text{-AR}$ in $A_t^{\pi_i\text{-AR}}$ and $R_t^{\pi_i\text{-AR}}$ is used to  indicate that their distributions rely on a specific probability measure on $(\Omega', \mathcal{F}')$ induced by $\pi_0$ and $\pi_1$, while ensuring consistency with the notation
introduced in Algorithm~\ref{alg:AR}.
Indeed, there exists a probability measure $\mathbb{P}^{\pi_{01}}$ on $(\Omega',\mathcal F')$ such that for each $t \in [T]$, the following statements hold:
\begin{itemize}
    \item [(i)] The action $A_t^{\pi_0\text{-AR}}$ and reward $R_t^{\pi_0\text{-AR}}$ collected in Phase~1 of Algorithm~\ref{alg:AR} satisfy
    \begin{equation}\label{eq:A-R-1-AR-conditional-dist}
        A_t^{\pi_0\text{-AR}} \mid \mathcal H_{t-1}^{\pi_0\text{-AR}} \sim (\pi_0)_t(\cdot \mid \mathcal H_{t-1}^{\pi_0\text{-AR}}) \quad \text{ and } \quad R_t^{\pi_0\text{-AR}} \mid (\mathcal H_{t-1}^{\pi_0\text{-AR}}, A_t^{\pi_0\text{-AR}}) \sim P_{A_t^{\pi_0\text{-AR}}}.
    \end{equation}
    \item [(ii)] The action $A_t^{\pi_1\text{-AR}}$ collected in Phase~2 of Algorithm~\ref{alg:AR} satisfies
    \begin{equation}\label{eq:A-2-AR-conditional-dist}
        A_t^{\pi_1\text{-AR}} \mid \mathcal (\mathcal{H}_T^{\pi_0\text{-AR}}, \mathcal{H}_{t-1}^{\pi_1\text{-AR}}) \sim (\pi_1)_t(\cdot \mid \mathcal H_{t-1}^{\pi_1\text{-AR}}).
    \end{equation}
    \item [(iii)]  
    Given realizations $h_T^0=(a_1^0, r_1^0, \ldots, a_T^0, r_T^0)$ of $\mathcal H_T^{\pi_0\text{-AR}}$ and $h_{t-1}^1=(a_1^1, r_1^1, \ldots, a_{t-1}^1, r_{t-1}^1)$ of $\mathcal H_{t-1}^{\pi_1\text{-AR}}$, and given the action $A_t^{\pi_1\text{-AR}} = a$, let $n_a^i(s) \triangleq \sum_{r=1}^{s} \mathbb{I}\{a_r^i = a\}$ be the number of times arm $a$ has been selected by policy $\pi_i$ up to period $s$.
    The conditional distribution of the reward $R_t^{\pi_1\text{-AR}}$ collected in Phase~2 of Algorithm \ref{alg:AR} satisfies
    \begin{align}\label{eq:R-2-AR-conditional-dist}
        R_t^{\pi_1\text{-AR}} \mid (\mathcal H_T^{\pi_0\text{-AR}}, \mathcal H_{t-1}^{\pi_1\text{-AR}}, A_t^{\pi_1\text{-AR}})=(h_T^0, h_{t-1}^1, a) \ \sim \
        \begin{cases}
        \delta(r_{s^*}^0) & \text{if } n_a^1(t-1) < n_a^0(T), \\
        P_{a} & \text{otherwise},
        \end{cases}
    \end{align}
    where $s^* \triangleq \inf\{s\in [T]: n_a^0(s) = n_a^1(t-1) + 1\}$, and $\delta(r_{s^*}^0)$ denotes the Dirac distribution concentrated at $r_{s^*}^0$, i.e., $R_t^{\pi_1\text{-AR}}=r_{s^*}^0$ almost surely under this case.
\end{itemize}
We refer to the probability space $(\Omega', \mathcal F', \mathbb P^{\pi_{01}})$ as the \emph{canonical model} for the AR experiment.

\paragraph{Shared-reward-stack model of the AR design.} Note that in the AR experiment, the action and reward generated by the phase-1 policy in each period shape the replayed history, which in turn affects the future actions of the policy deployed in Phase 2.
This intricate, path-dependent coupling renders a direct analysis of the joint action-reward process based on the canonical model challenging.
To address this, we draw inspiration from the decoupling principles of the reward-stack model for the standard bandit experiment and propose a novel \emph{shared-reward-stack model} tailored for the AR experiment.
Given a fixed horizon $T \geq 1$, the shared‑reward-stack model is defined as follows.
\begin{itemize}
    \item First, we take the sample space to be $\Omega_{\text{s}} \triangleq \mathbb{R}^{(K+2) \times T}$, equipped with the Borel $\sigma$-algebra $\mathcal F_{\text{s}} \triangleq \mathcal{B}(\Omega_{\text{s}})$. 
    A generic element of the sample space is denoted by $\omega_{\text{s}} = [(x_{a,t})_{a,t}, (y_t^0)_t, (y_t^1)_t]$. For each $a \in [K]$ and $t \in [T]$, we define the coordinate projections $X_{a,t}: \Omega_{\text{s}} \mapsto \mathbb{R}$, $\eta_t^0: \Omega_{\text{s}} \mapsto \mathbb{R}$ and $\eta_t^1: \Omega_{\text{s}} \mapsto \mathbb{R}$ by $X_{a, t}(\omega_{\text{s}}) = x_{a, t}$, $\eta_t^0(\omega_{\text{s}}) = y_t^0$ and $\eta_t^1(\omega_{\text{s}}) = y_t^1$, respectively.
    The probability measure $\mathbb{P}_{\text{s}}$ on $(\Omega_{\text{s}},\mathcal{F}_{\text{s}})$ is defined such that the collection of random variables $\{(X_{a,t}, \eta_t^0, \eta_t^1): a \in [K], t \in [T]\}$ are independent, with $X_{a,t} \sim P_a$ and $\eta_t^0, \eta_t^1 \sim U(0, 1)$ for all $a \in [K]$ and $t \in [T]$.
    Here, the collection $\{X_{a, t} : a \in [K],\, t \in [T]\}$ serves as the shared reward stacks that are jointly accessible to both policies, while $(\eta_t^0)_{t=1}^T$ and $(\eta_t^1)_{t=1}^T$ govern the internal randomization for policies $\pi_0$ and $\pi_1$, respectively.

    \item 
    Next, we let the two policies interact through the shared reward stacks.
    For each policy $\pi_i$ with $i \in \{0, 1\}$ and each period $t \in [T]$, we denote $\mathcal{H}_{t-1}^{\pi_i\text{-s}} \triangleq \{(A_1^{\pi_i\text{-s}}, R_1^{\pi_i\text{-s}}), \ldots, (A_{t-1}^{\pi_i\text{-s}}, R_{t-1}^{\pi_i\text{-s}})\}$ as the interaction history of policy $\pi_i$ up to period $t-1$
    , and define
    \begin{equation}\label{eq:N_a-s}
        N_a^{\pi_i\text{-s}}(t) \triangleq \sum_{r=1}^t \mathbb{I}\{A_r^{\pi_i\text{-s}} = a\}
    \end{equation}
    as the number of times arm $a$ has been pulled by policy $\pi_i$ up to period $t$. Then,
    the action and corresponding reward for policy $\pi_i$ ($i \in \{0, 1\}$) in period $t \in [T]$ are defined as
    \begin{align}\label{eq:action-reward-stack}
        A_t^{\pi_i\text{-s}} = F^{-1}_{(\pi_i)_t(\cdot \mid \mathcal{H}_{t-1}^{\pi_i\text{-s}})}(\eta_t^{i}), \qquad R_t^{\pi_i\text{-s}} = X_{A_t^{\pi_i\text{-s}}, N_{A_t^{\pi_i\text{-s}}}^{\pi_i\text{-s}}(t)}.
    \end{align}
    That is, given the common underlying reward stacks, each policy reveals rewards from the stacks according to its own action-selection rule, without interfering with each other.

    Now, we revisit the example discussed in Figure \ref{fig:AR-arrow-diagram} from the perspective of the shared-reward-stack model. As illustrated in Figure \ref{fig:shared-reward-stack}, the two policies $\pi_0$ and $\pi_1$ interact with the same underlying reward stacks.
    Within each cell, the blue index in the top-left corner indicates the period in which $\pi_0$ uses that reward, while the orange index in the bottom-left corner indicates the corresponding period for $\pi_1$.
    Dark-gray cells mark rewards used by both policies, light-gray cells mark rewards used by exactly one policy, and unshaded cells remain unused. 
    Therefore, the number of dark-gray cells and the total number of shaded cells matches the numbers of replayed rewards and real-environment interactions in Figure~\ref{fig:AR-arrow-diagram}, respectively.
    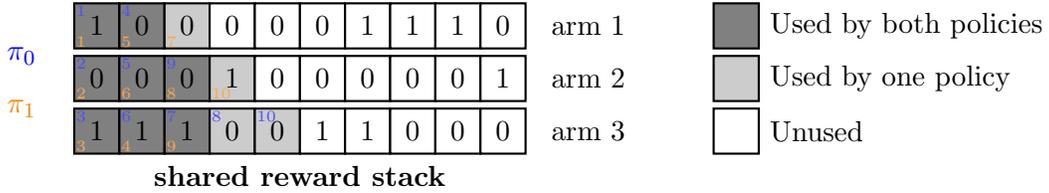
\begin{figure}[htbp] 
    \caption{An illustration of the sample path in Figure~\ref{fig:AR-arrow-diagram} using in the shared-reward-stack model. The integer at the top-left (bottom-right) corner indicates the period in which policy $\pi_0$ ($\pi_1$) uses the reward.}
    \label{fig:shared-reward-stack}
    \vspace{0.5em}
    \centering
    \begin{tikzpicture}[
        pullbox/.style={rectangle,draw,thick,
        fill=black!20,
                    minimum width=6mm,minimum height=6mm,
                    align=center},
        replaybox/.style={rectangle,draw,thick,
        fill=black!50,
                    minimum width=6mm,minimum height=6mm,
                    align=center},
        box/.style={rectangle,draw,thick,fill=white,
                    minimum width=6mm,minimum height=6mm,
                    align=center},
        line/.style={thick,black},
        step1num/.style=
        {anchor=north west, font=\tiny, text=blue!70, inner sep=1pt},
        step2num/.style=
        {anchor=south west, font=\tiny, text=orange!70, inner sep=1pt},
    ]
    
    \node at (-1,1.6) {\textcolor{blue}{$\pi_0$}};
    \node at (-1,0.9) {\textcolor{orange}{$\pi_1$}};
    \node at (2.7,0) {\textbf{shared reward stack}};
    
    \foreach \i/\style/\text/\ustep/\lstep in {
        0/replaybox/1/1/1,
        1/replaybox/0/4/5,
        2/pullbox/0/ /7,
        3/box/0/ /,
        4/box/0/ /,
        5/box/0/ /,
        6/box/1/ /,
        7/box/1/ /,
        8/box/1/ /,
        9/box/0/ /
    }{
       \node[\style] (r1\i) at (\i*0.6,2.0) {\text};
       \node[step1num] at (r1\i.north west) {\ustep};
       \node[step2num] at (r1\i.south west) {\lstep};
       }
    
    \node[right=0.5em] at (r19.east) {arm 1};
    
    \foreach \i/\style/\text/\ustep/\lstep in {
        0/replaybox/0/2/2,
        1/replaybox/0/5/6,
        2/replaybox/0/9/8,
        3/pullbox/1/ /10,
        4/box/0/ /,
        5/box/0/ /,
        6/box/0/ /,
        7/box/0/ /,
        8/box/0/ /,
        9/box/1/ /
    }{
       \node[\style] (r2\i) at (\i*0.6,1.3) {\text};
       \node[step1num] at (r2\i.north west) {\ustep};
       \node[step2num] at (r2\i.south west) {\lstep};
    }
    \node[right=0.5em] at (r29.east) {arm 2};
    
    \foreach \i/\style/\text/\ustep/\lstep in {
        0/replaybox/1/3/3,
        1/replaybox/1/6/4,
        2/replaybox/1/7/9,
        3/pullbox/0/8/,
        4/pullbox/0/10/,
        5/box/1/ /,
        6/box/1/ /,
        7/box/0/ /,
        8/box/0/ /,
        9/box/0/ /
    }{
       \node[\style] (r3\i) at (\i*0.6,0.6) {\text};
       \node[step1num] at (r3\i.north west) {\ustep};
       \node[step2num] at (r3\i.south west) {\lstep};
    }
    \node[right=0.5em] at (r39.east) {arm 3};
    
    \node[replaybox,label=right:{Used by both policies}] at (8.5,2.0) {};
    \node[pullbox,label=right:{Used by one policy}] at (8.5,1.3) {};
    \node[box,label=right:{Unused}] at (8.5,0.6) {};
    \end{tikzpicture}
    \end{figure}
\end{itemize}

To rigorously justify the shared-reward-stack model as an alternative analytical tool to the canonical model for the AR experiment, we establish the distributional equivalence results in Lemma~\ref{lem:bnk-stack-same-dist} and Theorem~\ref{thm:AR-stack-same-dist}.
Lemma~\ref{lem:bnk-stack-same-dist} focuses on the behavior of a single policy in the AR design and the shared‑reward‑stack model.
It states that, for either the control or the treatment policy considered in isolation, the historical action-reward sequence generated in Algorithm~\ref{alg:benchmark} is identically distributed to that constructed in the shared‑reward‑stack model.
In contrast, Theorem~\ref{thm:AR-stack-same-dist} concerns the coupled behavior of two policies under the canonical model and the shared‑reward‑stack model.
It establishes that the joint distribution of the $2T$-period trajectory (including replayed rewards) obtained in the AR experiment (Algorithm~\ref{alg:AR}) is identical to that of the action-reward trajectory constructed in the shared-reward-stack model.

\begin{lemma}\label{lem:bnk-stack-same-dist}
    Fix a horizon $T \geq 1$. {For any $i \in \{0, 1\}$, let $\{(A_t^{\pi_i}, R_t^{\pi_i})\}_{t=1}^T$ be the action-reward trajectory  defined in \eqref{eq:A-R-single-conditional-dist} under the canonical model of the na\"{\i}ve design, and let $\{(A_t^{\pi_i\text{-s}}, R_t^{\pi_i\text{-s}})\}_{t=1}^T$ be the action-reward trajectory constructed in \eqref{eq:action-reward-stack} under the shared-reward-stack model of the AR design.}
    Then, for each policy $\pi_i$, the two trajectories have the same joint distribution:
    \begin{align}\label{eq:bnk-stack-same-dist}
        (A_1^{\pi_i}, R_1^{\pi_i}, \dots, A_T^{\pi_i}, R_T^{\pi_i}) \stackrel{\text{d}}{=} (A_1^{\pi_i\text{-s}}, R_1^{\pi_i\text{-s}}, \dots, A_T^{\pi_i\text{-s}}, R_T^{\pi_i\text{-s}}).
    \end{align}
\end{lemma}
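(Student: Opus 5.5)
We prove the claim by showing that the two trajectories have the same finite-dimensional law, built up period by period. Fix $i \in \{0,1\}$. The marginal of $\mathbb{P}_{\text{s}}$ on $\{X_{a,t}\}$ and $(\eta^i_t)_t$ is exactly the measure $\mathbb{P}_{\text{r}}$ of the reward-stack model, since the other randomization sequence $(\eta^{1-i}_t)_t$ is independent and is never used in constructing $(A_t^{\pi_i\text{-s}}, R_t^{\pi_i\text{-s}})$ via \eqref{eq:action-reward-stack}. It therefore suffices to show that the reward-stack trajectory satisfies the same conditional structure \eqref{eq:A-R-single-conditional-dist} as the canonical one. The joint law of $(A_1,R_1,\dots,A_T,R_T)$ is determined by its sequence of conditional kernels through the chain rule (Ionescu--Tulcea uniqueness). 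Hence it is enough to verify, for each $t$, two facts:
\begin{itemize}
\item[(a)] $A_t^{\pi_i\text{-s}} \mid \mathcal{H}_{t-1}^{\pi_i\text{-s}} \sim (\pi_i)_t(\cdot\mid \mathcal{H}_{t-1}^{\pi_i\text{-s}})$;
\item[(b)] $R_t^{\pi_i\text{-s}} \mid (\mathcal{H}_{t-1}^{\pi_i\text{-s}}, A_t^{\pi_i\text{-s}}) \sim P_{A_t^{\pi_i\text{-s}}}$.
\end{itemize}

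For (a), I will show by induction that $\mathcal{H}_{t-1}^{\pi_i\text{-s}}$ is measurable with respect to $\mathcal{G}_{t-1} \triangleq \sigma(\eta^i_1,\dots,\eta^i_{t-1}, \{X_{a,s}\})$. Since $\eta^i_t \sim U(0,1)$ is independent of $\mathcal{G}_{t-1}$, the quantile transform $F^{-1}_\nu(U)$ has law $\nu$. Conditioning on the history then gives the required kernel, by the freezing lemma for independent variables.

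For (b), which is the main obstacle, the difficulty is that the history does depend on the reward stacks, so independence cannot be invoked naively. I will instead condition on the event $\{\mathcal{H}_{t-1}^{\pi_i\text{-s}} = h, A_t^{\pi_i\text{-s}} = a\}$ and use the structure of the construction. On this event, $N_a^{\pi_i\text{-s}}(t) = n_a(h)+1 \triangleq m$, so $R_t^{\pi_i\text{-s}} = X_{a,m}$. The history $h$ together with $A_t$ is a measurable function of
\begin{equation*}
\eta^i_1,\dots,\eta^i_t \quad\text{and}\quad \{X_{b,k}: k \le n_b(h)\}.
\end{equation*}
This holds because the construction only ever reads the first $n_b$ entries of each stack. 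I will make this precise by showing, by induction on $t$, that on the event $\{N_b^{\pi_i\text{-s}}(t-1)=n_b \ \forall b\}$ the indicator of the history event equals a function of those restricted variables alone. This requires summing over the finitely many possible count vectors $(n_b)_b$, i.e., over the action sequences $a_{1:t}$, and working with events of the form $\{A_{1:t}=a_{1:t}, R_{1:t-1}\in B\}$.

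For each fixed action sequence $a_{1:t}$, the entry $X_{a,m}$ with $m = n_a(a_{1:t-1})+1$ is independent of $\eta^i_{1:t}$ and of the stack entries $X_{b,k}$ with $k\le n_b(a_{1:t-1})$. Since the relevant event is measurable with respect to these variables, we obtain
\begin{equation*}
\Prob_{\text{s}}(A_{1:t}=a_{1:t}, R_{1:t-1}\in B, R_t\in C) = \Prob_{\text{s}}(A_{1:t}=a_{1:t}, R_{1:t-1}\in B)\,P_a(C).
\end{equation*}
Summing over sequences yields (b). Combining (a) and (b) with the chain rule, and comparing with \eqref{eq:A-R-single-conditional-dist}, gives \eqref{eq:bnk-stack-same-dist}.
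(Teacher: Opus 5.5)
Your proof is correct, but you handle the reward step by a different route than the paper.

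\textbf{The paper's route.} It works with the arm-specific filtration $\mathcal{F}_t^{T,a}$. It writes $\bigl(\mathbb{I}\{R_t^{\pi_i\text{-s}}\in B\}-P_a(B)\bigr)\mathbb{I}\{A_t^{\pi_i\text{-s}}=a\}$ as the increment of the martingale $I^{T,a}(B)$ between the bounded stopping times $N_a^{\pi_i\text{-s}}(t-1)\le N_a^{\pi_i\text{-s}}(t)$ (Proposition~\ref{prop:stopping-time-martingale}). Doob's optional sampling theorem sends this increment to zero. The tower rule then passes to $\sigma(\mathcal{H}_{t-1}^{\pi_i\text{-s}},A_t^{\pi_i\text{-s}})$, using the inclusion $\sigma(\mathcal{H}_{t-1}^{\pi_i\text{-s}},A_t^{\pi_i\text{-s}})\subseteq\mathcal{F}^{T,a}_{N_a^{\pi_i\text{-s}}(t-1)}$.

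\textbf{Your route.} You partition on the whole action sequence $a_{1:t}$. This freezes the random index $N_a^{\pi_i\text{-s}}(t)$ into the deterministic $m=n_a(a_{1:t-1})+1$. Everything then reduces to ordinary independence of the unread entry $X_{a,m}$ from $\eta^i_{1:t}$ and from the entries already read. The factorization on the $\pi$-system $\{A_{1:t}=a_{1:t},R_{1:t-1}\in B\}$ pins down the conditional law.

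\textbf{What your version buys.}
\begin{itemize}
\item It needs no optional sampling and is self-contained.
\item It proves directly the measurability fact the paper only invokes. That inclusion is the arm-$a$ analogue of your claim that only the first $n_b$ entries of each stack are read.
\item It avoids the paper's informal conditioning on the null events $\{\mathcal{H}_{t-1}=h\}$.
\item It extends to Theorem~\ref{thm:AR-stack-same-dist} by fixing both policies' action sequences, since $X_{a,n_a^1(t-1)+1}$ is still unread when $n_a^1(t-1)\ge n_a^0(T)$.
\end{itemize}

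\textbf{What the paper's version buys.} Reuse: the same filtration, stopping times and martingales $S^{T,a},V^{T,a}$ give the Wald-type identities of Corollary~\ref{cor:moment-eq} that drive the variance analysis. A partition over action sequences would be much clumsier there.

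Your opening reduction to $\mathbb{P}_{\text{r}}$ is harmless but unnecessary, since you verify the kernels directly on the shared-reward-stack space.
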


\begin{theorem}\label{thm:AR-stack-same-dist}
    Fix a horizon $T \geq 1$. 
    {For any $i \in \{0, 1\}$,
    let $\{(A_t^{\pi_i\text{-AR}}, R_t^{\pi_i\text{-AR}})\}_{t=1}^T$ be the action-reward trajectory defined in \eqref{eq:A-R-1-AR-conditional-dist}--\eqref{eq:R-2-AR-conditional-dist} under the canonical model of the AR design, and let $\{(A_t^{\pi_i\text{-s}}, R_t^{\pi_i\text{-s}})\}_{t=1}^T$ be the action-reward trajectory constructed in \eqref{eq:action-reward-stack} under the shared-reward-stack model of the AR design.}
    Then, the joint distribution of the action-reward trajectories of both policies in the AR design is equal under the canonical model and the shared-reward-stack model:
    \begin{equation}\label{eq:AR-stack-same-dist}
    \begin{aligned}
     &(A_1^{\pi_0\text{-AR}}, R_1^{\pi_0\text{-AR}}, \dots, A_T^{\pi_0\text{-AR}}, R_T^{\pi_0\text{-AR}}, A_1^{\pi_1\text{-AR}}, R_1^{\pi_1\text{-AR}}, \dots, A_T^{\pi_1\text{-AR}}, R_T^{\pi_1\text{-AR}}) \\
     \stackrel{\text{d}}{=} 
     {}& (A_1^{\pi_0\text{-s}}, R_1^{\pi_0\text{-s}}, \dots, A_T^{\pi_0\text{-s}}, R_T^{\pi_0\text{-s}}, A_1^{\pi_1\text{-s}}, R_1^{\pi_1\text{-s}}, \dots, A_T^{\pi_1\text{-s}}, R_T^{\pi_1\text{-s}}). 
    \end{aligned}
    \end{equation}
\end{theorem}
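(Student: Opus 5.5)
We prove \eqref{eq:AR-stack-same-dist} by showing that the shared-reward-stack trajectory satisfies exactly the sequential conditional laws \eqref{eq:A-R-1-AR-conditional-dist}--\eqref{eq:R-2-AR-conditional-dist} that define $\mathbb P^{\pi_{01}}$. The law of the $4T$-dimensional vector is then pinned down by the chain rule (Ionescu--Tulcea), with the variables ordered as $A_1^0,R_1^0,\dots,A_T^0,R_T^0,A_1^1,R_1^1,\dots,A_T^1,R_T^1$. So it suffices to check, for the stack model, that each coordinate has the right conditional distribution given all earlier coordinates in this order.

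\textbf{Phase 1.} For the $\pi_0$ block we invoke Lemma~\ref{lem:bnk-stack-same-dist}, or repeat its argument directly.
\begin{itemize}
\item Conditional on $\mathcal H_{t-1}^{\pi_0\text{-s}}$, the uniform $\eta_t^0$ is independent of the past, so $A_t^{\pi_0\text{-s}}=F^{-1}_{(\pi_0)_t(\cdot\mid\mathcal H^{\pi_0\text{-s}}_{t-1})}(\eta_t^0)$ has law $(\pi_0)_t(\cdot\mid\mathcal H_{t-1}^{\pi_0\text{-s}})$.
\item The reward $X_{a,n}$ with $n=N_a^{\pi_0\text{-s}}(t)$ is drawn from a cell not yet revealed. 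Write $\mathcal G_{t}$ for the $\sigma$-field generated by $\eta^0_{1:t}$ and by the cells already revealed before period $t$. For each fixed $(a,n)$, the event $\{A_t=a,\,N_a(t)=n\}$ lies in $\mathcal G_t$, while $X_{a,n}$ is independent of $\mathcal G_t$ on that event. Hence $R_t^{\pi_0\text{-s}}\mid(\mathcal H_{t-1},A_t=a)\sim P_a$.
\end{itemize}
Making this step rigorous requires a careful decomposition over the pair $(a,n)$.

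\textbf{Phase 2, actions.} Conditional on the full $\pi_0$ block and on $\mathcal H^{\pi_1\text{-s}}_{t-1}$, the variable $\eta_t^1$ is independent of everything determining these quantities. Indeed, the $\pi_0$ block is a function of $\eta^0$ and $X$, and $\mathcal H^{\pi_1\text{-s}}_{t-1}$ is a function of $\eta^1_{1:t-1}$ and $X$. This gives \eqref{eq:A-2-AR-conditional-dist}.

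\textbf{Phase 2, rewards.} Given $A_t^{\pi_1\text{-s}}=a$, set $m=N_a^{\pi_1\text{-s}}(t-1)$, so the reward is $X_{a,m+1}$. There are two cases.
\begin{itemize}
\item If $m<N_a^{\pi_0\text{-s}}(T)$, the cell $X_{a,m+1}$ was revealed by $\pi_0$ at the period $s^*$ at which $N_a^{\pi_0\text{-s}}(s^*)=m+1$. It therefore equals $R_{s^*}^{\pi_0\text{-s}}$ deterministically, which matches the Dirac case of \eqref{eq:R-2-AR-conditional-dist}.
\item If $m\ge N_a^{\pi_0\text{-s}}(T)$, the cell $X_{a,m+1}$ has been revealed by neither policy. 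We must show it is distributed as $P_a$ independently of the conditioning.
\end{itemize}

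The second case is the main obstacle, because the conditioning event is defined through a random set of revealed cells coming from two interleaved processes. We handle it with the same stopping-time-style decomposition as in Phase 1. Fix $(a,j)$ with $j=m+1$. On the event $\{A_t^{\pi_1}=a,\ N_a^{\pi_1}(t-1)=j-1,\ N_a^{\pi_0}(T)<j\}$, both trajectories and the action $A_t^{\pi_1}$ are measurable functions of $(\eta^0,\eta^1_{1:t})$ and of the cells $X$ other than $X_{a,j'}$ for $j'\ge j$. The key point is that on this event neither process ever reaches index $j$ of stack $a$. Replacing $X_{a,j},X_{a,j+1},\dots$ by arbitrary values therefore changes neither the event nor the histories.

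Independence of the stack entries then gives, for Borel sets $B$ of histories and $C\subseteq\mathbb R$,
\[
\mathbb P_{\text{s}}\bigl(\text{histories}\in B,\ \text{event},\ X_{a,j}\in C\bigr)=\mathbb P_{\text{s}}\bigl(\text{histories}\in B,\ \text{event}\bigr)\,P_a(C).
\]
Summing over $j$ yields the second case of \eqref{eq:R-2-AR-conditional-dist}. Since every conditional kernel matches, the joint laws coincide.
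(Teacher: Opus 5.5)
Your proposal is correct, and its overall structure matches the paper's. You match the kernels (i)--(iii) in the order $\pi_0$-block then $\pi_1$-block. You use Lemma~\ref{lem:bnk-stack-same-dist} for Phase~1, use independence of $\eta_t^1$ from $(X,\eta^0,\eta^1_{1:t-1})$ for the Phase-2 actions, and split the Phase-2 reward into a Dirac case and a fresh-cell case.

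The genuine difference is in the fresh-cell case $N_a^{\pi_1\text{-s}}(t-1)\ge N_a^{\pi_0\text{-s}}(T)$. The paper works with the arm-specific filtration $\{\mathcal F^{T,a}_t\}$. It proves that the counts $N_a^{\pi_i\text{-s}}(r)$ are stopping times and that $I^{T,a}_t(B)$ is a martingale (Proposition~\ref{prop:stopping-time-martingale}). It then checks $\sigma(\mathcal H_T^{\pi_0\text{-s}},\mathcal H_{t-1}^{\pi_1\text{-s}},A_t^{\pi_1\text{-s}})\subseteq\mathcal F^{T,a}_{\tau}$ for $\tau=N_a^{\pi_1\text{-s}}(t-1)\vee N_a^{\pi_0\text{-s}}(T)$, and applies optional sampling between $\tau$ and $N_a^{\pi_1\text{-s}}(t)\vee N_a^{\pi_0\text{-s}}(T)$.

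You instead fix the index $j$ being read and show directly that on $E_j$ the event and the conditioning histories do not depend on $X_{a,j},X_{a,j+1},\dots$. This is exactly the value-by-value computation that proves optional sampling for bounded discrete stopping times. Your invariance observation does the work of the paper's stopping-time induction. To make it airtight, state the invariance in both directions: $\mathbb I_{E_j}$ and the histories on $E_j$ equal their values after freezing those cells at a constant. They therefore factor through a measurable map of the remaining coordinates $(\eta^0,\eta^1_{1:t},X_{b,\cdot}\ (b\neq a),X_{a,1},\dots,X_{a,j-1})$, which are independent of $X_{a,j}$.

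Your route is shorter and self-contained, and needs none of Proposition~\ref{prop:stopping-time-martingale}. The paper's route costs more up front, but it builds the filtration and stopping-time structure that it reuses later. That structure serves Corollary~\ref{cor:moment-eq} and the identity $\operatorname{Cov}(S^{T,a^*}_{N_{a^*}^{\pi_i\text{-s}}(T)},S^{T,a^*}_{N_{a^*}^{\pi_j\text{-s}}(T)})=\sigma_{a^*}^2\E[N_{a^*}^{\pi_i\text{-s}}(T)\wedge N_{a^*}^{\pi_j\text{-s}}(T)]$ in Theorem~\ref{thm:asy_var}, where a value-by-value argument would be clumsier.

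A minor point: the $\sigma$-field $\mathcal G_t$ in your Phase~1 bullet, generated by ``the cells already revealed'', is not well defined because that set of cells is random. Either rely on Lemma~\ref{lem:bnk-stack-same-dist}, as you offer, or reuse your fixed-$(a,n)$ invariance argument there.
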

While the results of Lemma~\ref{lem:bnk-stack-same-dist} and Theorem~\ref{thm:AR-stack-same-dist} align with intuition, their proofs are not straightforward.
Indeed, they rely on the stopping time and martingale properties established later in Proposition~\ref{prop:stopping-time-martingale} for the shared-reward-stack model.
The detailed proofs of Lemma~\ref{lem:bnk-stack-same-dist} and Theorem~\ref{thm:AR-stack-same-dist} are provided in Appendix~\ref{app:lem-bnk-stack-same-dist} and Appendix~\ref{app:thm-AR-stack-same-dist}, respectively.

The following Corollary~\ref{cor:mean-var-equal} is an immediate consequence of Lemma~\ref{lem:bnk-stack-same-dist} and Theorem~\ref{thm:AR-stack-same-dist}.
Since it will be repeatedly used in proving the statistical properties of the AR estimator in Section~\ref{sec:theoretical_properties}, we state it here for convenience, whose proof is provided in Appendix~\ref{app:cor-mean-var-equal}. 
\begin{corollary}\label{cor:mean-var-equal}
    Fix a horizon $T \geq 1$. Then, for any $i \in \{0, 1\}$ and $a \in [K]$, 
    \begin{align*}
        \E[N_a^{\pi_i\text{-AR}}(T)] &= \E[N_a^{\pi_i\text{-s}}(T)] =\E[N_a^{\pi_i}(T)], \\
        \Var(N_a^{\pi_i\text{-AR}}(T)) &= \Var(N_a^{\pi_i\text{-s}}(T)) =\Var(N_a^{\pi_i}(T)).
    \end{align*}
    Moreover, 
    \begin{align*}
        \E\left[\sum_{t=1}^T R_t^{\pi_i\text{-AR}} \right] = \E\left[\sum_{t=1}^T R_t^{\pi_i\text{-s}} \right] = \E\left[\sum_{t=1}^T R_t^{\pi_i} \right].
    \end{align*}
\end{corollary}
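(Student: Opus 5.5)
Corollary~\ref{cor:mean-var-equal} follows by pushing the distributional identities of Lemma~\ref{lem:bnk-stack-same-dist} and Theorem~\ref{thm:AR-stack-same-dist} through measurable functionals of the trajectories.

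\textbf{Step 1: marginalize the joint identity.}
Theorem~\ref{thm:AR-stack-same-dist} gives equality in law of the full $4T$-dimensional vectors. Projecting onto the coordinates of policy $\pi_i$ alone is a measurable map, so
\[
(A_t^{\pi_i\text{-AR}},R_t^{\pi_i\text{-AR}})_{t\le T} \stackrel{\text{d}}{=} (A_t^{\pi_i\text{-s}},R_t^{\pi_i\text{-s}})_{t\le T}.
\]
Combining this with Lemma~\ref{lem:bnk-stack-same-dist}, the three trajectories (canonical AR, shared-stack, and na\"{\i}ve canonical) for $\pi_i$ all share one law on $([K]\times\mathbb{R})^T$.

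\textbf{Step 2: pull counts.}
The map $g_a(a_1,r_1,\dots,a_T,r_T)=\sum_{t}\mathbb{I}\{a_t=a\}$ is Borel measurable. Hence $N_a^{\pi_i\text{-AR}}(T)$, $N_a^{\pi_i\text{-s}}(T)$ and $N_a^{\pi_i}(T)$ are equal in distribution. These variables are bounded in $[0,T]$, so their first and second moments exist and coincide. This gives the equalities of means and of variances.

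\textbf{Step 3: cumulative rewards.}
Let $h(\cdot)=\sum_t r_t$, which is continuous and hence measurable. The three cumulative rewards are then equal in distribution, and equality of expectations follows once integrability is checked. This is the only point needing care, since rewards are not assumed bounded.

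To handle it, I would use the na\"{\i}ve canonical model. There $|\sum_t R_t^{\pi_i}|\le \sum_t |R_t^{\pi_i}|$, and conditioning on $(A_t^{\pi_i},\mathcal H_{t-1}^{\pi_i})$ gives
\[
\E|R_t^{\pi_i}| \le \max_a \E_{P_a}|X| < \infty,
\]
because each $P_a$ has finite mean (indeed finite variance $\sigma_a^2$). Since integrability depends only on the law, the same bound transfers to the other two representations.

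With integrability in hand, equal laws imply equal expectations, which completes the proof. The same reasoning also shows that the variances of the cumulative rewards agree, using the finite second moments of the $P_a$. No step is a real obstacle; the only subtlety is this integrability check.
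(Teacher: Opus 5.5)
Your proposal is correct and follows the paper's proof essentially step for step: marginalize the joint identity of Theorem~\ref{thm:AR-stack-same-dist}, combine it with Lemma~\ref{lem:bnk-stack-same-dist}, and push the common law through the measurable maps that give pull counts and cumulative rewards. Your explicit integrability check for $\sum_t R_t$ (via $\E|R_t^{\pi_i}|\le \max_a \E_{P_a}|X|$) is a small point of rigor that the paper leaves implicit when it asserts that any measurable function of equally distributed sequences has the same expectation.
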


In light of the distributional equivalence,
we can conduct our theoretical analysis of the AR experiment through the lens of the shared-reward-stack model.
Recalling the four statistical properties of the AR design outlined in Section~\ref{sec:AR_Design},
we note that under the shared-reward-stack model, the symmetry and sample-efficiency properties of the AR experiment, as well as the unbiasedness of the AR estimator, are amenable to standard analytical arguments.
The most technical aspect, however, lies in quantifying the variance-reduction effect of the AR estimator relative to the na\"{\i}ve estimator. 
To this end, we develop a novel analytical approach, which involves the systematic use of stopping time and martingale theory for analyzing intertwined bandit policies---a significant methodological innovation for bandit problems.
We lay the basis of this framework by identifying the fundamental stopping‑time and martingale structures within the shared‑reward‑stack model. 
A key technical point in this process lies in 
the construction of \emph{an appropriate filtration such that 
$N_a^{\pi_i\text{-s}}(T)$ is a stopping time}---a property that does not hold under the experiment's natural filtration $\{\sigma(\mathcal{H}_t^{\pi_i\text{-s}}): t \in [T]\}$.
Indeed, the event $N_a^{\pi_i\text{-s}}(T) \leq t$ is certainly not based on the information available at $t$ in the natural filtration associated with a multi-armed bandit process.

To give the intuition of this novel filtration,
suppose the internal randomness of policy $\pi_i$ for the whole process and the complete reward stacks of all the arms \emph{except arm $a$} are given. 
Then once the first $t$ rewards in the stack of arm $a$ have been revealed, one can determine whether $N_a^{\pi_i\text{-s}}(T) \leq t$.
For instance, Figure~\ref{fig:stopping-time} presents a partial reward-stack realization in which the stack of arm $2$ is masked from its 4th entry onward.
This partial reward-stack realization together with the realized internal randomness $\{y_t^i: t\in[T]\}$ suffices to determine whether $N_2^{\pi_i\text{-s}}(10)\leq 3$. 
Indeed, there are two possible scenarios after running $\pi_i$ for $T=10$ periods.
For the first scenario, policy $\pi_i$ selects arm 2 at most three times within the 10 periods, in which case one can simulate the entire trajectory using the revealed information in Figure~\ref{fig:stopping-time} and thus conclude that the event $\{N_2^{\pi_i\text{-s}}(10)\leq 3\}$ occurs. 
For the second scenario, policy $\pi_i$ attempts to select arm 2 for the 4th time in some period.
At this point, the masked reward is required, and one can conclude immediately that $N_2^{\pi_i\text{-s}}(10)> 3$, \emph{without actually revealing the masked rewards}.
\begin{figure}[htbp] 
\caption{An illustration of a partially revealed reward stack. The masked cells indicate rewards that are treated as unknown.}
\label{fig:stopping-time}
\vspace{0.5em}
\centering
\begin{tikzpicture}[
    unknownbox/.style={rectangle,draw,thick,
    pattern=north east lines, pattern color=black!90,
                minimum width=6mm,minimum height=6mm,
                align=center},
    replaybox/.style={rectangle,draw,thick,fill=orange!40,
                minimum width=6mm,minimum height=6mm,
                align=center},
    box/.style={rectangle,draw,thick,fill=white,
                minimum width=6mm,minimum height=6mm,
                align=center},
    line/.style={thick,black},
    stepnum/.style=
    {anchor=north west, font=\tiny, text=blue!70, inner sep=1pt},
]

\foreach \i/\style/\text/\step in {
    0/box/1/,
    1/box/0/,
    2/box/0/,
    3/box/0/,
    4/box/0/,
    5/box/0/,
    6/box/1/,
    7/box/1/,
    8/box/1/,
    9/box/0/
}{
   \node[\style] (r1\i) at (\i*0.6,2.0) {\text};
   \node[stepnum] at (r1\i.north west) {\step};
}

\node[right=0.5em] at (r19.east) {arm 1};

\foreach \i/\style/\text\step in {
    0/box/0/,
    1/box/0/,
    2/box/0/,
    3/unknownbox//,
    4/unknownbox//,
    5/unknownbox//,
    6/unknownbox//,
    7/unknownbox//,
    8/unknownbox//,
    9/unknownbox//
}{
   \node[\style] (r2\i) at (\i*0.6,1.3) {\text};
   \node[stepnum] at (r2\i.north west) {\step};
}
\node[right=0.5em] at (r29.east) {arm 2};

\foreach \i/\style/\text/\step in {
    0/box/1/,
    1/box/1/,
    2/box/1/,
    3/box/0/,
    4/box/0/,
    5/box/1/,
    6/box/1/,
    7/box/0/,
    8/box/0/,
    9/box/0/ 
}{
   \node[\style] (r3\i) at (\i*0.6,0.6) {\text};
   \node[stepnum] at (r3\i.north west) {\step};
}
\node[right=0.5em] at (r39.east) {arm 3};

\end{tikzpicture}
\end{figure}

Therefore, we define the filtration $\{\mathcal{F}_t^{T,a}\}_{t \in \mathbb{N}}$ specific to arm $a$ such that $\mathcal{F}_t^{T,a}$ captures all the information available after the first 
$t$ rewards in the stack of arm $a$ have been observed, together with the complete reward stacks of other arms and the internal randomness of both policies. Formally, we define (recall that $X_{a,t}$ is the $t$-th random reward of arm $a$ in the stack):
\begin{equation}\label{eq:def-filtration}
\begin{aligned}
    &\mathcal{F}_0^{T, a}
    \triangleq \sigma(X_{b,r},\, \eta_r^0, \,\eta_r^1,\, b \neq a,\, r \in [T]), \\
    &\mathcal{F}_t^{T, a}
    \triangleq \mathcal{F}_0^{T, a} \vee \sigma(X_{a,r},\, r \in [t])
    \quad \text{ for } t \in [T], \\
    &\mathcal{F}_t^{T, a}
    \triangleq \mathcal{F}_T^{T, a} \quad \text{ for } t > T.
\end{aligned}
\end{equation}
In fact, for $i\in \{0, 1\}$ and $a \in [K]$, the natural filtration at period $t$, i.e., $\sigma(\mathcal{H}_t^{\pi_i\text{-s}})$ is contained in $\mathcal{F}_{N_a^{\pi_i\text{-s}}(t)}^{T, a}$, the $\sigma$-algebra of events prior to the stopping time $N_a^{\pi_i\text{-s}}(t)$, for all $t \in [T]$.

{Based on this new filtration, we now introduce three stochastic processes $\{I_t^{T, a}(B),\,t \in \mathbb{N}\}$, $\{S_t^{T, a},\,t \in \mathbb{N}\}$ and $\{V_t^{T, a},\,t \in \mathbb{N}\}$ that constitute the main objects of our analysis.
For any $B \in \mathcal{B}(\mathbb{R})$, set $I_0^{T, a}(B) = S_0^{T, a} = V_0^{T, a} = 0$, and define for $t \in [T]$,
\begin{align}\label{eq:def-I-S-V}
    I_t^{T, a}(B) \triangleq \sum_{r=1}^t (\mathbb{I}\{X_{a, r} \in B\} - P_a(B)), \quad S_t^{T, a} \triangleq \sum_{r=1}^t (X_{a, r} - \mu_a), \quad V_t^{T, a} \triangleq (S_t^{T, a})^2 - t \sigma_a^2.
\end{align}
To facilitate the subsequent stopping-time arguments, we extend the time index set of these processes to $\mathbb{N}$
by setting $I_t^{T, a}(B) = I_T^{T, a}(B)$, $S_t^{T, a} = S_T^{T, a}$, and $V_t^{T, a} = V_T^{T, a}$ for all $t > T$.
{It is worth noting that these three processes are adapted to the new filtration defined in \eqref{eq:def-filtration}, but not adapted to the experiment's natural filtration $\{\sigma(\mathcal{H}_t^{\pi_i\text{-s}}): t \in [T]\}$.}
Moreover, $I_t^{T,a}(B)$ is a centered empirical count, which tracks the deviation between the number of samples among $\{X_{a,1},\ldots,X_{a,t}\}$ that fall in $B$ and its expectation $tP_a(B)$. The process $\{I_t^{T, a}(B),\,t \in \mathbb{N}\}$ arises in our proof of the distributional equivalence in Theorem~\ref{thm:AR-stack-same-dist}. In particular, a key step in this proof is to establish
\begin{align*}
    \E[I_{N_a^{\pi_1\text{-s}}(t) \vee N_a^{\pi_0\text{-s}}(T)}^{T, a}(B) \mid \mathcal{F}_{N_a^{\pi_1\text{-s}}(t-1) \vee N_a^{\pi_0\text{-s}}(T)}^{T, a}] = I_{N_a^{\pi_1\text{-s}}(t-1) \vee N_a^{\pi_0\text{-s}}(T)}^{T, a}(B).
\end{align*} 
This result necessitates the martingale property of $\{I_t^{T, a}(B),\,t \in \mathbb{N}\}$, which enables the application of the optional sampling theorem.
The process $\{S_t^{T, a},\,t \in \mathbb{N}\}$ is the zero-mean random walk induced by the reward stack of arm $a$, and $\{V_t^{T, a},\,t \in \mathbb{N}\}$ is defined as the associated quadratic correction. 
These two processes play a crucial role in establishing the variance-reduction property of the AR estimator. Specifically, in this analysis, it is essential to evaluate the variance of the arm-wise centered cumulative rewards, $\operatorname{Var}(S_{N_a^{\pi_i\text{-s}}(T)}^{T, a})$.
By leveraging the martingale structure of $\{S_t^{T, a},\,t \in \mathbb{N}\}$ and $\{V_t^{T, a},\,t \in \mathbb{N}\}$, we can express this quantity explicitly in terms of the expected pull count $\E[N_a^{\pi_i\text{-s}}(T)]$, thereby rendering the subsequent analysis tractable.
}

We state the stopping-time and martingale results in Proposition~\ref{prop:stopping-time-martingale}. The proof is provided in Appendix~\ref{app:lem-stopping-time-martingale}.
\begin{proposition}\label{prop:stopping-time-martingale}
Fix a horizon $T \ge 1$ and an arm $a \in [K]$. The following statements hold:
\begin{itemize}
    \item [(i)] For each policy $\pi_i$ with $i \in \{0, 1\}$ and any $r \in [T]$, $N_a^{\pi_i\text{-s}}(r)$ defined in \eqref{eq:N_a-s} is a bounded stopping time with respect to the filtration $\{\mathcal{F}_t^{T, a}\}_{t \in \mathbb{N}}$ defined in \eqref{eq:def-filtration}.
    \item [(ii)]
    The processes $\{I_t^{T, a}(B),\,t \in \mathbb{N}\}$, $\{S_t^{T, a},\,t \in \mathbb{N}\}$ and $\{V_t^{T, a},\,t \in \mathbb{N}\}$, defined in \eqref{eq:def-I-S-V}, 
    are $\{\mathcal{F}_t^{T,a}\}_{t \in \mathbb{N}}$-martingales.
\end{itemize}
\end{proposition}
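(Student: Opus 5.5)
For part (i), the plan is to show that for every $t \in \mathbb{N}$ the event $\{N_a^{\pi_i\text{-s}}(r) \le t\}$ lies in $\mathcal{F}_t^{T,a}$. For $t \ge T$ there is nothing to prove: $N_a^{\pi_i\text{-s}}(r) \le r \le T$ always holds, which also gives boundedness. So fix $t < T$ and build an auxiliary "frozen" trajectory that only uses information in $\mathcal{F}_t^{T,a}$:
\begin{itemize}
    \item Run policy $\pi_i$ with randomness $\eta^i$ through the recursion \eqref{eq:action-reward-stack}, using the true stacks $X_{b,\cdot}$ for $b\neq a$.
    \item For arm $a$, use only the entries $X_{a,1},\dots,X_{a,t}$, and replace every later entry $X_{a,s}$, $s>t$, by an arbitrary fixed constant (say $0$).
\end{itemize}
Call the resulting actions $\tilde A_s$ and counts $\tilde N_a(s)$. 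Each $\tilde A_s$ is a measurable function of $\mathcal{F}_t^{T,a}$-measurable variables, since the generalized inverse $F^{-1}_{\nu}(\eta)$ is jointly measurable in $(\nu,\eta)$ and the kernels $\pi_t(\cdot\mid h)$ are measurable in $h$. An induction on $s$ then shows that the true and frozen trajectories agree up to and including period $s$ whenever $\tilde N_a(s-1) \le t$, because before the $(t+1)$-st pull of arm $a$ only the first $t$ entries of that stack have been read. It follows that
\[
\{N_a^{\pi_i\text{-s}}(r)\le t\}=\{\tilde N_a(r)\le t\}.
\]
To see this, note that if either count stays at most $t$ through period $r$, then the two trajectories coincide through period $r$. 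If instead the true count exceeds $t$, the two paths coincide up to the $(t+1)$-st pull of arm $a$, so the frozen count also exceeds $t$. The right-hand event is in $\mathcal{F}_t^{T,a}$, which proves (i).

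For part (ii), the plan is to work with the filtration directly. Each process is adapted, since its $t$-th value depends only on $X_{a,1},\dots,X_{a,t}$, and each is integrable; here I will assume $P_a$ has a finite second moment, as is implicit in defining $\sigma_a^2$. For $t \ge T$ all processes are constant and the filtration is constant, so the martingale identity is trivial there. For $t<T$, the variable $X_{a,t+1}$ is independent of $\mathcal{F}_t^{T,a}$, because the coordinates are independent under $\mathbb{P}_{\text{s}}$. This gives the three increments:
\begin{itemize}
    \item For $I$: $\E[\mathbb{I}\{X_{a,t+1}\in B\}-P_a(B)\mid\mathcal{F}_t^{T,a}]=0$.
    \item For $S$: $\E[X_{a,t+1}-\mu_a\mid\mathcal{F}_t^{T,a}]=0$.
    \item For $V$: expanding $(S_t+\xi)^2 - (t+1)\sigma_a^2$ with $\xi = X_{a,t+1}-\mu_a$ gives the increment $2S_t\xi+\xi^2-\sigma_a^2$. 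Its conditional mean is $2S_t\cdot 0+\sigma_a^2-\sigma_a^2=0$, using that $S_t$ is $\mathcal{F}_t^{T,a}$-measurable and $\xi$ is independent with variance $\sigma_a^2$.
\end{itemize}

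I expect the main obstacle to be the coupling and induction argument in (i). It must be phrased carefully because the histories include rewards from arm $a$, which feed back into later action choices, and the measurability of the quantile map applied to a history-dependent kernel needs a brief justification. Part (ii) is routine.
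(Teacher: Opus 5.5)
Your argument is correct. Part (ii) matches the paper's proof essentially line for line: independence of $X_{a,t+1}$ from $\mathcal{F}_t^{T,a}$, the expansion of the $V$-increment, and triviality for $t\ge T$. Your remark that integrability of $V$ requires $\sigma_a^2<\infty$ is something the paper leaves implicit.

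Part (i) takes a different route.

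\begin{itemize}
\item The paper shows $\{N_a^{\pi_i\text{-s}}(r)=t\}\in\mathcal{F}_t^{T,a}$ by induction on $r$. It splits this event into $\{N_a^{\pi_i\text{-s}}(r-1)=t-1,\,A_r^{\pi_i\text{-s}}=a\}\cup\{N_a^{\pi_i\text{-s}}(r-1)=t,\,A_r^{\pi_i\text{-s}}\neq a\}$. It then invokes, ``from the recursive construction'' and without proof, a measurable $g_t^{\pi_i}$ with $A_r^{\pi_i\text{-s}}\mathbb{I}\{N_a^{\pi_i\text{-s}}(r-1)=t\}=g_t^{\pi_i}(X_{a,1},\dots,X_{a,t},\ldots)\,\mathbb{I}\{N_a^{\pi_i\text{-s}}(r-1)=t\}$.
\item Your frozen trajectory is an explicit construction of such a $g_t^{\pi_i}$, namely $\tilde A_r$ with cutoff $t$. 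You then obtain $\{N_a^{\pi_i\text{-s}}(r)\le t\}=\{\tilde N_a(r)\le t\}\in\mathcal{F}_t^{T,a}$ directly, with no induction over $r$ on the events themselves.
\end{itemize}

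Your version is therefore more self-contained, since it proves the representation the paper takes for granted. The paper's version is more mechanical once that representation is accepted.

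One wording fix. The coupling claim should say that if $\tilde N_a(s-1)\le t$, then the \emph{histories} agree through period $s-1$ and $\tilde A_s=A_s$. The rewards at period $s$ itself can differ when $s$ is the $(t+1)$-st pull of arm $a$: the true reward is $X_{a,t+1}$ and the frozen reward is the constant. Since your conclusion uses only actions and counts, this does not affect the argument.
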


\section{Statistical Properties of the AR Design}\label{sec:theoretical_properties}

In this section, we establish the statistical properties of the AR design and the associated estimator, as outlined at the end of Section~\ref{sec:AR_Design}. We will prove that the AR design is symmetric, highly sample-efficient, and that the AR estimator is unbiased and achieves significant asymptotic variance reduction compared to the na\"{\i}ve estimator.

\subsection*{Symmetry}
A key feature of a fair comparison framework is symmetry. The choice of which policy to deploy first in the real environment should not affect the final outcome. The AR design satisfies this property. 
This property arises naturally from the shared-reward-stack model, in which the two policies play perfectly symmetric and interchangeable roles. 
Moreover, as established in Theorem~\ref{thm:AR-stack-same-dist}, the canonical model underlying the AR experiment is distributionally equivalent to the shared-reward-stack model. 
Therefore, the symmetry property of the AR design follows immediately. We state this result in Theorem~\ref{thm:symmetry}, the proof of which is omitted.
\begin{theorem}[Symmetry]\label{thm:symmetry}
    Let $\thetahat{AR}(T)$ be the AR estimator defined in \eqref{eq:def-AR-estimator} where $\pi_0$ is deployed first in Algorithm~\ref{alg:AR}. Let $\thetahat{AR}'(T)$ be the estimator constructed by applying Algorithm~\ref{alg:AR} after swapping the roles of $\pi_0$ and $\pi_1$--that is, deploying $\pi_1$ first and having $\pi_0$ replay on its data. Then, $\thetahat{AR}(T)$ and $\thetahat{AR}'(T)$ are identically distributed.
\end{theorem}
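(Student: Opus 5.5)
The plan is to pass everything through the shared-reward-stack model and use Theorem~\ref{thm:AR-stack-same-dist} twice, once for each deployment order. The estimator $\thetahat{AR}(T)$ is a fixed measurable function of the joint trajectory of both policies:
\[
g(\cdot)=\sum_{t=1}^T r_t^1-\sum_{t=1}^T r_t^0 .
\]
So it suffices to show that the two joint trajectories, one from deploying $\pi_0$ first and one from deploying $\pi_1$ first, induce the same law on the relevant reward coordinates. We then push this forward through $g$, with the sign convention fixed so that $\pi_1$'s rewards enter positively in both cases.

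The first step is to apply Theorem~\ref{thm:AR-stack-same-dist} to the original experiment. The joint law of $(\mathcal H_T^{\pi_0\text{-AR}},\mathcal H_T^{\pi_1\text{-AR}})$ equals the joint law of $(\mathcal H_T^{\pi_0\text{-s}},\mathcal H_T^{\pi_1\text{-s}})$ built on $(\Omega_{\text s},\mathcal F_{\text s},\mathbb P_{\text s})$ from the stacks $X$ and the randomizations $(\eta^0,\eta^1)$.

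The second step is to apply Theorem~\ref{thm:AR-stack-same-dist} to the swapped experiment, in which $\pi_1$ plays Phase~1 and $\pi_0$ replays. The theorem holds for an arbitrary ordered pair of policies, so we may invoke it with the pair $(\pi_1,\pi_0)$. The swapped AR trajectories then have the same law as the shared-stack trajectories in which $\pi_1$ is driven by the first randomization sequence and $\pi_0$ by the second. Call these $\mathcal H_T^{\pi_1\text{-s}'}$ and $\mathcal H_T^{\pi_0\text{-s}'}$.

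The third step is to compare the two shared-stack constructions. They are generated by the same deterministic recursion \eqref{eq:action-reward-stack}. Each policy's trajectory is a measurable function of the common stacks $X$ and of its own uniform sequence only, and this function does not depend on any label of ``first'' or ``second''. The only difference between the two constructions is which of $\eta^0,\eta^1$ feeds which policy.

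Under $\mathbb P_{\text s}$, however, $(X,\eta^0,\eta^1)\stackrel{\text d}{=}(X,\eta^1,\eta^0)$, because the sequences are i.i.d.\ $U(0,1)$ and independent of $X$. Relabeling the randomizations is therefore measure-preserving. It follows that $(\mathcal H_T^{\pi_0\text{-s}'},\mathcal H_T^{\pi_1\text{-s}'})\stackrel{\text d}{=}(\mathcal H_T^{\pi_0\text{-s}},\mathcal H_T^{\pi_1\text{-s}})$.

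Chaining the equalities in law and applying $g$ gives $\thetahat{AR}'(T)\stackrel{\text d}{=}\thetahat{AR}(T)$.

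The main obstacle is bookkeeping rather than analysis. The point to check carefully is that Theorem~\ref{thm:AR-stack-same-dist} is genuinely order-agnostic, i.e.\ it was proven for generic policies occupying the Phase~1 and Phase~2 slots, so that invoking it with roles swapped is legitimate. The other point is that the estimator in the swapped design is still defined as the reward sum of $\pi_1$ minus that of $\pi_0$, not as Phase~2 minus Phase~1. Once this is fixed, the symmetry of the shared-stack model under exchange of $\eta^0$ and $\eta^1$ completes the argument.
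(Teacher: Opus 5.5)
Your proposal is correct and follows the paper's own (omitted, only sketched) argument: apply Theorem~\ref{thm:AR-stack-same-dist} to each deployment order, then use that in the shared-reward-stack model each policy's trajectory is the same measurable function of the stacks $X$ and its own randomization, so that $(X,\eta^0,\eta^1)\stackrel{\text{d}}{=}(X,\eta^1,\eta^0)$ gives exchangeability. Your point that $\thetahat{AR}'(T)$ must still be defined as $\pi_1$'s total reward minus $\pi_0$'s total reward, and not as Phase~2 minus Phase~1, is a worthwhile clarification that the paper leaves implicit.
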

This symmetry ensures that the AR experiment is robust and the comparison is fair, irrespective of the deployment order.

\subsection*{Sample Efficiency}
The AR method is designed to minimize costly interactions with the real environment. Unlike a na\"{\i}ve design that requires $2T$ interactions, the AR design reuses data, leading to substantial savings. The following theorem quantifies the expected number of real-world arm pulls.

\begin{theorem}\label{thm:AR-pull-times}
    For a fixed horizon $T \geq 1$, let $n^{\pi_i}(T) \triangleq \sum_{a \neq a^*, \, a \in [K]} \mathbb{E}[N_a^{\pi_i}(T)]$ denote the expected number of suboptimal arm pulls by policy $\pi_i$ for each $i \in \{0, 1\}$. Furthermore, let $n^{\text{e-AR}}(T)$ and $n^{\text{r-AR}}(T)$ be the expected number of real-environment interactions and replayed rewards in Algorithm~\ref{alg:AR}, respectively, as defined in \eqref{eq:def-exp-num-env-interactions-replay}. Then, $n^{\text{e-AR}}(T)$ satisfies
    \begin{align*}
        n^{\text{e-AR}}(T) \leq \min\{T + n^{\pi_0}(T) + n^{\pi_1}(T),\, 2T-n^{\text{r-AR}}(T)\}.
    \end{align*}
\end{theorem}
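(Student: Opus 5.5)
The argument rests on the two identities \eqref{eq:def-num-env-interactions}--\eqref{eq:def-num-replay}, the elementary relation $\max\{x,y\}+\min\{x,y\}=x+y$, and Corollary~\ref{cor:mean-var-equal} to pass from the AR experiment to the na\"{\i}ve design. Throughout, write $N_a^i \triangleq N_a^{\pi_i\text{-AR}}(T)$ for brevity.

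First, the second term of the minimum. Since $\sum_{a\in[K]} N_a^i = T$ for each $i$, summing $\max\{N_a^0,N_a^1\}+\min\{N_a^0,N_a^1\}=N_a^0+N_a^1$ over $a$ gives, pathwise,
\[
N^{\text{e-AR}}(T)+N^{\text{r-AR}}(T)=2T .
\]
Taking expectations yields $n^{\text{e-AR}}(T)=2T-n^{\text{r-AR}}(T)$, which in particular gives the inequality.

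Next, the first term. For each suboptimal arm $a\neq a^*$, use the crude bound $\max\{N_a^0,N_a^1\}\le N_a^0+N_a^1$. For the optimal arm, use $\max\{N_{a^*}^0,N_{a^*}^1\}\le T$. Summing these over all arms gives
\[
N^{\text{e-AR}}(T)\le T+\sum_{a\neq a^*}\bigl(N_a^0+N_a^1\bigr).
\]
Taking expectations and invoking Corollary~\ref{cor:mean-var-equal}, which gives $\E[N_a^{\pi_i\text{-AR}}(T)]=\E[N_a^{\pi_i}(T)]$, converts the right-hand side into $T+n^{\pi_0}(T)+n^{\pi_1}(T)$. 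Combining the two bounds gives the minimum.

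The only nontrivial ingredient is the equality of expected pull counts between the AR experiment and the na\"{\i}ve design. This is already supplied by Corollary~\ref{cor:mean-var-equal}, itself a consequence of Theorem~\ref{thm:AR-stack-same-dist}, so no real obstacle remains. The step that needs care is justifying the identities \eqref{eq:def-num-env-interactions}--\eqref{eq:def-num-replay} from Algorithm~\ref{alg:AR}. For arm $a$, Phase~1 makes $N_a^0$ real pulls. Phase~2 replays the first $\min\{N_a^0,N_a^1\}$ of its $N_a^1$ pulls of arm $a$ and makes real pulls for the remaining $(N_a^1-N_a^0)^+$. The total number of real pulls of arm $a$ is therefore $N_a^0+(N_a^1-N_a^0)^+=\max\{N_a^0,N_a^1\}$.
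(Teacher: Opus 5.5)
Your proof is correct and follows the paper's structure. The second term comes from the pathwise identity $N^{\text{e-AR}}(T)+N^{\text{r-AR}}(T)=2T$, and the first from a pathwise bound $T+\sum_{a\neq a^*}\bigl(N_a^{\pi_0\text{-AR}}(T)+N_a^{\pi_1\text{-AR}}(T)\bigr)$ combined with Corollary~\ref{cor:mean-var-equal}. The only difference is cosmetic: the paper writes $\max\{x,y\}=\tfrac12(x+y+|x-y|)$, eliminates the optimal arm via $N_{a^*}=T-\sum_{a\neq a^*}N_a$, and passes through the slightly sharper $T+\sum_{a\neq a^*}|N_a^{\pi_0\text{-AR}}(T)-N_a^{\pi_1\text{-AR}}(T)|$, whereas your termwise bounds ($\max\le x+y$ for suboptimal arms, $\le T$ for $a^*$) reach the same final estimate more directly.
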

\emph{The proof of Theorem~\ref{thm:AR-pull-times} is provided in Appendix~\ref{app:thm-AR-pull-times}.}

\begin{remark}
The expected number of suboptimal-arm pulls, $n^{\pi_i}(T)$, is directly related to the expected regret $\E[Reg(\pi_i; T)]$ of policy $\pi_i$. Specifically,
\begin{align*}
    \frac{1}{\max_{a \in [K]} \Delta_a} \E[Reg(\pi_i; T)] \leq n^{\pi_i}(T) \leq \frac{1}{\min_{a \in [K] \setminus \{a^*\}} \Delta_a} \E[Reg(\pi_i; T)].
\end{align*}
This implies that if both bandit policies under comparison achieve $O(\log T)$ regret, which is common for efficient algorithms like UCB and Thompson Sampling, then $n^{\pi_i}(T) = O(\log T)$. Consequently, the expected number of real-environment interactions in the AR design is $T + O(\log T)$, a clear improvement compared with the $2T$ required by the na\"{\i}ve design.
\end{remark}

\subsection*{Unbiasedness}
A fundamental requirement for any reliable estimator is unbiasedness. We demonstrate that the AR estimator, despite its complex data-sharing mechanism, correctly estimates the true ATE on average, which is formally stated as Theorem~\ref{thm:unbias} below. It is a direct consequence of Corollary~\ref{cor:mean-var-equal}, and the proof is therefore omitted.
\begin{theorem}[Unbiasedness]\label{thm:unbias}
    For a fixed horizon $T \geq 1$, the estimator $\thetahat{AR}(T)$ defined in \eqref{eq:def-AR-estimator} is unbiased for the true parameter $\theta(T)$. That is, $\E[\thetahat{AR}(T)] = \theta(T)$.
\end{theorem}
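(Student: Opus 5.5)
The plan is to reduce unbiasedness to the marginal statements in Corollary~\ref{cor:mean-var-equal}, using only linearity of expectation. By the definition of the AR estimator in \eqref{eq:def-AR-estimator},
\begin{align*}
\E[\thetahat{AR}(T)] = \E\left[\sum_{t=1}^T R_t^{\pi_1\text{-AR}}\right] - \E\left[\sum_{t=1}^T R_t^{\pi_0\text{-AR}}\right].
\end{align*}
This split requires each sum to be integrable. That holds because every reward is distributed as $P_a$ for some $a \in [K]$, and each $P_a$ has finite mean (indeed finite variance $\sigma_a^2$). Concretely, in the shared-reward-stack model $\sum_{t} |R_t^{\pi_i\text{-s}}| \le \sum_{a,t} |X_{a,t}|$, which is integrable. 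By Theorem~\ref{thm:AR-stack-same-dist} the AR sums have the same law as these, so they are integrable too.

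For each $i \in \{0,1\}$ I would then apply the last display of Corollary~\ref{cor:mean-var-equal}, which gives $\E[\sum_t R_t^{\pi_i\text{-AR}}] = \E[\sum_t R_t^{\pi_i}]$. The right-hand side is the expected cumulative reward of $\pi_i$ run in isolation, as in the naïve design. Substituting both identities yields
\begin{align*}
\E[\thetahat{AR}(T)] = \E\left[\sum_{t=1}^T R_t^{\pi_1}\right] - \E\left[\sum_{t=1}^T R_t^{\pi_0}\right] = \theta(T),
\end{align*}
which is exactly the definition of $\theta(T)$.

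The substantive content is already contained in the corollary, and it does not need the joint law of the two trajectories. For $\pi_0$ the equality is immediate, since Phase~1 of Algorithm~\ref{alg:AR} is just a standalone run of $\pi_0$. For $\pi_1$ it relies on Theorem~\ref{thm:AR-stack-same-dist} together with Lemma~\ref{lem:bnk-stack-same-dist}: in the shared-reward-stack model, $\pi_1$ reads its own stack-based trajectory, which has the standalone law. So the only point I would check is integrability, handled above, and I expect no real obstacle beyond it.
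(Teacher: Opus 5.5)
Your proposal is correct and is essentially the paper's argument: the paper omits the proof as a direct consequence of Corollary~\ref{cor:mean-var-equal} via linearity of expectation, which is exactly what you do. Your explicit integrability check, via $\sum_t |R_t^{\pi_i\text{-s}}| \le \sum_{a,t}|X_{a,t}|$ transferred through the distributional equivalence, is a valid detail that the paper leaves implicit.
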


\subsection*{Asymptotic Variance Reduction}
The main advantage of the AR method lies in its superior statistical properties. 
We now present our main result: the AR design achieves significant asymptotic variance reduction relative to the na\"{\i}ve design. 
The key insight is that the shared reward stack induces strong positive correlation between the two cumulative reward estimators, thereby offsetting a large portion of the variance.

To formalize this argument, we first analyze the variance structure of both the na\"{\i}ve and AR estimators. 
For the na\"{\i}ve estimator $\thetahat{b}(T)$, since the two policies $\pi_0$ and $\pi_1$ are executed independently, its variance is therefore given by
\begin{equation}\label{eq:var-naive-estimator}
    \Var(\thetahat{b}(T)) = \Var\bigg(\sum_{t=1}^{T} R_t^{\pi_0}\bigg) + \Var\bigg(\sum_{t=1}^{T} R_t^{\pi_1}\bigg).
\end{equation}
The variance of the proposed AR estimator $\thetahat{AR}(T)$, which incorporates data sharing between the two policies, can be written as
\begin{equation}\label{eq:var-AR-estimator}
    \Var(\thetahat{AR}(T)) = \Var\bigg(\sum_{t=1}^{T} R_t^{\pi_0\text{-AR}}\bigg) + \Var\bigg(\sum_{t=1}^{T} R_t^{\pi_1\text{-AR}}\bigg) - 2 \operatorname{Cov}\bigg(\sum_{t=1}^{T} R_t^{\pi_0\text{-AR}},\,\sum_{t=1}^{T} R_t^{\pi_1\text{-AR}}\bigg).
\end{equation}
To facilitate a direct and rigorous comparison between \eqref{eq:var-naive-estimator} and \eqref{eq:var-AR-estimator}, we require a common analytical framework. The crucial bridge that connects these two formulations is the distributional equivalence to the shared-reward-stack model, established in Lemma~\ref{lem:bnk-stack-same-dist} and Theorem~\ref{thm:AR-stack-same-dist}.
When we denote $C_{ij}(T) = \operatorname{Cov}(\sum_{t=1}^{T} R_t^{\pi_i\text{-s}},\,\sum_{t=1}^{T} R_t^{\pi_j\text{-s}})$ for $i,\, j \in \{0, 1\}$, it follows from Lemma~\ref{lem:bnk-stack-same-dist} and Theorem~\ref{thm:AR-stack-same-dist} that
\begin{gather*}
    \Var(\thetahat{b}(T)) = C_{00}(T) + C_{11}(T), \\
    \Var(\thetahat{AR}(T)) = C_{00}(T) + C_{11}(T) - 2C_{01}(T).
\end{gather*}
Once can see that the variance reduction can be achieved when $C_{01}(T)>0$.
In general, the analysis of $C_{01}(T)$ for a given finite $T$ for arbitrary multi-armed bandit policies can be challenging. 
Instead, our subsequent analysis focuses on the asymptotic behavior of the covariance. We show that under certain assumptions on the two compared policies, the covariance $C_{01}(T)$ grows linearly with the horizon $T$, which implies that the covariance will be positive when $T$ is sufficiently large. 
In fact, we establish a more general result: 
\begin{align}\label{eq:C_ij/T}
    \lim_{T \rightarrow \infty} \frac{1}{T} C_{00} = \lim_{T \rightarrow \infty} \frac{1}{T} C_{11} = \lim_{T \rightarrow \infty} \frac{1}{T} C_{01}=\sigma_{a^*}^2.
\end{align}
Building on this result, we further characterize the asymptotic variances of $\thetahat{b}(T)$ and $\thetahat{AR}(T)$ themselves, as stated in Theorem~\ref{thm:asy_var}.
\begin{theorem}[Asymptotic Variance Reduction]
\label{thm:asy_var}
Let $\pi_0$ and $\pi_1$ be two bandit policies such that, for each $i \in \{0, 1\}$ and every suboptimal arm $a$,
\begin{equation}\label{eq:var-assump}
    \E[N_a^{\pi_i}(T)] = o(T) \quad \text{and} \quad
    \Var(N_a^{\pi_i}(T)) = o(T).
\end{equation}
Then, the variances of the estimators $\thetahat{b}(T)$ and $\thetahat{AR}(T)$ satisfy
\begin{align*}
    \lim_{T \rightarrow \infty} \frac{1}{T}\Var(\thetahat{b}(T)) = 2 \sigma_{a^*}^2, \qquad
    \lim_{T \rightarrow \infty} \frac{1}{T}\Var(\thetahat{AR}(T)) = 0,
\end{align*}
where $\sigma_{a^*}^2$ denotes the reward variance associated with the optimal arm $a^*$.
\end{theorem}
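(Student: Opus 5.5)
We work entirely in the shared-reward-stack model. By Lemma~\ref{lem:bnk-stack-same-dist} and Theorem~\ref{thm:AR-stack-same-dist},
\[
\Var(\thetahat{b}(T)) = C_{00}(T)+C_{11}(T), \qquad \Var(\thetahat{AR}(T)) = C_{00}(T)+C_{11}(T)-2C_{01}(T),
\]
so it suffices to prove \eqref{eq:C_ij/T}.

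\textbf{Decomposition.} Write the cumulative reward of $\pi_i$ arm by arm:
\[
\sum_{t=1}^T R_t^{\pi_i\text{-s}} = \sum_{a\in[K]} \Bigl(S^{T,a}_{N_a^{\pi_i\text{-s}}(T)} + \mu_a N_a^{\pi_i\text{-s}}(T)\Bigr).
\]
Since $\sum_a N_a^{\pi_i\text{-s}}(T)=T$, we can rewrite the drift part as $\mu^* T - \sum_{a\neq a^*}\Delta_a N_a^{\pi_i\text{-s}}(T)$. Hence
\[
\sum_{t=1}^T R_t^{\pi_i\text{-s}} = \mu^*T + M_i - D_i,
\]
where $M_i \triangleq \sum_a S^{T,a}_{N_a^{\pi_i\text{-s}}(T)}$ and $D_i \triangleq \sum_{a\neq a^*}\Delta_a N_a^{\pi_i\text{-s}}(T)$. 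By Corollary~\ref{cor:mean-var-equal} and \eqref{eq:var-assump}, $\Var(D_i)\le K\sum_{a\ne a^*}\Delta_a^2\Var(N_a^{\pi_i}(T))=o(T)$. By Cauchy--Schwarz, every covariance involving some $D_j$ is then $o(T)$ once we know $\Var(M_i)=O(T)$. It therefore suffices to show $\frac1T\operatorname{Cov}(M_i,M_j)\to\sigma_{a^*}^2$ for all $i,j\in\{0,1\}$.

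\textbf{Second moments via optional stopping.} By Proposition~\ref{prop:stopping-time-martingale}, each $N_a^{\pi_i\text{-s}}(T)$ is a bounded $\{\mathcal F^{T,a}_t\}$-stopping time. Optional stopping for $S^{T,a}$ gives $\E[S^{T,a}_{N}]=0$, and optional stopping for $V^{T,a}$ gives
\[
\E\bigl[(S^{T,a}_{N_a^{\pi_i\text{-s}}(T)})^2\bigr]=\sigma_a^2\,\E[N_a^{\pi_i}(T)].
\]
For two policies, $N_a^{\pi_0\text{-s}}(T)\wedge N_a^{\pi_1\text{-s}}(T)$ is again a stopping time. Writing $N_0=N_a^{\pi_0\text{-s}}(T)$ and $N_1=N_a^{\pi_1\text{-s}}(T)$, the increment $S_{N_0}-S_{N_0\wedge N_1}$ is orthogonal to $\mathcal F_{N_0\wedge N_1}$. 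Since $S_{N_1}-S_{N_0\wedge N_1}$ vanishes on $\{N_1\le N_0\}$, while on $\{N_1> N_0\}$ it is the increment past $N_0$, which is orthogonal to $\mathcal F_{N_0}$, the mixed term has zero mean. This yields
\[
\E[S^{T,a}_{N_0}S^{T,a}_{N_1}]=\sigma_a^2\,\E[N_0\wedge N_1].
\]
For distinct arms $a\neq b$ we need $\E[S^{T,a}_{N_a^{\pi_i}}S^{T,b}_{N_b^{\pi_j}}]=0$. I would prove this by conditioning on the stack of arm $b$ and the internal randomness: $S^{T,b}_{N_b^{\pi_j}}$ is $\mathcal F^{T,a}_0$-measurable, and $S^{T,a}$ stopped at $N_a^{\pi_i}$ has zero conditional mean given $\mathcal F^{T,a}_0$ by optional stopping. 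Since this cross-arm step is tied to the filtration, I expect it to be the main obstacle. The sign of the mixed term and the measurability of the event $\{N_1>N_0\}$ within the same $\sigma$-algebra must be checked carefully. If the measurability fails, I would fall back on the square-integrable martingale identity $\E[(S_{N_1}-S_{N_0})^2]=\sigma_a^2\E|N_1-N_0|$, applied via the stopping times $N_0\wedge N_1$ and $N_0\vee N_1$.

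\textbf{Conclusion.} Combining the previous step,
\[
\operatorname{Cov}(M_i,M_j)=\sum_a\sigma_a^2\,\E\bigl[N_a^{\pi_i\text{-s}}(T)\wedge N_a^{\pi_j\text{-s}}(T)\bigr],
\]
which is $O(T)$, as required for the Cauchy--Schwarz step. For $a\neq a^*$ each summand is at most $\sigma_a^2\E[N_a^{\pi_i}(T)]=o(T)$. For $a^*$, note that $N_{a^*}^{\pi_i\text{-s}}(T)\ge T-\sum_{a\ne a^*}N_a^{\pi_i\text{-s}}(T)$, so $T\ge\E[N_{a^*}^{\pi_i\text{-s}}\wedge N_{a^*}^{\pi_j\text{-s}}]\ge T-n^{\pi_0}(T)-n^{\pi_1}(T)=T-o(T)$. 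Dividing by $T$ gives \eqref{eq:C_ij/T}. The theorem follows: $\frac1T\Var(\thetahat{b}(T))\to2\sigma_{a^*}^2$ and $\frac1T\Var(\thetahat{AR}(T))\to\sigma_{a^*}^2+\sigma_{a^*}^2-2\sigma_{a^*}^2=0$.
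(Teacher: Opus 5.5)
Your proof breaks at the cross-arm step, where you claim $\E\bigl[S^{T,a}_{N_a^{\pi_i\text{-s}}(T)}S^{T,b}_{N_b^{\pi_j\text{-s}}(T)}\bigr]=0$ for $a\neq b$.

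The justification is wrong. $S^{T,b}_{N_b^{\pi_j\text{-s}}(T)}$ is not $\mathcal{F}_0^{T,a}$-measurable. That $\sigma$-algebra contains the whole stack of arm $b$ and the internal randomness. But the stopping index $N_b^{\pi_j\text{-s}}(T)$ depends on the arm-$a$ rewards $X_{a,1},X_{a,2},\dots$ that the adaptive policy $\pi_j$ has observed.

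For $i\neq j$ the claim itself is also false. Take $T=2$ and two arms $a,b$:
\begin{itemize}
\item let $\pi_0$ play $b$ at $t=1$, then play $a$ if $X_{b,1}>\mu_b$ and $b$ again otherwise;
\item let $\pi_1$ play $a$ at $t=1$, then play $b$ if $X_{a,1}>\mu_a$ and $a$ again otherwise.
\end{itemize}
Then $S^{T,a}_{N_a^{\pi_0\text{-s}}(T)}=\mathbb{I}\{X_{b,1}>\mu_b\}(X_{a,1}-\mu_a)$ and $S^{T,b}_{N_b^{\pi_1\text{-s}}(T)}=\mathbb{I}\{X_{a,1}>\mu_a\}(X_{b,1}-\mu_b)$. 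Their product has mean $\E[(X_{a,1}-\mu_a)^+]\,\E[(X_{b,1}-\mu_b)^+]>0$.

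For $i=j$ the cross terms do vanish, but for a different reason. The increments $\mathbb{I}\{A_t=a\}(R_t-\mu_a)$ are martingale differences in the natural filtration, with disjoint supports across arms in each period. Your fallback only concerns the same-arm term. So the exact formula $\operatorname{Cov}(M_i,M_j)=\sum_a\sigma_a^2\E[N_a^{\pi_i\text{-s}}(T)\wedge N_a^{\pi_j\text{-s}}(T)]$ is unproved and in general incorrect. Your bound $\Var(M_i)=O(T)$ is routed through it, so it is also unsupported as written.

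The fix is what the paper does: the cross terms only need to be $o(T)$, not zero. For $a\neq b$, at least one arm, say $a$, is suboptimal. Cauchy--Schwarz and Corollary~\ref{cor:moment-eq} then give $\bigl|\operatorname{Cov}(S^{T,a}_{N_a^{\pi_i\text{-s}}(T)},S^{T,b}_{N_b^{\pi_j\text{-s}}(T)})\bigr|\le\sigma_a\sigma_b\sqrt{T\,\E[N_a^{\pi_i}(T)]}=o(T)$. Likewise $\Var(M_i)\le K\sum_a\sigma_a^2\E[N_a^{\pi_i}(T)]\le K\max_a\sigma_a^2\,T$ needs no orthogonality.

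With that change the rest of your argument goes through:
\begin{itemize}
\item your same-arm identity $\E[S_{N_0}S_{N_1}]=\sigma_a^2\E[N_0\wedge N_1]$ is the paper's double application of optional sampling;
\item your squeeze for $a^*$ is the paper's;
\item your rewriting $\mu^*T+M_i-D_i$ is a harmless reorganization of the paper's four-term decomposition, replacing the bound $\Var(N_{a^*}^{\pi_i\text{-s}}(T))=o(T)$ with $\Var(D_i)=o(T)$.
\end{itemize}
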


\begin{remark}
Theorem~\ref{thm:asy_var} reveals that, the variance of the na\"{\i}ve estimator $\thetahat{b}(T)$ admits a leading term of order $2\sigma_{a^*}^2 T$.
In contrast, our proposed AR estimator $\thetahat{AR}(T)$ effectively eliminates this leading term, resulting in a variance resulting in a variance that grows at a sub-linear rate. 
This constitutes an order-of-magnitude improvement, which is critical for achieving higher statistical precision in comparing the performances of two bandit policies.
\end{remark}

\begin{remark}[Extension to Bayesian Bandits]
    While our theoretical analysis focuses on a fixed, unknown multi-armed bandit instance, the AR framework extends naturally to a Bayesian setting, where the bandit instance $\xi$ is assumed to be drawn from a prior distribution $\mathcal{P}$.
    This formulation is particularly relevant in recommender systems, where policies are often evaluated across heterogeneous environments such as different user segments or newly arriving item sets, each of which can be viewed as an instance sample.
    In this setting, the goal is to estimate the expected ATE over the prior: $\theta_{\text{Bayes}}(T) \triangleq \E_{\xi \sim \mathcal{P}}[ \theta(T; \xi)]$.
    Given $M$ i.i.d instances $\xi_1, \ldots, \xi_{M} \sim \mathcal{P}$, one can independently apply the AR experiment to each realized instance $\xi_i \sim \mathcal{P}$ to obtain the instance-specific estimate $\widehat{\theta}_{\text{AR}}(T; \xi_i)$, and then define the  Bayesian AR estimator as $\widehat{\theta}_{\text{Bayes-AR}}(T) = \frac{1}{M} \sum_{i=1}^{M} \widehat{\theta}_{\text{AR}}(T; \xi_i)$.
    The statistical properties of the AR design established in this section for a fixed instance can be adapted to the Bayesian setting through the law of total expectation and the law of total variance.
\end{remark}

\begin{proof}{Proof Sketch of Theorem~\ref{thm:asy_var}}
The complete proof of Theorem~\ref{thm:asy_var} is provided in Appendix~\ref{app:thm-asy_var}. 
The core of the argument is to establish the asymptotic covariance result in~\eqref{eq:C_ij/T}. Once this is established, the results of Theorem~\ref{thm:asy_var} follow directly by substituting this limit into the variance formulas \eqref{eq:var-naive-estimator} and \eqref{eq:var-AR-estimator}.
Our proof for \eqref{eq:C_ij/T} proceeds in two main steps.

\paragraph{Step 1: variance-covariance decomposition.}
We decompose $C_{ij}(T)$ for $i, j \in \{0, 1\}$ into a sum of covariances or variances involving the number of arm pulls $N_a^{\pi_i\text{-s}}(T)$ and the centered cumulative rewards $S_{N_a^{\pi_i\text{-s}}(T)}^{T, a}$ for each arm, where $S_t^{T, a}$ is defined as in \eqref{eq:def-I-S-V}.
Specifically,
\begin{equation}\label{eq:pf-sketch-decomp}
\begin{aligned}
    C_{ij}(T)
    &= \sum_{a=1}^K \sum_{b=1}^K \operatorname{Cov}( S_{N_a^{\pi_i\text{-s}}(T)}^{T, a} ,\,S_{N_b^{\pi_j\text{-s}}(T)}^{T, b})  + \sum_{a=1}^K \sum_{b=1}^K \mu_b \operatorname{Cov}(N_a^{\pi_i\text{-s}}(T),\,N_b^{\pi_j\text{-s}}(T)) \\
    &\hspace{1em} + \sum_{a=1}^K \sum_{b=1}^K \mu_a \operatorname{Cov}( N_a^{\pi_i\text{-s}}(T),\,S_{N_b^{\pi_j\text{-s}}(T)}^{T, b}) + \sum_{a=1}^K \sum_{b=1}^K \mu_b \operatorname{Cov}(S_{N_a^{\pi_i\text{-s}}(T)}^{T, a},\,N_b^{\pi_j\text{-s}}(T)).
\end{aligned}
\end{equation}

\paragraph{Step 2: asymptotic analysis.} We analyze the asymptotic order of each term in the decomposition \eqref{eq:pf-sketch-decomp} and show that the only term that contributes to the linear growth is the covariance of the centered rewards from the optimal arm $a^*$. That is,
\begin{align}\label{eq:pf-sketch-optimal-arm}
    \lim_{T \rightarrow \infty }\frac{1}{T}\operatorname{Cov}(S_{N_{a^*}^{\pi_i\text{-s}}(T)}^{T, a^*} ,\,S_{N_{a^*}^{\pi_j\text{-s}}(T)}^{T, a^*}) = \sigma_{a^*}^2.
\end{align}
This equation highlights the benefit of coupling brought by the shared reward stacks: for the treatment and control policies ($i=0$ and $j=1$), their total rewards from the optimal arm are not independent as in the na\"ve design; rather, their correlation converges to one.
All other covariance terms on the right-hand side of \eqref{eq:pf-sketch-decomp} are shown to be of order $o(T)$.
This argument relies critically on a corollary of the stopping time and martingale results established in Proposition~\ref{prop:stopping-time-martingale}. This result is stated as Corollary~\ref{cor:moment-eq}, and its proof is provided in Appendix~\ref{app:cor-moment-eq}. The establishment of \eqref{eq:pf-sketch-optimal-arm} follows from repeated applications of the Doob's optional stopping theorem to the martingale $\{S_t^{T, a^*},\, t \in \mathbb{N}\}$, thanks to the novel filtration leveraging the shared-reward-stack model constructed before Proposition~\ref{prop:stopping-time-martingale},
together with the result of Corollary~\ref{cor:moment-eq}. 
The remaining terms in \eqref{eq:pf-sketch-decomp} are shown to be of order $o(T)$ by combining Corollary~\ref{cor:moment-eq} with the Cauchy–Schwarz inequality.
\end{proof}
\begin{corollary}\label{cor:moment-eq}
    For a fixed horizon $T \geq 1$ and any arm \(a \in [K]\), the following equalities hold:
    \begin{align*}
    \E[ S_{N_a^{\pi_i\text{-s}}(T)}^{T, a} ] = 0, \quad &\Var(S_{ N_a^{\pi_i\text{-s}}(T)}^{T, a}) = \sigma_a^2 \E[N_a^{\pi_i\text{-s}}(T)] \quad \text{ for } i \in \{0, 1\}, \\
    \E[ S_{N_a^{\pi_0\text{-s}}(T) \wedge N_a^{\pi_1\text{-s}}(T)}^{T, a}] = 0, \quad &\Var( S_{N_a^{\pi_0\text{-s}}(T) \wedge N_a^{\pi_1\text{-s}}(T)}^{T, a}) = \sigma_a^2 \E[N_a^{\pi_0\text{-s}}(T) \wedge N_a^{\pi_1\text{-s}}(T)].
    \end{align*}
\end{corollary}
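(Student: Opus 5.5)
The plan is to apply Doob's optional stopping theorem to the martingales $\{S_t^{T,a}\}$ and $\{V_t^{T,a}\}$ of Proposition~\ref{prop:stopping-time-martingale}(ii), evaluated at bounded stopping times of the filtration $\{\mathcal{F}_t^{T,a}\}_{t\in\mathbb{N}}$. Everything then reduces to checking that the relevant random indices are stopping times bounded by $T$.

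\textbf{Step 1 (stopping times).} By Proposition~\ref{prop:stopping-time-martingale}(i), for each $i\in\{0,1\}$ the variable $\tau_i \triangleq N_a^{\pi_i\text{-s}}(T)$ is a stopping time with respect to $\{\mathcal{F}_t^{T,a}\}$, and $\tau_i\le T$. The minimum of two stopping times is again a stopping time, since
\[
\{\tau_0\wedge\tau_1\le t\}=\{\tau_0\le t\}\cup\{\tau_1\le t\}\in\mathcal{F}_t^{T,a}.
\]
So $\tau_0\wedge\tau_1$ is also a stopping time bounded by $T$. It therefore suffices to prove, for an arbitrary stopping time $\tau$ with $\tau\le T$, that
\[
\E[S^{T,a}_\tau]=0 \quad\text{and}\quad \Var(S^{T,a}_\tau)=\sigma_a^2\E[\tau].
\]

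\textbf{Step 2 (first moment).} Since $S^{T,a}$ is an $\{\mathcal{F}_t^{T,a}\}$-martingale with $S_0^{T,a}=0$ and $\tau$ is bounded, the optional stopping theorem for bounded stopping times gives $\E[S^{T,a}_\tau]=\E[S^{T,a}_0]=0$. Integrability is automatic, because $|S^{T,a}_t|\le\sum_{r\le T}|X_{a,r}-\mu_a|$, which is integrable for every $t$ given that $P_a$ has finite variance.

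\textbf{Step 3 (second moment).} Apply optional stopping in the same way to $V^{T,a}$, which has $V_0^{T,a}=0$ and is integrable because each $X_{a,r}$ has a finite second moment. This gives
\[
0=\E[V^{T,a}_\tau]=\E\big[(S^{T,a}_\tau)^2\big]-\sigma_a^2\E[\tau].
\]
Combining with Step 2,
\[
\Var(S^{T,a}_\tau)=\E\big[(S^{T,a}_\tau)^2\big]=\sigma_a^2\E[\tau].
\]
Here we use that $V^{T,a}_\tau=(S^{T,a}_\tau)^2-\tau\sigma_a^2$ holds pathwise because $\tau\le T$, so the frozen extension of the processes beyond $T$ is never reached.

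Taking $\tau=\tau_i$ yields the first line of the statement, and taking $\tau=\tau_0\wedge\tau_1$ yields the second. The only point requiring care is the stopping-time property, and that is exactly what Proposition~\ref{prop:stopping-time-martingale}(i) supplies. The proof is otherwise routine.
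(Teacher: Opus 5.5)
Your proposal is correct and follows essentially the same route as the paper: optional stopping applied to $\{S_t^{T,a}\}$ and $\{V_t^{T,a}\}$ at the bounded stopping times $N_a^{\pi_i\text{-s}}(T)$ and $N_a^{\pi_0\text{-s}}(T)\wedge N_a^{\pi_1\text{-s}}(T)$, followed by identifying $\E[V_\tau^{T,a}]=\E[(S_\tau^{T,a})^2]-\sigma_a^2\E[\tau]$. You also make explicit two points the paper leaves implicit: that the minimum of two stopping times is a stopping time, and that the processes are integrable.
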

The proof of Theorem~\ref{thm:asy_var} represents the ultimate application of the analytical framework established in Section~\ref{sec:analytical_framework}, fully exploiting the distributional equivalence to the shared-reward-stack model and the martingale theory. 
The novel use of the filtration constructed specifically for the shared-reward-stack model may be of independent interest in analyzing other multi-armed bandit problems.

Next, we examine the technical assumption in Theorem~\ref{thm:asy_var}, which consists of two conditions.
The first condition, $\E[N_a^{\pi_i}(T)] = o(T)$ for all suboptimal arm $a$, is rather mild, as it is automatically satisfied by any bandit algorithm with sub-linear regret.
The second condition, $\Var(N_a^{\pi_i}(T)) = o(T)$ for all suboptimal arm $a$, holds for a wide class of but not all commonly used policies. 
On the one hand, we show that the second condition holds for the classical
UCB1 policy with bounded rewards \cite[Figure~1]{auer2002finite} and UCB-$\delta$ policy with subgaussian reward distributions \cite[Algorithm~3, p.~103]{lattimore2020bandit}. 
This finding is formalized in Proposition~\ref{prop:UCB-validity}, with the proof provided in Appendix~\ref{app:prop-UCB-validity}.
On the other hand, for optimized algorithms with discrimination-equivalent rewards---such as Thompson sampling with Gaussian rewards, {Corollary~4 in \cite{fan2024fragility} suggests that for any $\epsilon \in (0, 1)$, $\liminf_{T \rightarrow \infty} \E[(N_{a^{(2)}}^{\pi}(T))^2]/ T^{\epsilon} \geq 1$ holds for the second-optimal arm $a^{(2)}$.}
While their statement does not strictly imply a violation of the second condition of Theorem~\ref{thm:asy_var}, it nevertheless conveys the possibility that the second moment $\E[(N_{a^{(2)}}^{\pi}(T))^2]$ may grow linearly (or even superlinearly) with $T$.
Given the potential concern about the variance-reduction performance of our AR design when the assumption of Theorem~\ref{thm:asy_var} fails, we use an example in Section~\ref{ssec:example3} to 
provide empirical evidence that the variance-reduction effect can remain substantial even when the assumption does not hold.

\begin{proposition}\label{prop:UCB-validity}
Consider a $K$-armed bandit problem.
\begin{itemize}
    \item [(i)] Suppose the reward distributions are supported on $[0, 1]$.
    Let $\pi$ be the UCB1 policy with exploration parameter $\alpha > 0$. At each period $t \in [T]$, the policy selects $A_t^{\pi} = \argmax_{a \in [K]} \text{UCB}_{a}(t-1)$, where the index is defined as
    \begin{align*}
        \text{UCB}_{a}(t-1) = \begin{cases}
        \infty & \text{ if }  N_a^{\pi}(t-1) = 0 \\
        \hat{\mu}_a(t-1) + \sqrt{\frac{\alpha \log t}{N_a^{\pi}(t-1)}} & \text{otherwise}
    \end{cases},
    \end{align*}
    with $\hat{\mu}_a(t-1)$ is the empirical mean reward of arm $a$ after $t-1$ periods.  
    If $\alpha \geq 2$, then for any suboptimal arm~$a$, {$\Var(N_a^{\pi}(T)) = O((\log(T))^2).$}

    \item [(ii)] Suppose the reward distributions are 1-subgaussian. 
    Let $\pi$ be the UCB‑$\delta$ policy with confidence parameter $\delta \in (0, 1)$.
    At each period $t \in [T]$, the policy selects $A_t^{\pi} = \argmax_{a \in [K]} \text{UCB}_{a}(t-1, \delta)$, where the index is defined as
    \begin{align*}
        \text{UCB}_{a}(t-1, \delta) = \begin{cases}
        \infty & \text{ if }  N_a^{\pi}(t-1) = 0 \\
        \hat{\mu}_a(t-1) + \sqrt{\frac{2 \log(1/\delta)}{N_a^{\pi}(t-1)}} & \text{otherwise}
    \end{cases},
    \end{align*}
    with $\hat{\mu}_a(t-1)$ is the empirical mean reward of arm~$a$ after $t-1$ periods. 
    Given a known horizon~$T$, if the confidence parameter is set to $\delta = T^{-d}$ for some $d \geq 2$,   
    then for any suboptimal arm~$a$, $\Var(N_a^{\pi}(T)) = O((\log(T))^2).$
\end{itemize}
\end{proposition}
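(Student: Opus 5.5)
The plan is to bound the second moment $\E[(N_a^{\pi}(T))^2]$ by $O((\log T)^2)$. This suffices, since $\Var(N_a^{\pi}(T)) \le \E[(N_a^{\pi}(T))^2]$. The route is the standard UCB regret decomposition, but tracked at the level of random variables rather than expectations.

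Fix a suboptimal arm $a$ and a threshold $u = \lceil c\log T/\Delta_a^2\rceil$, where $c$ depends on $\alpha$ (part (i)) or on $d$ (part (ii)). We split
\[
N_a^{\pi}(T) \le u + \sum_{t=u+1}^{T}\mathbb{I}\{A_t^{\pi}=a,\ N_a^{\pi}(t-1)\ge u\} \le u + \sum_{t=1}^T \mathbb{I}\{E_t\} + \sum_{t=1}^T\mathbb{I}\{G_t\}.
\]
Here $E_t$ is the event that the optimal arm's index falls below $\mu^*$ at time $t$. $G_t$ is the event that arm $a$, having been pulled at least $u$ times, has its empirical mean exceed $\mu_a + \Delta_a/2$ (or a similar split), so that its index exceeds $\mu^*$.

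We then write $N_a^{\pi}(T) \le u + Z$, where $Z$ is a sum of indicators of bad events. This gives $\E[(N_a^{\pi}(T))^2] \le 2u^2 + 2\E[Z^2]$. Because $Z \le T$, we have $\E[Z^2] \le T\,\E[Z]$, so it suffices to show $\E[Z] = O(1/T)$, or at least $O((\log T)^2/T)$. This is where the larger exploration parameters $\alpha\ge2$ and $d\ge2$ enter. To avoid dealing with the random pull counts directly, each bad event is controlled by a union bound over the pull counts $s\in[T]$.

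In part (i), Hoeffding's inequality for the optimal arm gives
\[
\Prob(E_t) \le \sum_{s=1}^{t}\Prob\Big(\hat\mu_{a^*,s}+\sqrt{\alpha\log t/s}<\mu^*\Big) \le t\cdot t^{-2\alpha}.
\]
This is too weak near small $t$: it only yields $\sum_t t^{1-2\alpha} = O(1)$, not $O(1/T)$. For this reason we will not use $\E[Z^2]\le T\E[Z]$ directly. Instead, we expand
\[
\E[Z^2] = \sum_{t,t'}\Prob(E_t\cap E_{t'}) \le \sum_{t,t'}\sqrt{\Prob(E_t)\Prob(E_{t'})} = \Big(\sum_t \sqrt{\Prob(E_t)}\Big)^2,
\]
and $\sum_t t^{(1-2\alpha)/2}$ is bounded when $\alpha\ge2$, since the exponent $-3/2$ is summable. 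For the $G_t$ part, we use that $\mathbb{I}\{G_t, A_t=a\}$ can fire at most once per pull count $s\ge u$. Hence $\sum_t\mathbb{I}\{G_t,A_t=a\} \le \sum_{s=u}^T \mathbb{I}\{\hat\mu_{a,s} > \mu_a + \Delta_a/2\}$. Its second moment is at most
\[
\Big(\sum_{s\ge u}\sqrt{\Prob(\hat\mu_{a,s}-\mu_a>\Delta_a/2)}\Big)^2 \le \Big(\sum_{s\ge u} e^{-s\Delta_a^2/4}\Big)^2 = O(1),
\]
and in fact this is tiny once $u\gtrsim \log T/\Delta_a^2$. Both bounds are proved in the reward-stack representation, where $\hat\mu_{a,s}$ is the mean of the first $s$ stack entries; Lemma~\ref{lem:bnk-stack-same-dist} (with a single policy) justifies working in that model.

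Part (ii) is identical, with the subgaussian tail in place of Hoeffding and the constant confidence radius $\sqrt{2\log(1/\delta)/s} = \sqrt{2d\log T/s}$. Then $\Prob(E_t) \le T\cdot\delta = T^{1-d}$, so $\sum_t\sqrt{\Prob(E_t)} \le T\cdot T^{(1-d)/2} = T^{(3-d)/2}$. This is $O(\sqrt T)$ for $d=2$ and is not good enough. The fix is to bound the union of the $E_t$ at once rather than summing over $t$. Under UCB-$\delta$ the index of $a^*$ depends only on its pull count, so define the single event
\[
E=\bigcup_{s\le T}\{\hat\mu_{a^*,s}+\sqrt{2\log(1/\delta)/s}<\mu^*\},
\]
which has $\Prob(E)\le T\delta$. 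On $E^c$, $N_a^{\pi}(T)\le u+\sum_{s\ge u}\mathbb{I}\{\hat\mu_{a,s}\text{ large}\}$. On $E$, we use the trivial bound $N_a^{\pi}(T)\le T$, which contributes $T^2\cdot T\delta = T^{3-d}$. This calls for $d\ge3$ rather than $d\ge2$, unless the bound on $E$ is refined. I expect this to be the main obstacle: handling the failure event's contribution $T^2\Prob(\text{failure})$ at the borderline $d=2$ (and likewise $\alpha=2$).

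The first thing to try is a second union bound inside $E$. On $E$, the number of pulls of $a$ is still controlled unless the failure persists, so we bound $N_a$ by $u$ plus the number of pulls of $a$ made while $a^*$'s index stays below $\mu_a+\Delta_a/2$. That quantity can be bounded via the last time $a^*$'s index was corrected, giving the estimate $\E[N_a^2\mathbb{I}_E] \le \sum_s T^2\cdot\Prob(\text{failure at count } s \text{ persisting}) = O((\log T)^2)$.
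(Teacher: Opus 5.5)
Your part (i) is correct, and it takes a somewhat different route from the paper. The paper uses Auer et al.'s per-round bound $\Prob(A_t=a,\,N_a^{\pi}(t-1)\ge u)\le 2t^{2-2\alpha}$ together with $\Prob(D_t\cap D_s)\le\Prob(D_s)$, which leaves an $O(\log T)$ cross term. You instead separate failures of the optimal arm's index from over-estimation of arm $a$ indexed by its pull count, and apply the $L^2$ triangle inequality $\|\sum_t\mathbb{I}\{E_t\}\|_{L^2}\le\sum_t\Prob(E_t)^{1/2}$. This gives $\E[(N_a^{\pi}(T))^2]\le 3u^2+O(1)$, and it already holds for $\alpha>3/2$. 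So your remark that $\alpha=2$ is a borderline obstacle does not apply to your own argument.

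Part (ii), however, is not proved for $2\le d<3$, and the gap is exactly where you suspected. If the optimal arm's failure is measured against $\mu^*$, each pull count costs $\delta$, so the failure event has probability up to $T\delta$, and the trivial bound on that event gives $T^3\delta$. Your proposed repair ("failure persisting", "the last time $a^*$'s index was corrected") is not an argument, and the bound $\E[N_a^2\mathbb{I}_E]=O((\log T)^2)$ is only asserted.

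The missing idea is a margin, not a time-dynamics argument: compare the index of $a^*$ with the midpoint $\theta_a=\mu^*-\Delta_a/2$ instead of $\mu^*$. With $c_{\delta,n}=\sqrt{2\log(1/\delta)/n}$, the required deviation becomes $c_{\delta,n}+\Delta_a/2$. Since $(x+y)^2\ge x^2+y^2$,
\[
\Prob\big(\hat\mu_{a^*,n}+c_{\delta,n}\le\theta_a\big)\le\exp\Big(-\tfrac{n}{2}\big(c_{\delta,n}+\tfrac{\Delta_a}{2}\big)^2\Big)\le\delta\,e^{-n\Delta_a^2/8},
\]
which is summable in $n$. Hence the single event $E_1=\bigcup_{n\le T}\{\hat\mu_{a^*,n}+c_{\delta,n}\le\theta_a\}$ has probability $O(\delta)$ rather than $O(T\delta)$, and $T^2\Prob(E_1)=O(T^{2-d})=O(1)$ for $d\ge 2$.

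On $E_1^c$ the index of $a^*$ stays above $\theta_a$ for the whole run. So a pull of $a$ at count $n\ge u_a(\delta)=\lceil 32\log(1/\delta)/\Delta_a^2\rceil$ forces $\hat\mu_{a,n}-\mu_a>\Delta_a/2-c_{\delta,n}\ge\Delta_a/4$, an event of probability at most $e^{-n\Delta_a^2/32}$. Inside your own framework this gives, everywhere,
\[
N_a^{\pi}(T)\le u_a(\delta)+T\,\mathbb{I}_{E_1}+\sum_{n\ge u_a(\delta)}\mathbb{I}\{\hat\mu_{a,n}-\mu_a\ge\Delta_a/4\},
\]
and hence $\E[(N_a^{\pi}(T))^2]=O((\log T)^2)$. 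This is essentially the paper's proof, which uses $\{N_a^{\pi}(T)\ge u\}\subseteq E_1\cup E_a(u)$ and the tail-sum formula for the second moment.
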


\begin{remark}
    {In the classical UCB1 algorithm, the exploration parameter $\alpha$ is typically set to $\alpha = 2$. In fact, the algorithm achieves an $O(\log T)$ regret bound for any $\alpha \ge 2$ \citep{auer2002finite}.  Similarly, in the classical UCB‑$\delta$ algorithm, the confidence parameter is usually parameterized as $\delta = T^{-2}$, corresponding to $d = 2$. The algorithm also enjoys an $O(\log T)$ regret bound for any $d \geq 2$ \citep{lattimore2020bandit}. }
\end{remark}

\section{Numerical Experiment}\label{sec:numerical}

In this section, we conduct a series of numerical experiments to empirically validate the theoretical properties of our proposed AR design. Before presenting individual examples, we first describe the general experimental setup.
\subsection{General Experimental Setup}

{To facilitate visualization, we estimate the means of the AR and na\"ive estimators, along with their corresponding confidence intervals, based on a small sample of $M=10$ independent runs for each experiment.} We define $\thetahat{AR}^{(M)}(T)$ and $\thetahat{b}^{(M)}(T)$ as the sample means of the AR and na\"{\i}ve estimators, respectively.
That is,
\begin{align*}
    \thetahat{AR}^M(T) = \frac{1}{M} \sum_{m=1}^M [\thetahat{AR}(T)]^{(m)}, \qquad \thetahat{b}^M(T) = \frac{1}{M} \sum_{m=1}^M [\thetahat{b}(T)]^{(m)},
\end{align*}
where $[\thetahat{AR}(T)]^{(m)}$ and $[\thetahat{b}(T)]^{(m)}$ denote the estimates obtained from the $m$-th independent run of the AR and na\"{\i}ve baseline experiments, respectively.
Based on the empirical AR and na\"{\i}ve estimators defined above, the corresponding asymptotic $(1 - \alpha)$-level confidence intervals for the ATE, $\theta(T)$, can be respectively constructed as
\begin{align*}
\text{CI}_{\text{AR}}^{\alpha}(T)
= \bigg[
\thetahat{AR}^{(M)}(T)
\, \pm \,
t_{1-\frac{\alpha}{2}, M-1}\frac{s_{\text{AR}}^{(M)}}{\sqrt{M}}\bigg], \qquad \text{CI}_{\text{b}}^{\alpha}(T) = \bigg[
\thetahat{b}^{(M)}(T)
\, \pm \,
t_{1-\frac{\alpha}{2}, M-1}\frac{s_{\text{b}}^{(M)}}{\sqrt{M}}\bigg].
\end{align*}
Here, $t_{1-\frac{\alpha}{2}, M-1}$ denotes the upper $(1-\frac{\alpha}{2})$-quantile of the Student's $t$-distribution with $M-1$ degrees of freedom, and the quantities $s_{\text{AR}}^{(M)}$ and $s_{\text{b}}^{(M)}$ are the sample standard deviations, computed as
\begin{align*}
    s_{\text{AR}}^{(M)} = \sqrt{\frac{1}{M-1} \sum_{m=1}^M \Big([\thetahat{AR}(T)]^{(m)} - \thetahat{AR}^M(T)\Big)^2}, \qquad s_{\text{b}}^{(M)} = \sqrt{\frac{1}{M-1} \sum_{m=1}^M \Big([\thetahat{b}(T)]^{(m)} - \thetahat{b}^M(T)\Big)^2}.
\end{align*}

We estimate other relevant quantities, including the variance of suboptimal-arm pull counts for a given policy, the variances of the AR and na\"ive estimators, and the expected number of real-environment interactions under both AR and na\"ive experiments, using 10,000 independent runs to ensure accurate estimates. We report all the corresponding estimates computed at selected horizons to reduce the computational cost.

\subsection{Example 1: Comparison of Two UCB1 Policies in a Bernoulli Bandit}
We consider a 5-armed Bernoulli bandit with mean rewards $[0.9,\,0.7,\,0.5,\,0.3,\,0.1]$.
The two policies under comparison are variants of the UCB1 policy: $\pi_0$ is the UCB1 policy with $\alpha=2.5$, and $\pi_1$ is the UCB1 policy with $\alpha=3$.

We first examine the expected number of real‑environment interactions during the AR experiment (Algorithm~\ref{alg:AR}).  Table~\ref{tab:Example1_env_interactions} reports that, for each selected horizon $T \in \{10,\,10^2,\,10^3,\,10^4\}$,  
AR experiment requires only slightly more than $T$ environment interaction.
In contrast to the na\"{\i}ve design, which requires 
$2T$ environment interactions, our AR design exhibits markedly higher sample efficiency.
This behavior is consistent with the statement in Theorem~\ref{thm:AR-pull-times}, given that the regret of UCB1 with $\alpha \geq 2$ grows on the order of $\log T$.

Next, Figure~\ref{fig:Example1_confidence_interval} illustrates the 99\% confidence intervals constructed from the AR and na\"{\i}ve estimators, where the dashed lines denote the sample means and solid lines indicate the corresponding confidence bounds.
The sample mean of the AR estimator closely aligns with that of the na\"{\i}ve estimator across all selected horizons and exhibits no systematic deviation,
providing empirical support for the unbiasedness of the AR estimator. 
In addition, the smoother curve of the AR sample mean also suggests that the AR estimator has a lower variance.
This observation is further supported by the fact that the AR estimator exhibits a notably narrower and more stable confidence band than the na\"{\i}ve estimator.
More importantly, the 99\% confidence region constructed from the AR estimator lies entirely below the zero level, implying that we can be 99\% confident that the UCB1 policy with $\alpha=3$ underperforms the UCB1 policy with $\alpha=2.5$ in this instance.
In contrast, the 99\% confidence region constructed from the na\"{\i}ve estimator spans both positive and negative values, indicating that it cannot provide a statistically consistent conclusion about the relative performance of the two policies.
\begin{table}[htbp]
    \centering
    \caption{Average number of real-environment interactions under AR and na\"{\i}ve designs (Example~1).}
    \label{tab:Example1_env_interactions}
    \vspace{0.5em}
    \begin{tabular}{ccccc}
    \hline
    Horizon & $T=10$ & $T=10^2$ & $T=10^3$ & $T=10^4$ \\
    \hline
    Na\"{\i}ve & 20.00 & 200.00 & 2,000.00 & 20,000.00 \\
    AR & 10.46 & 102.44 & 1,027.08 & 10,092.58\\
    \hline
    \end{tabular}   
\end{table}

\begin{figure}[htbp]
\centering
\begin{minipage}[b]{0.45\textwidth}
    \centering
    \captionof{figure}{99\% confidence intervals for AR and na\"{\i}ve estimators (Example~1).}
    \label{fig:Example1_confidence_interval}
    \vspace{0.5em}
    \begin{tikzpicture}
    \begin{axis}[
        font=\small,
        width=\linewidth, height=0.75\linewidth,
        xlabel={Horizon $T$},
        ylabel={99\% Confidence Interval},
        grid=both,
        legend style={
            at={(0.01, 0.02)},
            anchor=south west,
        },
        xmin=0, xmax=10000,
        ymin=-150, ymax=75,
        xtick={0, 2000, 4000, 6000, 8000, 10000},
        ytick={-150, -75, 0, 75},
        xticklabels={0, 2, 4, 6, 8, 10},
        xtick scale label code/.code={$\times 10^3$},
        every x tick scale label/.style={
            at={(rel axis cs:1,0)},  
            anchor=north east,     
            yshift=-1.5em,
            inner sep=1pt},
        ]
    
    \addplot[blue, dash pattern=on 2pt off 1pt, forget plot] table[col sep=comma, x=horizon, y=baseline_mean] {data/Example1.csv};
    
    \addplot[blue, thick, forget plot] table[col sep=comma, x=horizon, y=baseline_lb] {data/Example1.csv};
    \addplot[blue, thick, forget plot] table[col sep=comma, x=horizon, y={baseline_ub}] {data/Example1.csv};
    
    \addplot[orange, dash pattern=on 2pt off 1pt, forget plot] table[col sep=comma, x=horizon, y=AR_mean] {data/Example1.csv};
    
    \addplot[orange, thick, forget plot] table[col sep=comma, x=horizon, y=AR_lb] {data/Example1.csv};
    \addplot[orange, thick, forget plot] table[col sep=comma, x=horizon, y=AR_ub] {data/Example1.csv};
    
    \addlegendimage{blue, thick} 
    \addlegendentry{Na\"{\i}ve}
    
    \addlegendimage{orange, thick}
    \addlegendentry{AR}
        
    \end{axis}
    \end{tikzpicture}
\end{minipage}
\hfill
\begin{minipage}[b]{0.45\textwidth}
    \centering
    \captionof{figure}{99\% confidence intervals for AR and na\"{\i}ve estimators (Example~2).}
    \label{fig:Example2_confidence_interval}
    \vspace{0.5em}
    \begin{tikzpicture}
    \begin{axis}[
        font=\small,
        width=\linewidth, height=0.75\linewidth,
        xlabel={Horizon $T$},
        ylabel={99\% Confidence Interval},
        grid=both,
        legend style={
            at={(0.01, 0.98)},
            anchor=north west,
        },
        xmin=0, xmax=10000,
        ymin=-100, ymax=200,
        xtick={0, 2000, 4000, 6000, 8000, 10000},
        ytick={-100, 0, 100, 200},
        xticklabels={0, 2, 4, 6, 8, 10},
        xtick scale label code/.code={$\times 10^3$},
        every x tick scale label/.style={
            at={(rel axis cs:1,0)},  
            anchor=north east,     
            yshift=-1.5em,
            inner sep=1pt},
        ]
    
    \addplot[blue, dash pattern=on 2pt off 1pt, forget plot] table[col sep=comma, x=horizon, y=baseline_mean] {data/Example2.csv};
    
    \addplot[blue, thick, forget plot] table[col sep=comma, x=horizon, y=baseline_lb] {data/Example2.csv};
    \addplot[blue, thick, forget plot] table[col sep=comma, x=horizon, y=baseline_ub] {data/Example2.csv};
    
    \addplot[orange, dash pattern=on 2pt off 1pt, forget plot] table[col sep=comma, x=horizon, y=AR_mean] {data/Example2.csv};
    
    \addplot[orange, thick, forget plot] table[col sep=comma, x=horizon, y=AR_lb] {data/Example2.csv};
    \addplot[orange, thick, forget plot] table[col sep=comma, x=horizon, y=AR_ub] {data/Example2.csv};
    
    \addlegendimage{blue, thick} 
    \addlegendentry{Na\"{\i}ve}
    
    \addlegendimage{orange, thick}
    \addlegendentry{AR}
    \end{axis}
    \end{tikzpicture}
\end{minipage}
\end{figure}

\subsection{Example 2: UCB1 Versus Thompson Sampling in a Bernoulli Bandit}
We consider a 2-armed Bernoulli bandit with mean rewards $[0.7,\,0.3]$.
The two policies under comparison are as follows: $\pi_0$ is the UCB1 policy with $\alpha=2$, and $\pi_1$ is the Thompson sampling policy with independent $\mathrm{Beta}(1, 1)$ priors on the mean reward of each arm.

Although we do not provide a theoretical proof regarding whether, in a Bernoulli bandit, Thompson sampling policy satisfy sublinear growth in the variances of sub‑optimal arm pull counts, for the instance considered here, the experimental results shown in Figure~\ref{fig:Example2_var_suboptimal_arm_pulls} indicate that this property holds for Thompson sampling policy.
Considering that the two policies under comparison both achieve $O(\log T)$ regret,
Table~\ref{tab:Example2_env_interactions} shows, as expected, that the average number of interactions with the real environment of the AR experiment is slightly above $T$ and remains substantially below $2T$, highlighting its improved sample efficiency relative to the na\"{\i}ve estimator.

Moreover, Figure~\ref{fig:Example2_confidence_interval} depicts the 99\% confidence interval constructed from the AR and na\"{\i}ve estimator. 
The AR estimator yields a noticeably narrower confidence region, demonstrating a substantial reduction in variance compared to the na\"{\i}ve estimator.
The confidence band of our AR estimator lies predominantly above the zero level, whereas that of the na\"{\i}ve estimator spans both sides of the zero level, with substantial mass on each side.
Therefore, based on the AR estimator, we can make the inference with 99\% confidence that the TS policy outperforms the UCB1 policy with $\alpha=2$ in this instance,  whereas the na\"{\i}ve estimator does not support such a conclusion at the same confidence level.

\begin{table}[htbp]
    \centering
    \caption{Average number of real-environment interactions under AR and na\"{\i}ve designs (Example~2).}
    \label{tab:Example2_env_interactions}
    \vspace{0.5em}
    \begin{tabular}{ccccc}
    \hline
    Horizon & $T=10$ & $T=10^2$ & $T=10^3$ & $T=10^4$ \\
    \hline
    Na\"{\i}ve & 20.00 & 200.00 & 2,000.00 & 20,000.00 \\
    AR & 11.05 & 109.52 & 1,038.62 & 10,076.03 \\
    \hline
    \end{tabular}
\end{table}
\begin{figure}[htbp]
\centering
\begin{minipage}[t]{0.45\textwidth}
    \centering
    \begin{tikzpicture}
    \begin{axis}[
        font=\small,
        width=\linewidth, height=0.75\linewidth,
        xlabel={Horizon $T$},
        ylabel={$\operatorname{Var}(N_2^{\pi}(T)) / T$},
        grid=both,
        legend style={
            at={(0.99, 0.98)},
            anchor=north east,
        },
        xmin=0, xmax=10000,
        ymin=0, ymax=0.15,
        xtick={0, 2000, 4000, 6000, 8000, 10000},
        ytick={0, 0.05, 0.10, 0.15},
        yticklabels={0,0.05,0.10,0.15},
        xticklabels={0, 2, 4, 6, 8, 10},
        xtick scale label code/.code={$\times 10^3$},
        every x tick scale label/.style={
            at={(rel axis cs:1,0)},  
            anchor=north east,     
            yshift=-1.5em,
            inner sep=1pt},
        ]

    \addplot[orange, thick, forget plot] table[col sep=comma, x=horizon, y=varT_TS] {data/Example2.csv};

    \addlegendimage{orange, thick}
    \addlegendentry{TS}
    \end{axis}
    \end{tikzpicture}
    \captionof{figure}{$T$-Normalized variance of pull counts for suboptimal arm~2 under  Thompson sampling (Example~2).}
    \label{fig:Example2_var_suboptimal_arm_pulls}
\end{minipage}
\hfill
\begin{minipage}[t]{0.45\textwidth}
    \centering
    \begin{tikzpicture}
    \begin{axis}[
        font=\small,
        width=\linewidth, height=0.75\linewidth,
        xlabel={Horizon $T$},
        ylabel={$\operatorname{Var}(N_2^{\pi}(T)) / T$},
        grid=both,
        legend style={
            at={(0.99, 0.98)},
            anchor=north east,
        },
        xmin=0, xmax=10000,
        ymin=0, ymax=30,
        xtick={0, 2000, 4000, 6000, 8000, 10000},
        ytick={0, 10, 20, 30},
        xticklabels={0, 2, 4, 6, 8, 10},
        xtick scale label code/.code={$\times 10^3$},
        every x tick scale label/.style={
            at={(rel axis cs:1,0)},  
            anchor=north east,     
            yshift=-1.5em,
            inner sep=1pt},
        ]
    
    \addplot[orange, thick, forget plot] table[col sep=comma, x=horizon, y={varT_TS}] {data/Example3.csv};
    
    \addlegendimage{orange, thick}
    \addlegendentry{TS}
    \end{axis}
    \end{tikzpicture}
    \captionof{figure}{$T$-Normalized variance of pull count for suboptimal arm~2 under Thompson sampling (Example~3).}
    \label{fig:Example3_var_suboptimal_arm_pulls}
\end{minipage}
\end{figure}

\begin{figure}
\centering
\begin{minipage}[t]{0.45\textwidth}
    \centering
    \begin{tikzpicture}
    \begin{axis}[
        font=\small,
        width=\linewidth, height=0.75\linewidth,
        xlabel={Horizon $T$},
        ylabel={Average Number of\\Real-Environment Interactions},
        ylabel style={align=center},
        grid=both,
        legend style={
            at={(0.01, 0.98)},
            anchor=north west,
        },
        xmin=0, xmax=10000,
        ymin=0, ymax=20000,
        xtick={0, 2000, 4000, 6000, 8000, 10000},
        ytick={0, 4000, 8000, 12000, 16000, 20000},
        xticklabels={0, 2, 4, 6, 8, 10},
        xtick scale label code/.code={$\times 10^3$},
        every x tick scale label/.style={
            at={(rel axis cs:1,0)},  
            anchor=north east,     
            yshift=-1.5em,
            inner sep=1pt},
        yticklabels={0, 4, 8, 12, 16, 20},
        ytick scale label code/.code={$\times 10^3$},
        every y tick scale label/.style={
            at={(rel axis cs:0,1)},  
            anchor=south west,     
            inner sep=1pt},
        ]
    \addplot[blue, thick, forget plot] table[col sep=comma, x=horizon, y=baseline_num_interactions] {data/Example3.csv};
    \addplot[orange, thick, forget plot] table[col sep=comma, x=horizon, y=AR_num_interactions] {data/Example3.csv};

    \addlegendimage{blue, thick}
    \addlegendentry{Na\"{\i}ve}
    \addlegendimage{orange, thick}
    \addlegendentry{AR}
    \end{axis}
    \end{tikzpicture}
    \captionof{figure}{Average number of real-environment interactions under AR and na\"{\i}ve designs (Example~3).}
    \label{fig:Example3_env_interactions}
\end{minipage}
\hfill
\begin{minipage}[t]{0.45\textwidth}
    \centering
    \begin{tikzpicture}
    \begin{axis}[
        font=\small,
        width=\linewidth, height=0.75\linewidth,
        xlabel={Horizon $T$},
        ylabel={Variance of Estimator},
        ylabel style={align=center},
        grid=both,
        legend style={
            at={(0.01, 0.98)},
            anchor=north west,
        },
        xmin=0, xmax=10000,
        ymin=0, ymax=60000,
        xtick={0, 2000, 4000, 6000, 8000, 10000},
        ytick={0, 20000, 40000, 60000},
        xticklabels={0, 2, 4, 6, 8, 10},
        xtick scale label code/.code={$\times 10^3$},
        every x tick scale label/.style={
            at={(rel axis cs:1,0)},  
            anchor=north east,     
            yshift=-1.5em,
            inner sep=1pt},
        yticklabels={0, 2, 4, 6},
        ytick scale label code/.code={$\times 10^4$},
        every y tick scale label/.style={
            at={(rel axis cs:0,1)},  
            anchor=south west,     
            inner sep=1pt},
        ]
    \addplot[blue, thick, forget plot] table[col sep=comma, x=horizon, y=baseline_var] {data/Example3.csv};
    \addplot[orange, thick, forget plot] table[col sep=comma, x=horizon, y=AR_var] {data/Example3.csv};

    \addlegendimage{blue, thick}
    \addlegendentry{Na\"{\i}ve}
    \addlegendimage{orange, thick}
    \addlegendentry{AR}
    \end{axis}
    \end{tikzpicture}
    \captionof{figure}{Variance of AR and na\"{\i}ve estimators (Example~3).}
    \label{fig:Example3_var_estimators}
\end{minipage}
\end{figure}
\subsection{Example 3: Thompson Sampling Versus $\epsilon$-Greedy in a Gaussian Bandit}\label{ssec:example3}
We consider a 5-armed Gaussion bandit with mean rewards $[1.0,\, 0.8,\,-2,\,-3,\,-4]$, and a common variance of $1$.
The two policies under comparison are as follows: $\pi_0$ is the Thompson sampling policy with independent Gaussian priors $\mathcal{N}(0, 1)$ on the mean reward of each arm, and $\pi_1$ is the $\epsilon$-greedy policy with $\epsilon=0.1$.

It is worth noting that the two policies considered in this example do not meet the assumptions required by Theorem~\ref{thm:asy_var}. 
As illustrated in Figure~\ref{fig:Example3_var_suboptimal_arm_pulls}, Thompson sampling does not satisfy $\Var(N_a^{\pi}(T)) = o(T)$ for suboptimal arm $2$.
In addition, $\epsilon$-greedy policy with a fixed exploration rate $\epsilon=0.1$ even fails to satisfy $\E(N_a^{\pi}(T)) = o(T)$ for some suboptimal arm $a$.
In Figure~\ref{fig:Example3_env_interactions}, we plot the average number of real-environment interactions under the AR and na\"{\i}ve designs
against the horizon.
We observe that, in the AR experiment, this quantity grows approximately linearly with $T$, with an asymptotic slope of about $1.09$, rather than $1$. 
This behavior arises from the fact that the $\epsilon$‑greedy policy with fixed exploration rate $\epsilon$ incurs $O(T)$ regret.
Nevertheless, the slope being much smaller than 2 still reflects the sample‑efficiency advantage of our AR design.

Although the two compared policies do not satisfy the assumptions of Theorem~\ref{thm:asy_var}, 
Figure~\ref{fig:Example3_var_estimators} shows that our AR estimator still exhibits visibly lower variance than the na\"{\i}ve estimator across all considered horizons.
While this reduction is less pronounced than in Examples~1 and 2, it nevertheless permits inference of comparable accuracy with fewer samples.

\section{Conclusion and Future Directions}\label{sec:conclusion}
In this study, we investigate artificial replay as a new design framework for experimenting and comparing multi-armed bandit algorithms.
We show it has promising theoretical and empirical benefits.
It opens up a number of venues for future research on the experimental design of adaptive and online learning policies.

For contextual bandit policies, one exciting problem is whether AR can be adapted to effectively handle high-dimensional contextual information.
Since the reward distribution is conditional on the context and exact matches in continuous or high-dimensional spaces are rare, an exact \emph{replay} may be infeasible.
To address this, future research could explore the use of clustering or smoothing techniques to create ``approximate'' matches. For instance, by partitioning the context space into clusters or using kernel-based smoothing, AR could reuse rewards from historically similar contexts rather than requiring identical ones. However, such approximations would introduce a bias-variance trade-off for the AR estimator: while clustering facilitates data reuse and further reduces the estimator's variance, it incurs potential estimation bias due to context aggregation. Rigorously characterizing this trade-off under these approximations remains a vital open question.

More broadly, for adaptive policies that are prevalent in reinforcement learning, it remains open whether coupling multiple policies in a certain way like AR can improve the estimation of the treatment effect.

\newpage
\bibliographystyle{chicago}
\bibliography{refs}	

\appendix
\section{Proofs of Statements in Section~\ref{sec:analytical_framework}}
In this appendix, we prove the results presented in Section~\ref{sec:analytical_framework}. 
We begin with the proof of Proposition~\ref{prop:stopping-time-martingale}. We then proceed to prove Lemma \ref{lem:bnk-stack-same-dist} and Theorem \ref{thm:AR-stack-same-dist}, leveraging the results established in Proposition~\ref{prop:stopping-time-martingale}.
Finally, we present the proof for Corollary~\ref{cor:mean-var-equal} of Theorem \ref{thm:AR-stack-same-dist}.

\subsection{Proof of Proposition~\ref{prop:stopping-time-martingale}}
\label{app:lem-stopping-time-martingale}
\begin{itemize}
    \item [(i)]
    For $i \in \{0, 1\}$, to demonstrate that $N_a^{\pi_i\text{-s}}(r)$ is a stopping time with respect to $\{\mathcal{F}_t^{T,a}\}_{\{t \in \mathbb{N}\}}$ for all $r \in [T]$, it suffices to show that $\{N_a^{\pi_i\text{-s}}(r) = t\} \in \mathcal{F}_t^{T,a}$ for all $t \in \mathbb{N}$ and $r \in [T]$.
    We prove this property by induction on $r$.
    
    For $r = 1$, note that $\{N_a^{\pi_i\text{-s}}(1) = 0\} = \{A_1^{\pi_i\text{-s}} \neq a\}$.
    Since the arm $A_1^{\pi_i\text{-s}}$ selected at the first period is determined by the internal randomness $\eta_1^i$ of policy $\pi_i$, it follows that $\{N_a^{\pi_i\text{-s}}(1) = 0\} \in \sigma(\eta_1^i)$, and hence $\{N_a^{\pi_i\text{-s}}(1) = 0\} \in \mathcal{F}_0^{T, a}$. 
    Similarly, $\{N_a^{\pi_i\text{-s}}(1) = 1\} = \{A_1^{\pi_i\text{-s}} = a\} \in \mathcal{F}_1^{T,a}$.
    Moreover, for $t > 1$, 
    $\{N_a^{\pi_i\text{-s}}(1) = t\}$ is empty, and hence belongs to $\mathcal{F}_t^{T,a}$.

    Assume that the claim holds for $r-1$ with $ 2 \leq r \leq T$; that is, $\{N_a^{\pi_i\text{-s}}(r-1) = t\} \in \mathcal{F}_t^{T,a}$ for all $t \in \mathbb{N}$.
    We now show that it also holds for $r$.
    Note that 
    \begin{equation}\label{eq:pf-induction-step-1}
        \begin{aligned}
            &\{N_a^{\pi_i\text{-s}}(r) = 0\} = \{N_a^{\pi_i\text{-s}}(r-1) = 0,\, A_r^{\pi_i\text{-s}} \neq a\}, \\
            &\{N_a^{\pi_i\text{-s}}(r) = r\} = \{N_a^{\pi_i\text{-s}}(r-1) = r-1,\, A_r^{\pi_i\text{-s}} = a\},
        \end{aligned}
    \end{equation}
    and for $t = 1, 2, \dots, r-1$, 
    \begin{align}\label{eq:pf-induction-step-2}
        \{N_a^{\pi_i\text{-s}}(r) = t\} = \{N_a^{\pi_i\text{-s}}(r-1) = t-1,\, A_r^{\pi_i\text{-s}} = a\} \cup \{N_a^{\pi_i\text{-s}}(r-1) = t,\, A_r^{\pi_i\text{-s}} \neq a\}.
    \end{align}
    From the recursive construction of $\{(A_t^{\pi_i\text{-s}}, R_t^{\pi_i\text{-s}})\}_{t\in [T]}$ in \eqref{eq:action-reward-stack}, it follows that for any $t = 0,1,\dots r-1$, there exists a measurable arm-valued function $g_{ t}^{\pi_i}$ such that
    \begin{align}
        & A_r^{\pi_i\text{-s}} \mathbb{I}\{N_a^{\pi_i\text{-s}}(r-1) = t\} \nonumber \\
        = {}&g_{t}^{\pi_i}(X_{a, 1}, \dots, X_{a, t},\, X_{b, r'},\, \eta_{r'}^i; b \neq a,\, r' \in [T]) \mathbb{I}\{N_a^{\pi_i\text{-s}}(r-1) = t\}. \label{eq:pf-A-t-expression}
    \end{align}
    Returning to equations \eqref{eq:pf-induction-step-1}-\eqref{eq:pf-induction-step-2} and using the result in \eqref{eq:pf-A-t-expression}, we can infer that for $t=0, 1, \dots, r-1$,
    \begin{align*}
        &\{N_a^{\pi_i\text{-s}}(r-1) = t,\, A_r^{\pi_i\text{-s}} \neq a\} \\
        ={}& \{N_a^{\pi_i\text{-s}}(r-1) = t,\, g_{t}^{\pi_i}(X_{a, 1}, \dots, X_{a, t},\, X_{b, r'},\, \eta_{r'}^i; b \neq a,\, r' \in [T]) \neq a\}.
    \end{align*}
    Because $g_{t}^{\pi_i}(X_{a, 1}, \dots, X_{a, t},\, X_{b, r'},\, \eta_{r'}^i; b \neq a,\, r' \in [T])$ is $\mathcal{F}_t^{T,a}$-measurable, and $\{N_a^{\pi_i\text{-s}}(r-1) = t\} \in \mathcal{F}_{t}^{T,a}$ by the induction hypothesis, it follows that
    \begin{equation}\label{eq:pf-N-a-neq-a}
        \{N_a^{\pi_i\text{-s}}(r-1) = t,\, A_r^{\pi_i\text{-s}} \neq a\} \in \mathcal{F}_{t}^{T,a} \quad \text{ for } t = 0, 1, \dots, r-1.
    \end{equation}
    On the other hand, we have for $t=1, 2, \dots, r$, 
    \begin{align*}
        &\{N_a^{\pi_i\text{-s}}(r-1) = t-1,\, A_r^{\pi_i\text{-s}} = a\} \\
        ={}& \{N_a^{\pi_i\text{-s}}(r-1) = t-1,\, g_{t-1}^{\pi_i}(X_{a, 1}, \dots, X_{a, t-1},\, X_{b, r'},\, \eta_{r'}^i; b \neq a,\, r' \in [T]) = a\}.
    \end{align*}
    Given that $g_{t-1}^{\pi_i}(X_{a, 1}, \dots, X_{a, t-1},\, X_{b, r'},\, \eta_{r'}^i; b \neq a,\, r' \in [T])$ is $\mathcal{F}_{t-1}^{T, a}$-measurable, and the event $\{N_a^{\pi_i\text{-s}}(r-1) = t-1\} \in \mathcal{F}_{t-1}^{T, a}$ by the induction hypothesis, thus we have
    \begin{equation}\label{eq:pf-N-a-eq-a}
        \{N_a^{\pi_i\text{-s}}(r-1) = t-1,\, A_r^{\pi_i\text{-s}} = a\} \in \mathcal{F}_{t-1}^{T, a} \quad \text{ for } t=1, 2, \dots, r.
    \end{equation}
    Combining the conclusions in \eqref{eq:pf-N-a-neq-a} and \eqref{eq:pf-N-a-eq-a} with equations \eqref{eq:pf-induction-step-1} and \eqref{eq:pf-induction-step-2}, we conclude 
    \begin{align*}
         &\{N_a^{\pi_i\text{-s}}(r) = 0\} = \{N_a^{\pi_i\text{-s}}(r-1) = 0,\, A_r^{\pi_i\text{-s}} \neq a\} \in \mathcal{F}_0^{T,a}, \\
        &\{N_a^{\pi_i\text{-s}}(r) = r\} = \{N_a^{\pi_i\text{-s}}(r-1) = r-1,\, A_r^{\pi_i\text{-s}} = a\} \in \mathcal{F}_{r-1}^{T,a} \subseteq \mathcal{F}_r^{T,a},
    \end{align*}
    and for $t = 1, 2, \dots, r-1$, 
    \begin{align*}
        \{N_a^{\pi_i\text{-s}}(r) = t\} = \{N_a^{\pi_i\text{-s}}(r-1) = t-1,\, A_r^{\pi_i\text{-s}} = a\} \cup \{N_a^{\pi_i\text{-s}}(r-1) = t,\, A_r^{\pi_i\text{-s}} \neq a\} \in \mathcal{F}_t^{T,a}.
    \end{align*}
    Moreover, for $t > r$, $\{N_a^{\pi_i\text{-s}}(r) = t\} = \emptyset \in \mathcal{F}_t^{T,a}$.
    Therefore, the statement holds for $r$, completing the induction. We conclude that $N_a^{\pi_i\text{-s}}(r)$ is a stopping time with respect to $\{\mathcal{F}_t^{T,a}\}_{t \in \mathbb{N}}$ for all $r \in [T]$.
    
    In addtion, since $N_a^{\pi_i\text{-s}}(r) \leq T$ for all $r \in [T]$, $N_a^{\pi_i\text{-s}}(r)$ is a bounded stopping time.
    
    \item [(ii)]
    The definition of $\{\mathcal{F}_t^{T,a}\}_{t \in \mathbb{N}}$ in \eqref{eq:def-filtration} implies directly that $I_t^{T, a}(B)$, $S_t^{T, a}$ and $V_t^{T, a}$ are $\mathcal{F}_t^{T,a}$-measurable for all $t \in \mathbb{N}$. We now verify the martingale property for the processes $\{I_t^{T, a}(B),\,t \in \mathbb{N}\}$, $\{S_t^{T, a},\,t \in \mathbb{N}\}$ and $\{V_t^{T, a},\,t \in \mathbb{N}\}$.
    
    First, consider $\{I_t^{T, a}(B),\,t \in \mathbb{N}\}$. 
    For any $0 \leq t \leq T-1$, we have
    \begin{align*}
        \E[I_{t+1}^{T, a}(B) \mid \mathcal{F}_t^{T,a}] &= \E[I_t^{T, a}(B) + (\mathbb{I}\{X_{a, t+1} \in B) - P_a(B)) \mid \mathcal{F}_t^{T,a}] \\
        &= I_t^{T, a}(B) + \E[\mathbb{I}\{X_{a, t+1} \in B) - P_a(B)] = I_t^{T,a}(B),
    \end{align*}
    where the second equality holds because $X_{a, t+1}$ is independent of $\mathcal{F}_t^{T,a}$, and the last equality follows from $\Prob(X_{a, t+1} \in B) = P_a(B)$. 
    Similarly, for $\{S_t^{T, a},\,t \in \mathbb{N}\}$,
    \begin{align*}
        \E[S_{t+1}^{T, a} \mid \mathcal{F}_t^{T,a}] = \E[S_t^{T, a} + (X_{a, t+1} - \mu_a) \mid \mathcal{F}_t^{T,a}] = S_t^{T, a} + \E[X_{a, t+1} - \mu_a] = S_t^{T, a},
    \end{align*}
    where we again use the independence of $X_{a, t+1}$ from $\mathcal{F}_t^{T,a}$ and the fact that $\E[X_{a, t+1}] = \mu_a$.
    
    Next, consider $\{V_t^{T, a},\,t \in \mathbb{N}\}$. For any $0 \leq t \leq T-1$, note that
    \begin{align*}
        V_{t+1}^{T, a} &= [S_t^{T, a} + (X_{a, t+1} - \mu_a)]^2 - (t+1) \sigma_a^2 \\
        &= V_t^{T, a} + (X_{a, t+1} - \mu_a)^2 + 2  S_t^{T, a} (X_{a, t+1} - \mu_a) - \sigma_a^2,
    \end{align*}
    then we have
    \begin{equation}\label{eq:lem-pf-V-parts}
        \begin{aligned}
            \E[V_{t+1}^{T, a} \mid \mathcal{F}_t^{T,a}] 
            &= V_t^{T, a} + \E[(X_{a, t+1} - \mu_a)^2 + 2 S_t^{T, a} (X_{a, t+1} - \mu_a)  \mid \mathcal{F}_t^{T,a}] - \sigma_a^2 \\
            &= V_t^{T, a} + \E[(X_{a, t+1} - \mu_a)^2 \mid \mathcal{F}_t^{T,a}] + 2  \E[S_t^{T, a} (X_{a, t+1} - \mu_a) \mid \mathcal{F}_t^{T,a}] - \sigma_a^2.
        \end{aligned}
    \end{equation}
    Since $X_{a, t+1} - \mu_a$ is independent of $\mathcal{F}_t^{T,a}$ and $S_t^{T, a}$ is $\mathcal{F}_t^{T,a}$-measurable, it follows that
    \begin{equation}\label{eq:lem-pf-conditional-expectation}
        \begin{aligned}
        \E[(X_{a, t+1} - \mu_a)^2 \mid \mathcal{F}_t^{T,a}] &= \E[(X_{a, t+1} - \mu_a)^2] = \sigma_a^2, \\
        \E[S_t^{T, a} (X_{a, t+1} - \mu_a) \mid \mathcal{F}_t^{T,a}] &= S_t^{T, a}\E[X_{a, t+1} - \mu_a] = 0.
    \end{aligned}
    \end{equation}
    Combining equations \eqref{eq:lem-pf-V-parts} and \eqref{eq:lem-pf-conditional-expectation}, we obtain $\E[V_{t+1}^{T, a} \mid \mathcal{F}_t^{T, a}] = V_t^{T, a}$.

    Moreover, since for all $t \geq T$, we have $I_t^{T, a}(B) = I_T^{T, a}(B)$, $S_t^{T, a} = S_T^{T, a}$, and $V_t^{T, a} = V_T^{T, a}$, it follows that for $t \geq T$,
    \begin{align*}
        \E[I_{t+1}^{T, a}(B) \mid \mathcal{F}_t^{T,a}] &= \E[I_t^{T, a}(B) \mid \mathcal{F}_t^{T,a}] = I_t^{T, a}(B), \\
        \E[S_{t+1}^{T, a} \mid \mathcal{F}_t^{T,a}] &= \E[S_t^{T, a} \mid \mathcal{F}_t^{T,a}] = S_t^{T, a}, \\
        \E[V_{t+1}^{T, a} \mid \mathcal{F}_t^{T,a}] &= \E[V_t^{T, a} \mid \mathcal{F}_t^{T,a}] = V_t^{T, a}.
    \end{align*}
    
    Therefore, we conclude that the processes $\{I_t^{T, a}(B),\,t \in \mathbb{N}\}$, $\{S_t^{T, a},\,t \in \mathbb{N}\}$ and $\{V_t^{T, a},\,t \in \mathbb{N}\}$ are all $\{\mathcal{F}_t^{T,a}\}_{t \in \mathbb{N}}$-martingales.
\end{itemize}


\subsection{Proof of Lemma~\ref{lem:bnk-stack-same-dist}}
\label{app:lem-bnk-stack-same-dist}
    First, from the definition of $A_t^{\pi_i\text{-s}}$ in \eqref{eq:action-reward-stack} and the fact $\eta_t^i \sim U(0, 1)$, we obtain
    \begin{align*}
    A_t^{\pi_i\text{-s}} \mid \mathcal{H}_{t-1}^{\pi_i\text{-s}} = h_{t-1}^i \ &\sim \ (\pi_i)_t(\cdot \mid h_{t-1}^i),
    \end{align*}
    which coincide with the conditional distribution of $A_t^{\pi_i}$ given $\mathcal{H}_{t-1}^{\pi_i} = h_{t-1}^i$ in \eqref{eq:A-R-single-conditional-dist}.
    
    To establish \eqref{eq:bnk-stack-same-dist}, it remains to verify that
    for all $t \in [T]$, the conditional distribution of $R_t^{\pi_i\text{-s}}$ given $(\mathcal{H}_{t-1}^{\pi_i\text{-s}}, A_t^{\pi_i\text{-s}}) = (h_{t-1}^i, a)$ coincides with that of $R_t^{\pi_i}$ given $(\mathcal{H}_{t-1}^{\pi_i}, A_t^{\pi_i}) = (h_{t-1}^i, a)$ ; the latter, by the definition in~\eqref{eq:A-R-single-conditional-dist}, is exactly the reward distribution  $P_a$. We next aim to prove that, for any arm $a \in [K]$ and any $B \in \mathcal{B}(\mathbb{R})$,
    \begin{align}\label{eq:pf-bnk-condition-dist-equal}
        \Prob(R_t^{\pi_i\text{-s}} \in B  \mid \mathcal{H}_{t-1}^{\pi_i\text{-s}} = h_{t-1}^i, A_t^{\pi_i\text{-s}} = a) = P_a(B).
    \end{align}
    Since $R_t^{\pi_i\text{-s}}\mathbb{I}\{A_t^{\pi_i\text{-s}} = a\} = X_{a, N_{a}^{\pi_i\text{-s}}(t)}$, we have
    \begin{equation}\label{eq:pf-bnk-P-E}
    \begin{aligned}
        &\Prob(R_t^{\pi_i\text{-s}} \in B  \mid \mathcal{H}_{t-1}^{\pi_i\text{-s}} = h_{t-1}^i, A_t^{\pi_i\text{-s}} = a) - P_a(B) \\
        = {}&\E[\mathbb{I}\{X_{a, N_a^{\pi_i\text{-s}}(t)} \in B\} - P_a(B) \mid \mathcal{H}_{t-1}^{\pi_i\text{-s}} = h_{t-1}^i, A_t^{\pi_i\text{-s}} = a].
    \end{aligned}
    \end{equation}
    Note that $N_a^{\pi_i\text{-s}}(t)$ and $N_a^{\pi_i\text{-s}}(t-1)$ are stopping times with respect to $\{\mathcal{F}_t^{T, a}\}_{t\in \mathbb{N}}$ by Proposition~\ref{prop:stopping-time-martingale}-(i), and 
    \begin{equation}
    \label{eq:pf-bnk-I-I}
    \begin{aligned}
        \big(\mathbb{I}\{X_{a, N_a^{\pi_i\text{-s}}(t)} \in B\} - P_a(B)\big) \mathbb{I}\{A_t^{\pi_i\text{-s}} = a\} = \big(I_{ N_a^{\pi_i\text{-s}}(t)}^{T, a}(B) - I_{N_a^{\pi_i\text{-s}}(t-1)}^{T, a}(B)\big) \mathbb{I}\{A_t^{\pi_i\text{-s}} = a\},
    \end{aligned}
    \end{equation}
    where $I_t^{T, a}(B)$ is defined as in \eqref{eq:def-I-S-V}.
    It then follows from \eqref{eq:pf-bnk-P-E} and \eqref{eq:pf-bnk-I-I} that
    \begin{align}
        &\Prob(R_t^{\pi_i\text{-s}} \in B  \mid \mathcal{H}_{t-1}^{\pi_i\text{-s}} = h_{t-1}^i, A_t^{\pi_i\text{-s}} = a) - P_a(B)  \nonumber\\
        = {}&\E[I_{N_a^{\pi_i\text{-s}}(t)}^{T, a}(B) - I_{N_a^{\pi_i\text{-s}}(t-1)}^{T, a}(B) \mid \mathcal{H}_{t-1}^{\pi_i\text{-s}} = h_{t-1}^i, A_t^{\pi_i\text{-s}} = a] \nonumber\\
        = {}&\E[\E[I_{N_a^{\pi_i\text{-s}}(t)}^{T, a}(B) - I_{ N_a^{\pi_i\text{-s}}(t-1)}^{T, a}(B) \mid \mathcal{F}_{N_a^{\pi_i\text{-s}}(t-1)}^{T, a}] \mid \mathcal{H}_{t-1}^{\pi_i\text{-s}} = h_{t-1}^i, A_t^{\pi_i\text{-s}} = a] \label{eq:pf-tower-rule},
    \end{align}
    where, in the last equality, we use the fact that $\sigma(\mathcal{H}_{t-1}^{\pi_i\text{-s}},\, A_t^{\pi_i\text{-s}}) \subseteq \mathcal{F}_{N_a^{\pi_i\text{-s}}(t-1)}^{T, a}$, and apply the tower rule of conditional expectation.
    Moreover, since $\{I_t^{T, a}(B),\,t \in \mathbb{N}\}$ is a $\{\mathcal{F}_t^{T,a}\}_{t \in \mathbb{N}}$-martingale by Proposition~\ref{prop:stopping-time-martingale}-(ii), and noting that $N_a^{\pi_i\text{-s}}(t-1) \leq N_a^{\pi_i\text{-s}}(t)$, the Doob's optional sampling theorem (see Theorem~6.1.2 in \cite{kallenberg1997foundations}) gives
    \begin{align}\label{eq:pf-I-optional-sampling}
        \E[I_{N_a^{\pi_i\text{-s}}(t)}^{T, a}(B) - I_{N_a^{\pi_i\text{-s}}(t-1)}^{T, a}(B) \mid \mathcal{F}_{N_a^{\pi_i\text{-s}}(t-1)}^{T, a}] = 0.
    \end{align}
    Combining \eqref{eq:pf-tower-rule} and \eqref{eq:pf-I-optional-sampling}, we can conclude that \eqref{eq:pf-bnk-condition-dist-equal} holds. This complete the proof.


\subsection{Proof of Theorem~\ref{thm:AR-stack-same-dist}}
\label{app:thm-AR-stack-same-dist}
    First, since the deployment of $\pi_0$ in Phase 1 of Algorithm~\ref{alg:AR} is identical to that in Algorithm~\ref{alg:benchmark}, we have
    \begin{align}\label{eq:pf-AR-1-same-dist}
        (A_1^{\pi_0\text{-AR}}, R_1^{\pi_0\text{-AR}}, \dots, A_T^{\pi_0\text{-AR}}, R_T^{\pi_0\text{-AR}}) \stackrel{\text{d}}{=} (A_1^{\pi_0}, R_1^{\pi_0}, \dots, A_T^{\pi_0}, R_T^{\pi_0}).
    \end{align}
    By combining \eqref{eq:pf-AR-1-same-dist} with the result of Lemma~\ref{lem:bnk-stack-same-dist}, we obtain that 
    \begin{align*}
        (A_1^{\pi_0\text{-AR}}, R_1^{\pi_0\text{-AR}}, \dots, A_T^{\pi_0\text{-AR}}, R_T^{\pi_0\text{-AR}}) \stackrel{\text{d}}{=} (A_1^{\pi_0\text{-s}}, R_1^{\pi_0\text{-s}}, \dots, A_T^{\pi_0\text{-s}}, R_T^{\pi_0\text{-s}}).
    \end{align*}
    
    Next, let $h_T^0=(a_1^0, r_1^0, \ldots, a_T^0, r_T^0) \in ([K] \times \mathbb{R})^T$ and $h_{t-1}^1=(a_1^1, r_1^1, \ldots, a_{t-1}^1, r_{t-1}^1) \in ([K] \times \mathbb{R})^{t-1}$. 
    By the definition of $A_t^{\pi_i\text{-s}}$ in \eqref{eq:action-reward-stack} and the fact $\eta_t^i \sim U(0, 1)$, it follows that
    \begin{align*}
    A_t^{\pi_1\text{-s}} \mid (\mathcal{H}_T^{\pi_0\text{-s}}, \mathcal{H}_{t-1}^{\pi_1\text{-s}}) = (h_T^0, h_{t-1}^1) \ &\sim \ (\pi_1)_t(\cdot \mid h_{t-1}^1),
    \end{align*}
    which coincide with the conditional distributions of $A_t^{\pi_1\text{-AR}}$ given $(\mathcal{H}_T^{\pi_0\text{-AR}}, \mathcal{H}_{t-1}^{\pi_1\text{-AR}}) = (h_T^0, h_{t-1}^1)$ in \eqref{eq:A-2-AR-conditional-dist}.
    In order to establish \eqref{eq:AR-stack-same-dist}, it suffices to verify that for all $t \in [T]$, the conditional distribution of $R_t^{\pi_1\text{-AR}}$ given $(\mathcal{H}_T^{\pi_0\text{-AR}}, \mathcal{H}_{t-1}^{\pi_1\text{-AR}}, A_{t}^{\pi_1\text{-AR}}) = (h_T^0, h_{t-1}^1, a)$ coincides with that of $R_t^{\pi_1\text{-s}}$ given $(\mathcal{H}_T^{\pi_0\text{-s}}, \mathcal{H}_{t-1}^{\pi_1\text{-s}}, A_{t}^{\pi_1\text{-s}}) = (h_T^0, h_{t-1}^1, a)$.
    Therefore, we proceed to prove that for any arm $a \in [K]$ and any $B \in \mathcal{B}(\mathbb{R})$, the following equality holds
    \begin{align}
        &\Prob( R_t^{\pi_1\text{-AR}} \in B \mid \mathcal{H}_T^{\pi_0\text{-AR}}=h_T^0, \mathcal{H}_{t-1}^{\pi_1\text{-AR}}=h_{t-1}^1,  A_t^{\pi_1\text{-AR}} = a) \label{eq:pf-AR-stack-same-dist-1} \\
        = {}&\Prob( R_t^{\pi_1\text{-s}} \in B \mid \mathcal{H}_T^{\pi_0\text{-s}}=h_T^0, \mathcal{H}_{t-1}^{\pi_1\text{-s}}=h_{t-1}^1,  A_t^{\pi_1\text{-s}} = a).\label{eq:pf-AR-stack-same-dist-2}
    \end{align}

    For \eqref{eq:pf-AR-stack-same-dist-1}, according the definition in \eqref{eq:R-2-AR-conditional-dist}, we have
     \begin{align}
        &\Prob( R_t^{\pi_1\text{-AR}} \in B \mid \mathcal{H}_T^{\pi_0\text{-AR}}=h_T^0, \mathcal{H}_{t-1}^{\pi_1\text{-AR}}=h_{t-1}^1,  A_t^{\pi_1\text{-AR}}=a) \nonumber \\
        ={}& 
        \begin{cases}
        \mathbb{I} \{r_{s^*}^0 \in B\} & \text{if } n_a^1(t-1) < n_a^0(T), \\
        P_{a}(B) & \text{otherwise},
        \end{cases} \label{eq:pf-first-term-form}
    \end{align}
    where $n_a^i(t) \triangleq \sum_{r=1}^{t} \mathbb{I}\{a_r^i = a\}$ and $s^* \triangleq \min\{s \in [T]: n_a^0(s) = n_a^1(t-1) + 1\}$.
    For \eqref{eq:pf-AR-stack-same-dist-2}, recall that
    $R_t^{\pi_1\text{-s}} = X_{a, n_a^1(t-1)+1}$ when $A_t^{\pi_1\text{-s}} = a$.
    Moreover, if $n_a^1(t-1) < n_a^0(T)$, the reward $r_{s^*}^0$ equals to $X_{a, n_a^1(t-1) + 1}$. Therefore, we have
    \begin{align}
        &\Prob( R_t^{\pi_1\text{-s}} \in B \mid \mathcal{H}_T^{\pi_0\text{-s}}=h_T^0, \mathcal{H}_{t-1}^{\pi_1\text{-s}}=h_{t-1}^1,  A_t^{\pi_1\text{-s}}=a) \nonumber \\
        = {}&
        \begin{cases}
        \mathbb{I} \{r_{s^*}^0 \in B\} & \text{if } n_a^1(t-1) < n_a^0(T), \\
        \Prob( X_{a, n_a^1(t-1) + 1} \in B \mid \mathcal{H}_T^{\pi_0\text{-s}}=h_T^0, \mathcal{H}_{t-1}^{\pi_1\text{-s}}=h_{t-1}^1,  A_t^{\pi_1\text{-s}}=a) & \text{otherwise}.
        \end{cases} \label{eq:pf-second-term-form}
    \end{align}
    By comparing the right-hand sides of equations \eqref{eq:pf-first-term-form} and \eqref{eq:pf-second-term-form}, it suffices to prove that if $n_a^1(t-1) \geq n_a^0(T)$, then 
    \begin{align}\label{eq:pf-AR-condition-dist-equal}
        \Prob( X_{a, N_a^{\pi_1\text{-s}}(t-1)+1} \in B \mid \mathcal{H}_T^{\pi_0\text{-s}} = h_T^0, \mathcal{H}_{t-1}^{\pi_1\text{-s}}=h_{t-1}^1, A_t^{\pi_1\text{-s}}=a) = P_a(B).
    \end{align}
    Indeed, when $n_a^1(t-1) \geq n_a^0(T)$, the left-hand side of \eqref{eq:pf-AR-condition-dist-equal} can be rewritten as
    \begin{align}\label{eq:pf-AR-P-E}
        \E\big[\big(\mathbb{I}\{X_{a, [N_a^{\pi_1\text{-s}}(t-1) + 1]\vee N_a^{\pi_0\text{-s}}(T)} \in B\} - P_a(B)\big) \mid \mathcal{H}_T^{\pi_0\text{-s}} = h_T^0, \mathcal{H}_{t-1}^{\pi_1\text{-s}}=h_{t-1}^1, A_t^{\pi_1\text{-s}}=a\big] = 0.
    \end{align}
    Note that when $n_a^1(t-1) \geq n_a^0(T)$, we have
    \begin{align}
        &\E\big[\big(\mathbb{I}\{X_{a, [N_a^{\pi_1\text{-s}}(t-1) + 1]\vee N_a^{\pi_0\text{-s}}(T)} \in B\} - P_a(B)\big) \mid \mathcal{H}_T^{\pi_0\text{-s}} = h_T^0, \mathcal{H}_{t-1}^{\pi_1\text{-s}}=h_{t-1}^1, A_t^{\pi_1\text{-s}}=a\big] \nonumber \\
        = {}&\E[I_{[N_a^{\pi_1\text{-s}}(t-1)+1] \vee N_a^{\pi_0\text{-s}}(T)}^{T, a}(B) - I_{N_a^{\pi_1\text{-s}}(t-1) \vee N_a^{\pi_0\text{-s}}(T)}^{T, a}(B) \mid \mathcal{H}_T^{\pi_0\text{-s}} = h_T^0, \mathcal{H}_{t-1}^{\pi_1\text{-s}}=h_{t-1}^1, A_t^{\pi_1\text{-s}}=a] \nonumber \\
        = {}&\E[I_{N_a^{\pi_1\text{-s}}(t) \vee N_a^{\pi_0\text{-s}}(T)}^{T, a}(B) - I_{N_a^{\pi_1\text{-s}}(t-1) \vee N_a^{\pi_0\text{-s}}(T)}^{T, a}(B) \mid \mathcal{H}_T^{\pi_0\text{-s}} = h_T^0, \mathcal{H}_{t-1}^{\pi_1\text{-s}}=h_{t-1}^1, A_t^{\pi_1\text{-s}}=a] \label{eq:pf-AR-I-I},
    \end{align}
    where $I_t^{T, a}(B)$ is defined as in \eqref{eq:def-I-S-V}.
    Since $\sigma(\mathcal{H}_T^{\pi_0\text{-s}},\, \mathcal{H}_{t-1}^{\pi_1\text{-s}},\,  A_t^{\pi_1\text{-s}}) \subseteq \mathcal{F}_{N_a^{\pi_1\text{-s}}(t-1) \vee N_a^{\pi_0\text{-s}}(T)}^{T, a}$, we apply the tower rule of conditional expectation to obtain
    \begin{equation}
    \label{eq:pf-AR-tower-rule}
    \begin{aligned}
        \eqref{eq:pf-AR-I-I}
        &= \E\big[\E[I_{N_a^{\pi_1\text{-s}}(t) \vee N_a^{\pi_0\text{-s}}(T)}^{T, a}(B) - I_{N_a^{\pi_1\text{-s}}(t-1) \vee N_a^{\pi_0\text{-s}}(T)}^{T, a}(B) \mid \mathcal{F}_{N_a^{\pi_1\text{-s}}(t-1) \vee N_a^{\pi_0\text{-s}}(T)}^{T, a}] \\
        &\hspace{6.5cm} \mid \mathcal{H}_T^{\pi_0\text{-s}} = h_T^0, \mathcal{H}_{t-1}^{\pi_1\text{-s}}=h_{t-1}^1, A_t^{\pi_1\text{-s}}=a\big] .
    \end{aligned}
    \end{equation}
    By Proposition~\ref{prop:stopping-time-martingale}-(ii), $\{I_t^{T, a}(B),\,t \in \mathbb{N}\}$ is a $\{\mathcal{F}_t^{T,a}\}_{t \in \mathbb{N}}$-martingale.
    In addition, note that $N_a^{\pi_1\text{-s}}(t-1)$, $N_a^{\pi_1\text{-s}}(t)$ and $N_a^{\pi_0\text{-s}}(T)$ are all bounded stopping times with respect to 
    $\{\mathcal{F}_t^{T, a}\}_{t \in \mathbb{N}}$, 
    and satisfy $N_a^{\pi_1\text{-s}}(t-1) \vee N_a^{\pi_0\text{-s}}(T) \leq N_a^{\pi_1\text{-s}}(t) \vee N_a^{\pi_0\text{-s}}(T)$.
    Therefore, by the Doob's optional sampling theorem (see Theorem~6.1.2 in \cite{kallenberg1997foundations}), we have
    \begin{align}\label{eq:pf-AR-I-optional-sampling}
        \E[I_{N_a^{\pi_1\text{-s}}(t) \vee N_a^{\pi_0\text{-s}}(T)}^{T, a}(B) - I_{N_a^{\pi_1\text{-s}}(t-1) \vee N_a^{\pi_0\text{-s}}(T)}^{T, a}(B) \mid \mathcal{F}_{N_a^{\pi_1\text{-s}}(t-1) \vee N_a^{\pi_0\text{-s}}(T)}^{T, a}] = 0.
    \end{align}
    Combining \eqref{eq:pf-AR-I-I}, \eqref{eq:pf-AR-tower-rule}
    and \eqref{eq:pf-AR-I-optional-sampling}, we conclude that \eqref{eq:pf-AR-P-E} holds. This completes the proof.


\subsection{Proof of Corollary~\ref{cor:mean-var-equal}}
\label{app:cor-mean-var-equal}
By Theorem~\ref{thm:AR-stack-same-dist}, we have that
\begin{align*}
    &(A_1^{\pi_0\text{-AR}}, R_1^{\pi_0\text{-AR}}, \dots, A_T^{\pi_0\text{-AR}}, R_T^{\pi_0\text{-AR}}, A_1^{\pi_1\text{-AR}}, R_1^{\pi_1\text{-AR}}, \dots, A_T^{\pi_1\text{-AR}}, R_T^{\pi_1\text{-AR}}) \\
    \stackrel{\text{d}}{=} 
    {}& (A_1^{\pi_0\text{-s}}, R_1^{\pi_0\text{-s}}, \dots, A_T^{\pi_0\text{-s}}, R_T^{\pi_0\text{-s}}, A_1^{\pi_1\text{-s}}, R_1^{\pi_1\text{-s}}, \dots, A_T^{\pi_1\text{-s}}, R_T^{\pi_1\text{-s}}). 
\end{align*}
Therefore, for each $i \in \{0, 1\}$, the following marginal distributions are identical:
\begin{equation}\label{eq:pf-AR-stack-same-dist-i}
\begin{aligned}
    (A_1^{\pi_i\text{-AR}}, A_2^{\pi_i\text{-AR}}, \dots, A_T^{\pi_i\text{-AR}}) &\stackrel{\text{d}}{=} (A_1^{\pi_i\text{-s}}, A_2^{\pi_i\text{-s}}, \dots, A_T^{\pi_i\text{-s}}), \\
    (R_1^{\pi_i\text{-AR}}, R_2^{\pi_i\text{-AR}}, \dots, R_T^{\pi_i\text{-AR}}) &\stackrel{\text{d}}{=} (R_1^{\pi_i\text{-s}}, R_2^{\pi_i\text{-s}}, \dots, R_T^{\pi_i\text{-s}}).
\end{aligned}    
\end{equation}
Similarly, it follows Lemma~\ref{lem:bnk-stack-same-dist} that
\begin{equation}\label{eq:pf-bnk-stack-same-dist-i}
\begin{aligned}
    (A_1^{\pi_i}, A_2^{\pi_i}, \dots, A_T^{\pi_i}) &\stackrel{\text{d}}{=} (A_1^{\pi_i\text{-s}}, A_2^{\pi_i\text{-s}}, \dots, A_T^{\pi_i\text{-s}}), \\
    (R_1^{\pi_i}, R_2^{\pi_i}, \dots, R_T^{\pi_i}) &\stackrel{\text{d}}{=} (R_1^{\pi_i\text{-s}}, R_2^{\pi_i\text{-s}}, \dots, R_T^{\pi_i\text{-s}}).
\end{aligned}
\end{equation}
Since the distributions of corresponding sequences in \eqref{eq:pf-AR-stack-same-dist-i} and \eqref{eq:pf-bnk-stack-same-dist-i} are identical, any measurable function of these sequences has the same expectation.  
Note that for each policy $\pi_i$ ($i \in \{0, 1\}$), we have
\begin{align*}
    N_a^{\pi_i}(T) &= \sum_{t=1}^T \mathbb{I}\{A_t^{\pi_i} = a\}, \quad
    \Var(N_a^{\pi_i}(T)) = \E[(N_a^{\pi_i}(T) - \E[N_a^{\pi_i}(T)])^2],
\end{align*}
and the same form applies to $N_a^{\pi_i\text{-AR}}(T)$ and $N_a^{\pi_i\text{-s}}(T)$ with the corresponding action sequences.
Hence, the desired equalities in Corollary~\ref{cor:mean-var-equal} follow immediately.


\section{Proofs of Statements in Section~\ref{sec:theoretical_properties}}
In this appendix, we prove the results presented in Section~\ref{sec:theoretical_properties}. We begin with the proof of Theorem~\ref{thm:AR-pull-times}. Next, we prove Corollary~\ref{cor:moment-eq} of Proposition~\ref{prop:stopping-time-martingale} (from Section~\ref{sec:analytical_framework}), which establishes a necessary foundation for the subsequent proof of Theorem~\ref{thm:asy_var}. Following the proof of Theorem~\ref{thm:asy_var}, we conclude with the proof of Proposition~\ref{prop:UCB-validity}.

\subsection{Proof of Theorem~\ref{thm:AR-pull-times}}
\label{app:thm-AR-pull-times}
    On the one hand, combining the definitions of $N^{\text{e-AR}}(T)$ and $N^{\text{r-AR}}(T)$ in \eqref{eq:def-num-env-interactions}--\eqref{eq:def-num-replay} with the identity $\max\{x, y\} = x + y - \min\{x, y\}$, we obtain
    \begin{align}
        N^{\text{e-AR}}(T) &= \sum_{a \in [K]} N_a^{\pi_0\text{-AR}}(T) + \sum_{a \in [K]}  N_a^{\pi_1\text{-AR}}(T) - N^{\text{r-AR}}(T) \nonumber  \\
        &= 2T - N^{\text{r-AR}}(T), \label{eq:Ne-2T-Nr}
    \end{align}
    where the second equality follows from the fact
    \begin{align}\label{eq:both-sum-to-T}
        \sum_{a \in [K]} N_a^{\pi_0\text{-AR}}(T) = \sum_{a \in [K]} N_a^{\pi_1\text{-AR}}(T) = T.
    \end{align}
    Taking expectations on both sides of equation~\eqref{eq:Ne-2T-Nr} yields
    \begin{align}\label{eq:ne-equal-2T-nr}
        n^{\text{e-AR}}(T) = 2T - n^{\text{r-AR}}(T).
    \end{align}
    
    On the other hand, by applying the identity $\max\{x, y\} = \frac{1}{2}(x+y + \vert x-y \vert)$ to the definition of $N^{\text{e-AR}}(T)$, we derive
    \begin{align*}
        N^{\text{e-AR}}(T) &= \frac{1}{2} \left[\sum_{a \in [K]} N_a^{\pi_0\text{-AR}}(T) + \sum_{a \in [K]}  N_a^{\pi_1\text{-AR}}(T) + \sum_{a \in [K]} \vert  N_a^{\pi_0\text{-AR}}(T) - N_a^{\pi_1\text{-AR}}(T) \vert \right] \\
        &= T + \frac{1}{2} \sum_{a \in [K]} \vert  N_a^{\pi_0\text{-AR}}(T) - N_a^{\pi_1\text{-AR}}(T) \vert.
    \end{align*}
    Denote $N_2^{\text{e-AR}}(T) \triangleq \frac{1}{2} \sum_{a \in [K]} \vert  N_a^{\pi_0\text{-AR}}(T) - N_a^{\pi_1\text{-AR}}(T) \vert$,
    which corresponds to the number of interactions with the real environment during Phase 2 of Algorithm~\ref{alg:AR}.
    Using the fact \eqref{eq:both-sum-to-T} again, $N_2^{\text{e-AR}}(T)$ can be represented in terms of the suboptimal arms only:
    \begin{align*}
    N_2^{\text{e-AR}}(T) 
    &= \frac{1}{2} \sum_{a \neq a^*,\, a \in [K]} \vert N_a^{\pi_0\text{-AR}}(T) - N_a^{\pi_1\text{-AR}}(T) \vert + \frac{1}{2} \vert N_{a^*}^{\pi_0\text{-AR}}(T) - N_{a^*}^{\pi_1\text{-AR}}(T) \vert \\
    &= \frac{1}{2} \sum_{ a\neq a^*,\, a \in [K]} \vert N_a^{\pi_0\text{-AR}}(T) - N_a^{\pi_1\text{-AR}}(T) \vert + \frac{1}{2} \bigg\vert \bigg(T - \sum_{a \neq a^*,\, a \in [K]} N_{a}^{\pi_0\text{-AR}}(T)\bigg) \\
    &\hspace{0.4cm}- \bigg(T - \sum_{a \neq a^*,\, a \in [K]} N_{a}^{\pi_1\text{-AR}}(T)\bigg) \bigg\vert \\
    &= \frac{1}{2} \sum_{ a\neq a^*,\, a \in [K]} \vert N_a^{\pi_0\text{-AR}}(T) - N_a^{\pi_1\text{-AR}}(T) \vert + \frac{1}{2} \bigg\vert \sum_{a \neq a^*, a \in [K]} (N_{a}^{\pi_0\text{-AR}}(T) - N_{a}^{\pi_1\text{-AR}}(T)) \bigg\vert.
\end{align*}
Next, we bound the expected value of $N_2^{\text{e-AR}}(T)$ as follows:
\begin{align*}
    \E[N_2^{\text{e-AR}}(T)] &\leq \E\bigg[\sum_{ a\neq a^*,\, a \in [K]} \vert N_a^{\pi_0\text{-AR}}(T) - N_a^{\pi_1\text{-AR}}(T) \vert \bigg]\nonumber \\
    &\leq \sum_{ a\neq a^*,\, a \in [K]} \E[ N_a^{\pi_0\text{-AR}}(T)] + \sum_{ a\neq a^*,\, a \in [K]} \E[N_a^{\pi_1\text{-AR}}(T)].
\end{align*}
By Corollary~\ref{cor:mean-var-equal} and the definition of $n^{\pi_i}(T)$ for $i \in \{0, 1\}$, it follows that 
\begin{equation*}
    \E[N_2^{\text{e-AR}}(T)] \leq n^{\pi_0}(T) + n^{\pi_1}(T).
\end{equation*}
Hence, we have
\begin{align}\label{eq:ne-T-n0-n1}
    n^{\text{e-AR}}(T) = \E[N^{\text{e-AR}}(T)] = T + \E[N_2^{\text{e-AR}}(T)] \leq T + n^{\pi_0}(T) + n^{\pi_1}(T).
\end{align}
Combining \eqref{eq:ne-T-n0-n1} and \eqref{eq:ne-equal-2T-nr}, we conclude that
\begin{equation*}
    n^{\text{e-AR}}(T) \leq \min\{T + n^{\pi_0}(T) + n^{\pi_1}(T),\, 2T-n^{\text{r-AR}}(T)\}.
\end{equation*}


\subsection{Proof of Corollary~\ref{cor:moment-eq}}
\label{app:cor-moment-eq}
   The results (i) and (ii) of Proposition~\ref{prop:stopping-time-martingale} allow us to directly apply the martingale stopping theorem (see Theorem~6.2.2 in \cite{ross1995stochastic}).
   We first apply the theorem to the martingale process $\{S_t^{T, a},\, t \in \mathbb{N}\}$ and obtain
    \begin{equation}\label{eq:lem-pf-S-equal-0}
        \begin{aligned}
        \E[ S_{N_a^{\pi_i\text{-s}}(T)}^{T, a}] &= \E[S_0^{T, a}] = 0 \quad \text{ for } i \in \{0, 1\}, \\
        \E[ S_{N_a^{\pi_0\text{-s}}(T) \wedge N_a^{\pi_1\text{-s}}(T)}^{T, a}] &= \E[S_0^{T, a}]  = 0.
    \end{aligned}
    \end{equation}
    Similarly, applying the martingale stopping theorem to the martingale process $\{V_t^{T, a},\, t \in \mathbb{N}\}$ gives
    \begin{equation}\label{eq:V-right-hand}
        \begin{aligned}
            \E[ V_{N_a^{\pi_i\text{-s}}(T)}^{T, a}] &= \E[V_0^{T, a}] = \E[(S_0^{T, a})^2] = 0 \quad \text{ for } i \in \{0, 1\}, \\
            \E[ V_{N_a^{\pi_0\text{-s}}(T) \wedge N_a^{\pi_1\text{-s}}(T)}^{T, a}]  &= \E[V_0^{T, a}] = \E[(S_0^{T, a})^2] = 0 .
        \end{aligned}
    \end{equation}
    On the other hand, according to the definition of $V_t^{T, a}$ in \eqref{eq:def-I-S-V}, we have 
    \begin{equation}\label{eq:V-left-hand}
        \begin{aligned}
            \E[ V_{N_a^{\pi_i\text{-s}}(T)}^{T, a}] &= \E[(S_{ N_a^{\pi_i\text{-s}}(T)}^{T, a})^2] - \sigma_a^2 \E[N_a^{\pi_i\text{-s}}(T)] \quad \text{ for } i \in \{0, 1\}, \\
            \E[ V_{N_a^{\pi_0\text{-s}}(T) \wedge N_a^{\pi_1\text{-s}}(T)}^{T, a}]  &= \E[(S_{N_a^{\pi_0\text{-s}}(T) \wedge N_a^{\pi_1\text{-s}}(T)}^{T, a})^2] - \sigma_a^2 \E[N_a^{\pi_0\text{-s}}(T) \wedge N_a^{\pi_1\text{-s}}(T)].
        \end{aligned}
    \end{equation}
    Combining equations \eqref{eq:V-right-hand} and \eqref{eq:V-left-hand}, and using $\E[ S_{N_a^{\pi_i\text{-s}}(T)}^{T, a}] = 0$ and $\E[ S_{N_a^{\pi_0\text{-s}}(T) \wedge N_a^{\pi_1\text{-s}}(T)}^{T, a}] = 0$ established in \eqref{eq:lem-pf-S-equal-0}, we obtain
    \begin{align*}
        \Var(S_{N_a^{\pi_i\text{-s}}(T)}^{T, a}) = \E[(S_{N_a^{\pi_i\text{-s}}(T)}^{T, a})^2] - (\E[S_{N_a^{\pi_i\text{-s}}(T)}^{T, a}])^2 = \E[(S_{N_a^{\pi_i\text{-s}}(T)}^{T, a})^2] = \sigma_a^2 \E[N_a^{\pi_i\text{-s}}(T)] \quad \text{ for } i \in \{0, 1\},
    \end{align*}    
    and similarly
    \begin{align*}
        \Var( S_{N_a^{\pi_0\text{-s}}(T) \wedge N_a^{\pi_1\text{-s}}(T)}^{T, a}) &= \E[(S_{N_a^{\pi_0\text{-s}}(T) \wedge N_a^{\pi_1\text{-s}}(T)}^{T, a})^2] - (\E[S_{N_a^{\pi_0\text{-s}}(T) \wedge N_a^{\pi_1\text{-s}}(T)}^{T, a}])^2 \\ &= \E[(S_{N_a^{\pi_0\text{-s}}(T) \wedge N_a^{\pi_1\text{-s}}(T)}^{T, a})^2] \\
        &= \sigma_a^2 \E[N_a^{\pi_0\text{-s}}(T) \wedge N_a^{\pi_1\text{-s}}(T)].
    \end{align*}
    This concludes the proof.


\subsection{Proof of Theorem~\ref{thm:asy_var}}
\label{app:thm-asy_var}
Corollary~\ref{cor:mean-var-equal} and Condition~\eqref{eq:var-assump} implies that
\begin{equation}\label{eq:var-assump-stack}
    \E[N_a^{\pi_i\text{-s}}(T)] = o(T) \quad \text{and} \quad
    \Var(N_a^{\pi_i\text{-s}}(T)) = o(T).
\end{equation}
By Theorem~\ref{thm:AR-stack-same-dist}, we have that
\begin{align*}
    &(A_1^{\pi_0\text{-AR}}, R_1^{\pi_0\text{-AR}}, \dots, A_T^{\pi_0\text{-AR}}, R_T^{\pi_0\text{-AR}}, A_1^{\pi_1\text{-AR}}, R_1^{\pi_1\text{-AR}}, \dots, A_T^{\pi_1\text{-AR}}, R_T^{\pi_1\text{-AR}}) \\
    \stackrel{\text{d}}{=} 
    {}& (A_1^{\pi_0\text{-s}}, R_1^{\pi_0\text{-s}}, \dots, A_T^{\pi_0\text{-s}}, R_T^{\pi_0\text{-s}}, A_1^{\pi_1\text{-s}}, R_1^{\pi_1\text{-s}}, \dots, A_T^{\pi_1\text{-s}}, R_T^{\pi_1\text{-s}}). 
\end{align*}
Therefore, the following marginal distributions are identical:
\begin{align*}
    &(R_1^{\pi_0\text{-AR}}, R_2^{\pi_0\text{-AR}}, \dots, R_T^{\pi_0\text{-AR}}, R_1^{\pi_1\text{-AR}}, R_2^{\pi_1\text{-AR}}, \dots, R_T^{\pi_1\text{-AR}}) \\
    \stackrel{\text{d}}{=} 
    {}& (R_1^{\pi_0\text{-s}}, R_2^{\pi_0\text{-s}}, \dots, R_T^{\pi_0\text{-s}}, R_1^{\pi_1\text{-s}}, R_2^{\pi_1\text{-s}}, \dots, R_T^{\pi_1\text{-s}}). 
\end{align*}
It then follows that for any $i,\, j \in \{0, 1\}$, 
\begin{align}\label{eq:pf-AR-stack-cov-equal}
    \operatorname{Cov}\bigg(\sum_{t=1}^{T} R_t^{\pi_i\text{-AR}},\,\sum_{t=1}^{T} R_t^{\pi_j\text{-AR}}\bigg) = \operatorname{Cov}\bigg(\sum_{t=1}^{T} R_t^{\pi_i\text{-s}},\,\sum_{t=1}^{T} R_t^{\pi_j\text{-s}}\bigg) .
\end{align}
We denote $C_{ij}(T) = \operatorname{Cov}(\sum_{t=1}^{T} R_t^{\pi_i\text{-s}},\,\sum_{t=1}^{T} R_t^{\pi_j\text{-s}})$.
By expressing the cumulative reward rewards $\sum_{t=1}^{T} R_t^{\pi_i\text{-s}}$ and $\sum_{t=1}^{T} R_t^{\pi_j\text{-s}}$ as the sum of the arm-specific rewards and rearranging the terms and then rearranging the terms, we can write $C_{ij}$ as
\begin{equation}\label{eq:C-cov-parts}
\begin{aligned}
    C_{ij}(T) &= \operatorname{Cov}\bigg(\sum_{a=1}^K \sum_{t=1}^{N_a^{\pi_i\text{-s}}(T)} X_{a,t},\,\sum_{a=1}^K \sum_{t=1}^{N_a^{\pi_j\text{-s}}(T)} X_{a,t}\bigg)\\
    &= \sum_{a=1}^K \sum_{b=1}^K \operatorname{Cov}( S_{N_a^{\pi_i\text{-s}}(T)}^{T, a} ,\,S_{N_b^{\pi_j\text{-s}}(T)}^{T, b})  + \sum_{a=1}^K \sum_{b=1}^K \mu_b \operatorname{Cov}(N_a^{\pi_i\text{-s}}(T),\,N_b^{\pi_j\text{-s}}(T)) \\
    &\hspace{1em} + \sum_{a=1}^K \sum_{b=1}^K \mu_a \operatorname{Cov}( N_a^{\pi_i\text{-s}}(T),\,S_{N_b^{\pi_j\text{-s}}(T)}^{T, b}) + \sum_{a=1}^K \sum_{b=1}^K \mu_b \operatorname{Cov}(S_{N_a^{\pi_i\text{-s}}(T)}^{T, a},\,N_b^{\pi_j\text{-s}}(T)),
\end{aligned}
\end{equation}
where $S_t^{T, a}$ is defined as in \eqref{eq:def-I-S-V}.

We aim to prove that $\lim_{T \rightarrow \infty} \frac{1}{T} C_{ij} (T) = \sigma_{a^*}^2$. To this end, we will establish the following three statements:
\begin{itemize}
    \item [(i)] For all $a,\, b \in [K]$ with $a \neq a^*$ or $b \neq a^*$, $\operatorname{Cov}( S_{N_a^{\pi_i\text{-s}}(T)}^{T, a} ,\,S_{N_b^{\pi_j\text{-s}}(T)}^{T, b}) = o(T)$. Moreover, $\lim_{T \rightarrow \infty }\frac{1}{T}\operatorname{Cov}(S_{N_{a^*}^{\pi_i\text{-s}}(T)}^{T, a^*} ,\,S_{N_{a^*}^{\pi_j\text{-s}}(T)}^{T, a^*}) = \sigma_{a^*}^2$.
    \item [(ii)] For all $a,\, b \in [K]$, $\operatorname{Cov}(N_a^{\pi_i\text{-s}}(T),\,N_b^{\pi_j\text{-s}}(T)) = o(T)$.
    \item [(iii)] For all $a,\, b \in [K]$,  $\operatorname{Cov}( N_a^{\pi_i\text{-s}}(T),\,S_{N_b^{\pi_j\text{-s}}(T)}^{T, b}) = o(T)$ and $\operatorname{Cov}(S_{ N_a^{\pi_i\text{-s}}(T)}^{T, a},\,N_b^{\pi_j\text{-s}}(T)) = o(T)$.
\end{itemize}
Before proving the above statements (i)-(iii) individually, we first show that $\Var(N_{a^*}^{\pi_i\text{-s}}(T)) = o(T)$ for $i \in \{0, 1\}$. Indeed, for the optimal arm $a^*$, using the identity
$N_{a^*}^{\pi_i\text{-s}}(T) = T - \sum_{a \neq a^*} N_a^{\pi_i\text{-s}}(T)$,
we obtain
\begin{align}\label{eq:pf-var-N-a^*}
    \Var(N_{a^*}^{\pi_i\text{-s}}(T)) =  \Var\bigg(T - \sum_{a\neq a^*} N_a^{\pi_i\text{-s}}(T)\bigg) = \Var\bigg(\sum_{a\neq a^*} N_a^{\pi_i\text{-s}}(T)\bigg) \leq (K-1) \sum_{a\neq a^*}\Var(N_{a}^{\pi_i\text{-s}}(T)),
\end{align}
where the last inequality follows from the Cauchy–Schwarz inequality on real numbers.
By \eqref{eq:var-assump-stack}, we have $\Var(N_a^{\pi_i\text{-s}}(T)) = o(T)$; hence, \eqref{eq:pf-var-N-a^*} implies that $\Var(N_{a^*}^{\pi_i\text{-s}}(T)) = o(T)$ for $i \in \{0, 1\}$. We next proceed to prove statements (i)-(iii). 
\begin{itemize}
    \item [(i)] 
    For any $a,\, b \in [K]$ with $a \neq a^*$ or $b \neq a^*$, assume without loss of generality that arm $a$ is suboptimal. By the Cauchy–Schwarz inequality together with Corollary~\ref{cor:moment-eq}, we have
    \begin{align}
        \vert\operatorname{Cov}( S_{N_a^{\pi_i\text{-s}}(T)}^{T, a},\, S_{N_b^{\pi_j\text{-s}}(T)}^{T, b} )\vert 
        &\leq \sqrt{\Var(S_{N_a^{\pi_i\text{-s}}(T)}^{T, a}) \Var(S_{ N_b^{\pi_j\text{-s}}(T)}^{T, b})} \nonumber\\
        & = \sqrt{\sigma_a^2 \sigma_b^2 \E[N_a^{\pi_i\text{-s}}(T)]\E[N_b^{\pi_j\text{-s}}(T)]} \nonumber \\
        & \leq \sqrt{\sigma_a^2 \sigma_b^2 T \E[N_a^{\pi_i\text{-s}}(T)]}\label{eq:pf-cov-S-S}.
    \end{align}
    Since by \eqref{eq:var-assump-stack}, we have $\E[N_a^{\pi_i\text{-s}}(T)] = o(T)$, then \eqref{eq:pf-cov-S-S} implies that  $\operatorname{Cov}( S_{N_a^{\pi_i\text{-s}}(T)}^{T, a},\, S_{ N_b^{\pi_j\text{-s}}(T)}^{T, b} ) = o(T)$.
    
    Moreover, by Corollary~\ref{cor:moment-eq}, we have $\E[ S_{N_{a^*}^{\pi_i\text{-s}}(T)}^{T, a^*}] = 0$ for $i \in \{0, 1\}$.
    Hence, according to the definition of covariance,
    \begin{align*}
        \operatorname{Cov}( S_{N_{a^*}^{\pi_i\text{-s}}(T)}^{T, a^*},\, S_{N_{a^*}^{\pi_j\text{-s}}(T)}^{T, a^*}) &= \E[ S_{N_{a^*}^{\pi_i\text{-s}}(T)}^{T, a^*} S_{N_{a^*}^{\pi_j\text{-s}}(T)}^{T, a^*}] - \E[ S_{N_{a^*}^{\pi_i\text{-s}}(T)}^{T, a^*}] \E[S_{N_{a^*}^{\pi_j\text{-s}}(T)}^{T, a^*}] \\
        &= \E[ S_{N_{a^*}^{\pi_i\text{-s}}(T)}^{T, a^*} S_{N_{a^*}^{\pi_j\text{-s}}(T)}^{T, a^*}].
    \end{align*}
    Note that $S_{N_{a^*}^{\pi_i\text{-s}}(T)}^{T, a^*}$ is $\mathcal{F}_{N_{a^*}^{\pi_i\text{-s}}(T)}^{T, a^*}$-measurable.
    Taking the conditional expectation with respect to
    $\mathcal{F}_{N_{a^*}^{\pi_i\text{-s}}(T)}^{T, a^*}$ and applying the Doob's optional sampling theorem (see Theorem~6.1.2 in \cite{kallenberg1997foundations}) to the martingale $\{S_t^{T, a^*},\, t \in \mathbb{N}\}$, we obtain
    \begin{align*}
        \operatorname{Cov}( S_{N_{a^*}^{\pi_i\text{-s}}(T)}^{T, a^*},\, S_{N_{a^*}^{\pi_j\text{-s}}(T)}^{T, a^*}) &= \E\big[ \E[ S_{N_{a^*}^{\pi_i\text{-s}}(T)}^{T, a^*} S_{N_{a^*}^{\pi_j\text{-s}}(T)}^{T, a^*} \mid \mathcal{F}_{N_{a^*}^{\pi_i\text{-s}}(T)}^{T, a^*}]\big] \\
        &= \E[ S_{N_{a^*}^{\pi_i\text{-s}}(T)}^{T, a^*} S_{N_{a^*}^{\pi_i\text{-s}}(T) \wedge N_{a^*}^{\pi_j\text{-s}}(T)}^{T, a^*} ].
    \end{align*}
    Since $S_{N_{a^*}^{\pi_i\text{-s}}(T)\wedge N_{a^*}^{\pi_j\text{-s}}(T)}^{T, a^*}$
    is $\mathcal{F}_{N_{a^*}^{\pi_i\text{-s}}(T)\wedge N_{a^*}^{\pi_j\text{-s}}(T)}^{T, a^*}$-measurable,
    taking the conditional expectation with respect to
    $\mathcal{F}_{N_{a^*}^{\pi_i\text{-s}}(T)\wedge N_{a^*}^{\pi_j\text{-s}}(T)}^{T, a^*}$
    and applying the Doob's optional sampling theorem once more yields
    \begin{align*}
        \operatorname{Cov}( S_{N_{a^*}^{\pi_i\text{-s}}(T)}^{T, a^*},\, S_{N_{a^*}^{\pi_j\text{-s}}(T)}^{T, a^*})  &= \E\big[ \E[ S_{N_{a^*}^{\pi_i\text{-s}}(T)}^{T, a^*} S_{N_{a^*}^{\pi_i\text{-s}}(T) \wedge N_{a^*}^{\pi_j\text{-s}}(T)}^{T, a^*} \mid \mathcal{F}_{N_{a^*}^{\pi_i\text{-s}}(T) \wedge N_{a^*}^{\pi_j\text{-s}}(T)}^{T, a^*}]\big] \\
        &= \E[(S_{N_{a^*}^{\pi_i\text{-s}}(T) \wedge N_{a^*}^{\pi_j\text{-s}}(T)}^{T, a^*})^2].
    \end{align*}
    Then, it follows from Corollary~\ref{cor:moment-eq} that
    \begin{align}
        \operatorname{Cov}( S_{N_{a^*}^{\pi_i\text{-s}}(T)}^{T, a^*},\, S_{N_{a^*}^{\pi_j\text{-s}}(T)}^{T, a^*}) &= \E[(S_{N_{a^*}^{\pi_i\text{-s}}(T) \wedge N_{a^*}^{\pi_j\text{-s}}(T)}^{T, a^*})^2] \nonumber\\
        &= \Var(S_{N_{a^*}^{\pi_i\text{-s}}(T) \wedge N_{a^*}^{\pi_j\text{-s}}(T)}^{T, a^*}) \nonumber\\
        &= \sigma_{a^*}^2 \E[N_{a^*}^{\pi_i\text{-s}}(T) \wedge N_{a^*}^{\pi_j\text{-s}}(T)] \label{eq:pf-cov-S-S-a^*}.
    \end{align}
    Note that
    \begin{align}\label{eq:pf-squeeze}
       \E[N_{a^*}^{\pi_i\text{-s}}(T)] + \E[N_{a^*}^{\pi_j\text{-s}}(T)] - T \leq \E[N_{a^*}^{\pi_i\text{-s}}(T) \wedge N_{a^*}^{\pi_j\text{-s}}(T)] \leq T.
    \end{align}
    Since $\E[N_{a}^{\pi_i\text{-s}}(T)] = o(T)$ for every suboptimal arm $a$ and $i\in \{0, 1\}$, it follows that $\E[N_{a^*}^{\pi_i\text{-s}}(T)] = T - o(T)$ for $i\in \{0, 1\}$, then we have 
    \begin{equation}\label{eq:pf-LHS-o}
        \E[N_{a^*}^{\pi_i\text{-s}}(T)] + \E[N_{a^*}^{\pi_j\text{-s}}(T)] - T = T- o(T).
    \end{equation}
    Hence, combining equations \eqref{eq:pf-cov-S-S-a^*}, \eqref{eq:pf-squeeze} and \eqref{eq:pf-LHS-o}, we conclude that
    \begin{equation*}
        \lim_{T\rightarrow \infty} \frac{1}{T} \operatorname{Cov}( S_{N_{a^*}^{\pi_i\text{-s}}(T)}^{T, a^*},\, S_{N_{a^*}^{\pi_j\text{-s}}(T)}^{T, a^*}) = \sigma_{a^*}^2 \quad \text{ for } i,\, j \in \{0, 1\}.
    \end{equation*}
    \item [(ii)]
    For any $a, \, b \in [K]$, 
    the Cauchy–Schwarz inequality implies that
    \begin{align*}
        \vert \operatorname{Cov}(N_a^{\pi_i\text{-s}}(T), \, N_b^{\pi_j\text{-s}}(T)) \vert \leq \sqrt{\Var(N_a^{\pi_i\text{-s}}(T)) \Var(N_b^{\pi_j\text{-s}}(T))}. 
    \end{align*}
    Since $\Var(N_a^{\pi_i\text{-s}}(T)) = o(T)$ for all $a \in [K]$ and $i \in \{0, 1\}$, it follows that $ \operatorname{Cov}(N_a^{\pi_i\text{-s}}(T), \, N_b^{\pi_j\text{-s}}(T)) = o(T)$ for $i,\, j \in \{0, 1\}$.
    \item [(iii)]
    For any $a, \, b \in [K]$, 
    by the Cauchy–Schwarz inequality together with Corollary~\ref{cor:moment-eq}, we obtain
    \begin{align*}
        \vert \operatorname{Cov}( N_a^{\pi_i\text{-s}}(T),\, S_{ N_b^{\pi_j\text{-s}}(T)}^{T, b}) \vert &\leq \sqrt{\Var(N_a^{\pi_i\text{-s}}(T)) \Var(S_{N_b^{\pi_j\text{-s}}(T)}^{T, b})} \\
        &= \sqrt{\sigma_b^2 \Var(N_a^{\pi_i\text{-s}}(T))\E[N_b^{\pi_j\text{-s}}(T)]} \\
        &\leq \sqrt{\sigma_b^2 T \Var(N_a^{\pi_i\text{-s}}(T))}.
    \end{align*}
    Given that $\Var(N_{a}^{\pi_i\text{-s}}(T)) = o(T)$ for each arm $a \in [K]$ and $i\in \{0, 1\}$, we conclude that $\operatorname{Cov}( N_a^{\pi_i\text{-s}}(T),\, S_{N_b^{\pi_j\text{-s}}(T)}^{T, b}) = o(T)$.
    Similarly, we can derive 
    \begin{align*}
        \vert \operatorname{Cov}( S_{N_a^{\pi_i\text{-s}}(T)}^{T, a}, \,N_b^{\pi_j\text{-s}}(T)) \vert \leq \sqrt{\sigma_a^2 T \Var(N_b^{\pi_j\text{-s}}(T))},
    \end{align*}
    which implies that $\operatorname{Cov}( S_{N_a^{\pi_i\text{-s}}(T)}^{T, a}, \,N_b^{\pi_j\text{-s}}(T)) = o(T)$ for $i,\, j \in \{0, 1\}$.
\end{itemize}
Having established statements (i)-(iii) above, we now combine them with equation \eqref{eq:C-cov-parts}, from which we immediately obtain
\begin{equation}\label{eq:pf-lim-C-ij}
    \lim_{T \rightarrow \infty} \frac{1}{T} C_{ij}(T) = \sigma_{a^*}^2 \quad \text{ for } i,\, j \in \{0, 1\}. 
\end{equation}
For the na\"{\i}ve estimator $\thetahat{b}(T)$, since $\Var(\thetahat{b}(T)) = \Var(\sum_{t=1}^T R_t^{\pi_0}) + \Var(\sum_{t=1}^T R_t^{\pi_1})$, it follows from Corollary~\ref{cor:mean-var-equal} that
\begin{align*}
    \Var(\thetahat{b}(T)) = \Var\bigg(\sum_{t=1}^T R_t^{\pi_0\text{-s}}\bigg) + \Var\bigg(\sum_{t=1}^T R_t^{\pi_1\text{-s}}\bigg) = C_{00}(T) + C_{11}(T).
\end{align*}
Therefore, by \eqref{eq:pf-lim-C-ij}, we have $\lim_{T\rightarrow \infty} \frac{1}{T}\Var(\thetahat{b}(T)) = 2\sigma_{a^*}^2$.
In contrast, for the proposed estimator, note that
$\Var(\thetahat{AR}(T)) = \Var(\sum_{t=1}^T R_t^{\pi_0\text{-AR}}) + \Var(\sum_{t=1}^T R_t^{\pi_1\text{-AR}}) - 2\operatorname{Cov}(\sum_{t=1}^T R_t^{\pi_0\text{-AR}}, \sum_{t=1}^T R_t^{\pi_1\text{-AR}})$.
By \eqref{eq:pf-AR-stack-cov-equal}, it follows that
\begin{align*}
    \Var(\thetahat{AR}(T)) &= \Var\bigg(\sum_{t=1}^T R_t^{\pi_0\text{-s}}\bigg) + \Var\bigg(\sum_{t=1}^T R_t^{\pi_1\text{-s}}\bigg) - 2\operatorname{Cov}\bigg(\sum_{t=1}^T R_t^{\pi_0\text{-s}}, \sum_{t=1}^T R_t^{\pi_1\text{-s}}\bigg) \\
    &= C_{00}(T) + C_{11}(T) - 2C_{01}(T).
\end{align*}
Applying $\eqref{eq:pf-lim-C-ij}$ yields 
$\lim_{T\rightarrow \infty} \frac{1}{T}\Var(\thetahat{AR}(T)) = 0$. This completes the proof.

\subsection{Proof of Proposition~\ref{prop:UCB-validity}}
\label{app:prop-UCB-validity}
\begin{itemize}

\item [(i)]
Fix an arbitrary suboptimal arm $a \in [K]\setminus \{a^*\}$ and denote $u_a(T) \triangleq \lceil \frac{4\alpha \log T}{\Delta_a^2} \rceil$ and event $D_t \triangleq \{A_t^{\pi}=a,\, N_a^{\pi}(t-1) \geq u_a(T)\}$ for each $t \in [T]$.
Following a similar argument as in the proof of Theorem~1 in \cite{auer2002finite}, except that the confidence term $c_{t,s} \triangleq \sqrt{(2 \log t) / s}$ is modified to $c_{t,s} \triangleq \sqrt{(\alpha \log t) / s}$, we can obtain  
\begin{align}\label{eq:P-N-leq}
    \Prob(D_t) \leq 2t^{2-2\alpha}.
\end{align}
Noting that the arm index $a$ in our notation corresponds to $i$ in theirs, 
the horizon $T$ to $n$, 
the action $A_t^{\pi}$ to $I_t$, 
and the number of pulls of arm $a$, $N_a^{\pi}(T)$, 
corresponds to $T_i(n)$ in their notation.

{Next, we can decompose $N_a^{\pi}(T)$ as
\begin{align}
N_a^{\pi}(T) &= \sum_{t=1}^T \mathbb{I}\{A_t^{\pi}=a,\, N_a^{\pi}(t-1) < u_a(T)\} + \sum_{t=1}^T \mathbb{I}\{A_t^{\pi}=a,\, N_a^{\pi}(t-1) \geq u_a(T)\} \nonumber \\
&\leq u_a(T) + \sum_{t=u_a(T)+1}^T \mathbb{I}\{D_t\}.
\label{eq:pf-N-a-leq-u-B}
\end{align}
We now bound the second moment $\E[(N_a^{\pi}(T))^2]$. Squaring \eqref{eq:pf-N-a-leq-u-B} and using $(x+y)^2 \leq 2x^2+2y^2$ yields $(N_a^{\pi}(T))^2 \leq 2(u_a(T))^2 + 2\big(\sum_{t=u_a(T)+1}^T \mathbb{I}\{D_t\}\big)^2$.
By taking expectations on both sides, we obtain
\begin{align}
    \E[(N_a^{\pi}(T))^2] &\leq 2 (u_a(T))^2 + 2 \sum_{t=u_a(T)+1}^T \Prob(D_t) + 4 \sum_{t=u_a(T)+1}^{T-1} \sum_{s=t+1}^T \Prob(D_t \cap D_s) \nonumber \\
    &\leq 2 (u_a(T))^2 + 4 \sum_{t=u_a(T)+1}^T t^{2-2\alpha} + 8 \sum_{t=u_a(T)+1}^{T-1} \sum_{s=t+1}^T s^{2-2\alpha}
    \label{eq:pf-E-N-sq}
\end{align}
where the last inequality follows from \eqref{eq:P-N-leq} and the fact  $\Prob(D_t \cap D_s) \leq \Prob(D_s)$.
Since the tail sum of a decreasing function 
can be bounded by the corresponding definite integral, and noting that $\alpha \geq 2$, we have
\begin{align}
    \sum_{t=u_a(T)+1}^T  t^{2-2\alpha}
    \leq \frac{1}{2 \alpha - 3}[(u_a(T))^{3-2\alpha} - T^{3-2\alpha}] \leq \frac{1}{u_a(T)}.
    \label{eq:pf-sum-u-T-alpha}
\end{align}
Similarly, we have $\sum_{s=t+1}^T  s^{2-2\alpha} \leq \frac{1}{t}$. It follows that
\begin{align}\label{eq:pf-sum-u-T-harm}
    \sum_{t=u_a(T)+1}^{T-1} \sum_{s=t+1}^T s^{2-2\alpha} \leq \sum_{t=u_a(T)+1}^{T-1} \frac{1}{t} & \leq \log(T-1) - \log(u_a(T)) \leq \log(T).
\end{align}
Substituting the bounds from \eqref{eq:pf-sum-u-T-alpha} and \eqref{eq:pf-sum-u-T-harm} into \eqref{eq:pf-E-N-sq}, we obtain
\begin{align*}
    \E[(N_a^{\pi}(T))^2] \leq 2 (u_a(T))^2 + \frac{4}{u_a(T)} + 8 \log(T).
\end{align*}
Since $u_a(T) = \lceil \frac{4\alpha \log T}{\Delta_a^2} \rceil$ with $\alpha \geq 2$, we have $\E[(N_a^{\pi}(T))^2] = O((\log T)^2)$. 
It then follows from $\Var(N_a^{\pi}(T)) \leq \E[ (N_a^{\pi}(T))^2 ]$ that, for UCB1 policy with $\alpha \geq 2$, $\Var(N_a^{\pi}(T)) = O((\log T)^2)$ holds.}

\item [(ii)]
{For any arm $a \in [K]$ and any integer $n \in [T]$, let $\hat{\mu}_{a,n} \triangleq \frac{1}{n} \sum_{k=1}^{n} X_{a,k}$ be the empirical mean of arm $a$ after $n$ pulls, and $X_{a,k}$ is the reward from the $k$-th pull. 
Let $c_{\delta, n} \triangleq \sqrt{2 \log (1 / \delta) / n}$ for $n \in [T]$. Without loss of generality, assume that arm 1 is optimal, i.e. $\mu_1 = \mu^*$.
Next, we fix an arbitrary suboptimal arm $a \in \{2, \ldots, K\}$ and denote $\theta_a \triangleq \frac{1}{2}(\mu_1 + \mu_a)$.
Note that for any integer $u \in [T]$, if
\begin{align*}
    \min_{n \in [T]} \hat{\mu}_{1,n} + c_{\delta, n} > \theta_a \quad \text{and} \quad \max_{n \in \{u, \ldots, T\}} \hat{\mu}_{a,n} + c_{\delta, n} < \theta_a,
\end{align*}
then arm $a$ would not be pulled at any step $t \in \{u,\ldots,T\}$ and hence $N_a^{\pi}(T) < u$.
Denote the events $E_1 \triangleq \{\min_{n \in [T]} \hat{\mu}_{1,n} + c_{\delta, n} \leq \theta_a\}$ and $E_a(u) \triangleq \{\max_{n \in \{u, \ldots, T\}} \hat{\mu}_{a,n} + c_{\delta, n} \geq \theta_a\}$.
It follows that for any integer $u \in [T]$,
\begin{align}\label{eq:pf-N-subset-E1-Eu}
    \{N_a^{\pi}(T) \geq u\} \subseteq E_1 \cup E_a(u).
\end{align}}

{We proceed to bound the probabilities of $E_1$ and $E_a(u)$. 
For $E_1$, 
\begin{align*}
    \Prob(E_1) \leq \sum_{n=1}^T \Prob(\hat{\mu}_{1,n} + c_{\delta, n} \leq \theta_a) \leq \sum_{n=1}^T \Prob\left(\hat{\mu}_{1,n} - \mu_1 \leq - \frac{\Delta_a}{2} - c_{\delta,n}\right).
\end{align*}
where $\Delta_a \triangleq \mu_1 - \mu_a$.
We further apply the Cram\'er--Chernoff method for $1$-sub-Gaussian variables (see \citet[Corollary 5.5]{lattimore2020bandit}) to obtain
\begin{align}
    \Prob(E_1) \leq \sum_{n=1}^T \exp\left\{-\frac{n}{2}\left(\frac{\Delta_a}{2} + c_{\delta, n}\right)^2\right\} \leq \delta \cdot \sum_{n=1}^T e^{- \Delta_a^2 n / 8} \leq \frac{\delta}{1 - e^{- \Delta_a^2 / 8}},
    \label{eq:pf-P-E1}
\end{align}
where the last inequality follows from the formula for a finite geometric series.
Next, consider $E_a(u)$. Define $u_a(\delta) \triangleq \lceil \frac{32 \log(1/\delta)}{\Delta_a^2} \rceil$.
Then, it holds that $c_{\delta, n} \leq \frac{\Delta_a}{4}$ for all $n > u_a(\delta
)$.
For any integer $u > u_a(\delta
)$, we derive
\begin{align*}
    \Prob(E_a(u)) &\leq
    \sum_{n=u}^T \Prob\left(\hat{\mu}_{a,n} - \mu_a > \frac{\Delta_a}{2} - c_{\delta,n}\right) \\
    &\leq
    \sum_{n=u}^T \Prob\left(\hat{\mu}_{a,n} - \mu_a > \frac{\Delta_a}{4}\right).
\end{align*}
Invoking the Cram\'er--Chernoff method and the formula for a finite geometric series again, we obtain that for any integer $u > u_a(\delta
)$,
\begin{align}
    \Prob(E_a(u)) \leq \sum_{n=u}^T e^{- \Delta_a^2 n / 32} \leq \frac{e^{- \Delta_a^2 u / 32}}{1 - e^{- \Delta_a^2 / 32}}.
    \label{eq:pf-P-Eu}
\end{align}
Combining \eqref{eq:pf-N-subset-E1-Eu}, \eqref{eq:pf-P-E1}, and \eqref{eq:pf-P-Eu} yields that for any integer $u > u_a(\delta
)$,
\begin{align}
    \Prob(N_a^{\pi}(T) \geq u) \leq \frac{\delta}{1 - e^{- \Delta_a^2 / 8}} + \frac{e^{- \Delta_a^2 u / 32}}{1 - e^{- \Delta_a^2 / 32}}.
    \label{eq:pf-UCB-delta-tail-prob}
\end{align}
Using the tail integral formula for $\E[(N_a^{\pi}(T))^2]$, we obtain
\begin{align}
    \E[(N_a^{\pi}(T))^2] &= \sum_{n=1}^T (2n-1) \Prob(N_a^{\pi}(T) \geq n) \nonumber \\ 
    &= \sum_{n=1}^{u_a(\delta)} (2n-1) \Prob(N_a^{\pi}(T) \geq n)  + \sum_{n=u_a(\delta) + 1}^T (2n-1) \Prob(N_a^{\pi}(T) \geq n).
    \label{eq:pf-N-square-leq}
\end{align}
For the first term in \eqref{eq:pf-N-square-leq},
\begin{align}
    \sum_{n=1}^{u_a(\delta)} (2n-1) \Prob(N_a^{\pi}(T) \geq n) \leq \sum_{n=1}^{u_a(\delta)} (2n-1) = (u_a(\delta))^2.
    \label{eq:pf-N-square-first-term}
\end{align}
For the second term in \eqref{eq:pf-N-square-leq}, it follows from \eqref{eq:pf-UCB-delta-tail-prob} that
\begin{align}
    \sum_{n=u_a(\delta) + 1}^T (2n-1) \Prob(N_a^{\pi}(T) \geq n) &\leq \frac{T^2 \delta}{1 - e^{- \Delta_a^2 / 8}} + \frac{2}{1 - e^{- \Delta_a^2 / 32}} \cdot \sum_{n=1}^{\infty} n e^{- \Delta_a^2 n / 32} \nonumber \\
    & \leq \frac{T^2 \delta}{1 - e^{- \Delta_a^2 / 8}} + \frac{2 e^{- \Delta_a^2 / 32}}{(1 - e^{- \Delta_a^2 / 32})^3},
    \label{eq:pf-N-square-second-term}
\end{align}
where the last inequality by applying the formula $\sum_{n=1}^{\infty} n x^n = \frac{x}{(1-x)^2}$ with $x = e^{- \Delta_a^2 / 32}$. 
Combining \eqref{eq:pf-N-square-leq}, \eqref{eq:pf-N-square-first-term}, and \eqref{eq:pf-N-square-second-term}, we have
\begin{align*}
    \E[(N_a^{\pi}(T))^2] \leq (u_a(\delta))^2 + \frac{T^2 \delta}{1 - e^{- \Delta_a^2 / 8}} + \frac{2 e^{- \Delta_a^2 / 32}}{(1 - e^{- \Delta_a^2 / 32})^3}.
\end{align*}
Since $u_a(\delta) = \lceil \frac{32 \log(1/\delta)}{\Delta_a^2} \rceil$ and $\delta = T^{-d}$ with $d \geq 2$, it holds that $(u_a(T))^2 = O((\log T)^2)$ and $T^2\delta = O(1)$.
Hence, $\E[(N_a^{\pi}(T))^2] = O((\log T)^2)$. It then follows from  
$\Var(N_a^{\pi}(T)) \leq \E[ (N_a^{\pi}(T))^2 ]$ that $\Var(N_a^{\pi}(T)) = O((\log T)^2)$.}
\end{itemize}

\end{document}